\definecolor{mygray}{gray}{.9}
\newtheorem{theorem}{Theorem}[section]
\newtheorem{lemma}[theorem]{Lemma}
\newtheorem{assumption}[theorem]{Assumption}
\newtheorem{fact}[theorem]{Fact}
\newcommand{\sizevocab}{c}
\newcommand{\lmparam}{\theta}
\newcommand{\Pds}{P_{\textup{ds}}}
\newcommand{\Exp}{\mathrm{\mathbb{E}}}
\newcommand{\Real}{\mathbb{R}}
\newcommand{\norm}[1]{\left\lVert#1\right\rVert}
\newcommand{\prb}[1]{\Pr\left(#1\right)}
\def\shownotes{1}  
\newcommand{\authnote}[2]{[#1: #2]}
\newcommand{\authnote}[2]{}
\title{Same Pre-training Loss, Better Downstream:\\ Implicit Bias Matters for Language Models}
\author{Hong Liu$\ \ \ \ \ \ $Sang Michael Xie$\ \ \ \ \ \ $Zhiyuan Li$\ \ \ \ \ \ $Tengyu Ma \\
\\
	Stanford University \\
	\texttt{\{hliu99, sxie, zhiyuanli, tengyuma\}@stanford.edu}
}
\begin{document}

\maketitle

\begin{abstract}
    Language modeling on large-scale datasets leads to impressive performance gains on various downstream language tasks.  The \textit{validation} pre-training loss (or perplexity in autoregressive language modeling) is often used as the evaluation metric when developing language models since the pre-training loss tends to be well-correlated with downstream performance (which is itself difficult to evaluate comprehensively). Contrary to this conventional wisdom, this paper shows that 1) pre-training loss cannot fully explain downstream performance and 2) flatness of the model is well-correlated with downstream performance where pre-training loss is not. On simplified datasets, we identify three ways to produce models with the same (statistically optimal) pre-training loss but different downstream performance: continue pre-training after convergence, increasing the model size, and changing the training algorithm.  These experiments demonstrate the existence of implicit bias of pre-training algorithms/optimizers---among models with the same minimal pre-training loss, they implicitly prefer more transferable ones. Toward understanding this implicit bias, we prove that SGD with standard mini-batch noise implicitly prefers flatter minima in language models, and empirically observe a strong correlation between flatness and downstream performance among models with the same minimal pre-training loss. We also prove in a synthetic language setting that among the models with the minimal pre-training loss, the flattest model transfers to downstream tasks.
\end{abstract}

\section{Introduction}
Large language models (LLMs) pre-trained on internet-scale data have improved performance on a wide array of downstream tasks~\citep{devlin2018bert,yang2019xlnet,radford2019language,raffel2020exploring,brown2020language}. These models are trained with a language modeling pre-training loss to ``fill in the blanks"---either predicting the next token/word (autoregressive language modeling loss, or perplexity) or masked tokens (masked language modeling (MLM) loss). 

In common practice, the \textit{validation} pre-training loss is used to monitor the training process~\citep{brown2020language,zhang2022opt} and compare different models since the pre-training loss is generally strongly correlated with downstream performance~\citep{hernandez2021scaling}.
Moreover, theoretical works on understanding LLMs also focus on how the pre-training loss affects downstream performance. \citet{saunshi2020mathematical,wei2021pretrained,xie2021explanation} show that good pre-training loss, or fitting the language modeling conditional probability well, is a main reason for downstream success of LLMs. Their analyses generally treat the language models as blackboxes and do not take into account \textit{how} the models represents the conditional probability. 

In this paper, we question the conventional wisdom on the correlation between the validation pre-training loss and downstream performance for language modeling.
Recent works have demonstrated that models with different architectures may have the same pre-training loss but different performance~\citep{saunshi2022understanding,tay2021scale}.
Due to the expressivity of modern neural nets, many parameter configurations even within the \textit{same} architecture can still have the same pre-training loss.
A priori, it is unclear why all these configurations should have the same downstream performance.

We find that different parameter configurations with the same pre-training loss can indeed have different downstream performance, especially when the pre-training loss reaches a near-optimal level. Concretely, using simplified text datasets, we find three situations that demonstrate such a phenomenon: 
\begin{itemize}[leftmargin=*]
  \item Even after the pre-training loss converges, models at a later time step still tend to perform better. 

  \item Models trained by standard algorithms have better performance than adversarially trained models with the same pre-training loss. 
  \item 
  Larger models perform better downstream than smaller models even if they have the same pre-training loss.
  \vspace{-3pt}
\end{itemize}

These situations are most prominent in the \textit{saturation regime}, where the models are close to the minimal possible pre-training loss (aka the entropy of the conditional probability, which can be estimated in our simplified datasets).
In the saturation regime, the pre-training loss of all models are almost the same, but the transferability to downstream tasks varies. 
Interestingly, this phenomenon also holds when linear probing on contextualized presentations is used for evaluating downstream performance instead of finetuning.
Thus, even though the predicted conditional probabilities of two models are the same (and correct), the contextualized representations can behave differently. 

In each of the first two cases above, we find two models with the same pre-training loss and the same architecture; but one has a better performance than the other. They only differ by the training algorithms that are used to produce them. Therefore, this suggests the training algorithms have an \textit{implicit bias} toward one of these models---standard algorithms with more training steps biases towards parameter configurations that transfer better to downstream tasks. The third case has a more subtle but similar interpretation. There exists a hypothetical large model that represents the smaller model with worse downstream performance (by filling zeros in the weights or replicating the weights of the smaller model). 
The training algorithm on the large architecture could have chosen it, but did not. This suggests the algorithm has an implicit bias against the hypothetical model (which has an equally good loss). 

In supervised settings, optimizers are known to have an implicit bias toward selecting generalizable models among all models with small \textit{empirical} loss. E.g., see \citet{damian2021label,li2021happens}, which show that SGD implicitly biases toward flatter minima, and references therein. However, the role of implicit bias in self-supervised learning has not been studied and is conceptually different. Unlike in supervised learning, the gap between empirical and population self-supervised losses is typically small, and thus implicit bias does \textit{not} seem to contribute to bridging this gap. Instead, the implicit bias selects local minima of the \textit{population} self-supervised loss that transfer better to downstream tasks.

Why do the algorithms bias toward some type of models? In Section~\ref{sec3}, we provide a first-cut theoretical analysis of the implicit bias in language modeling. Fortunately, despite the conceptual differences, mathematical tools from supervised settings can be straightforwardly adapted to language modeling settings.
We prove that mini-batch SGD prefers flatter minima of population pre-training loss among all minima in the saturation regime. Interestingly, we obtain cleaner theoretical results for the standard mini-batch SGD, without the artificial label noise introduced in prior works~\citep{damian2021label,li2021happens}, partly because the mini-batch noise for LLMs does not vanish even at convergence.

We corroborate our theory with empirical evidence in Section~\ref{sec4}. We show that for models with the same pre-training loss in the three situations above, flatness of the model (measured by the trace of Hessian of the loss, as predicted by the theory) strongly correlates with the downstream performance.

Finally, to complement the theory and experiments above, we also rigorously formalize the connection between flatness and downstream performance in a simplified Dyck language setting in Section~\ref{sec5}. In this setting, we prove that there are many models with good MLM pre-training loss; among them, the flattest model learns the most useful features for downstream tasks.
Here, results from the supervised setting cannot be readily adapted since they are obtained (partially) via generalization bounds~\citep{wei2019data,wei2019improved}, which do not apply to the language modeling setting where the implicit bias is not related to the gap between the empirical and population loss. Proving the correlation between flatness and downstream performance in more general settings likely requires highly non-trivial and novel theoretical tools, and we hope to motivate future work on this topic.

\begin{figure}[t]
  \centering
   \subfigure[PCFG$\rightarrow$Task C]{
    \includegraphics[width=0.46\textwidth]{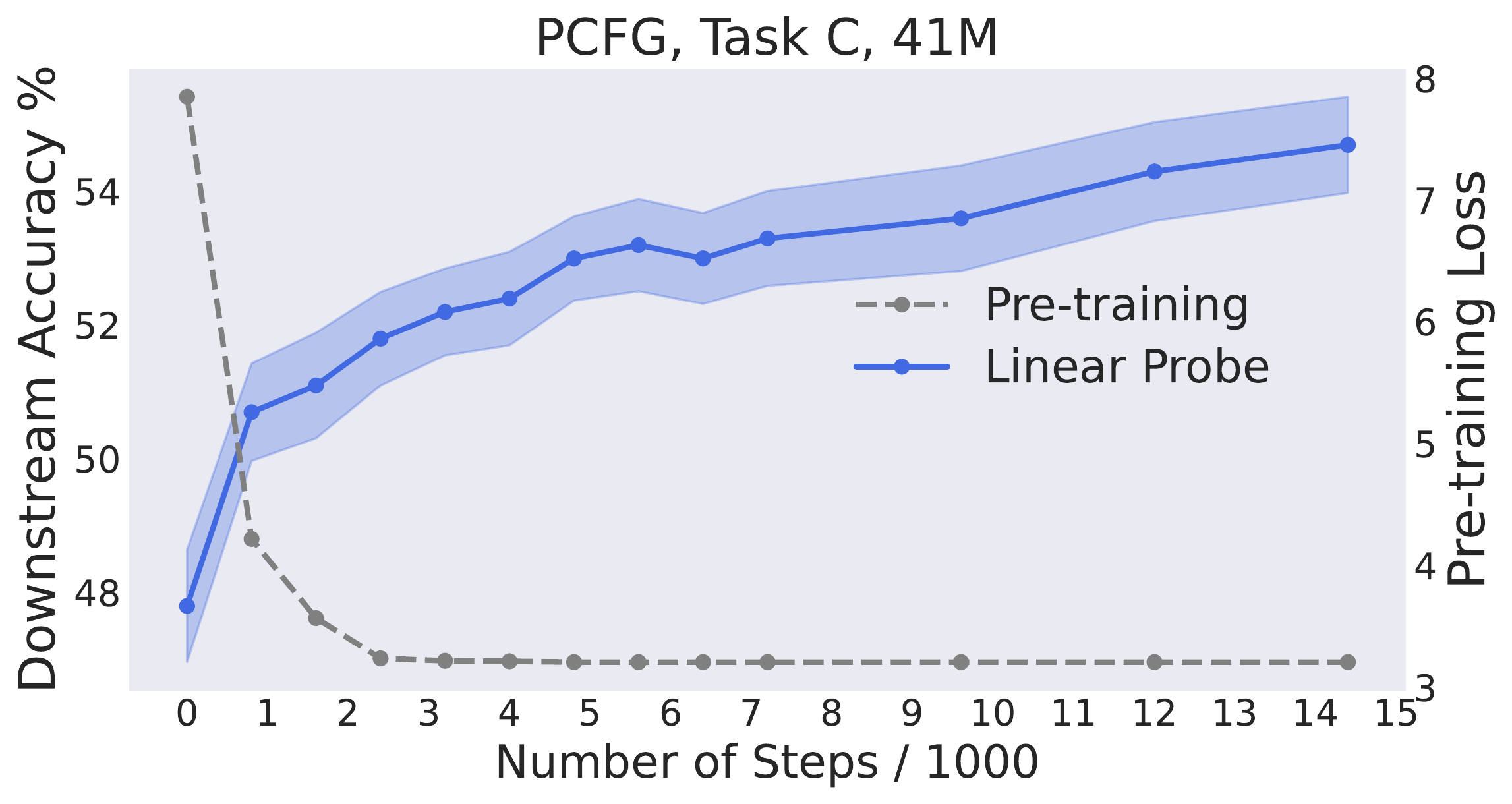} 
    }
     \hspace{3pt}
    \subfigure[OPT$\rightarrow$QNLI]{
    \includegraphics[width=0.46\textwidth]{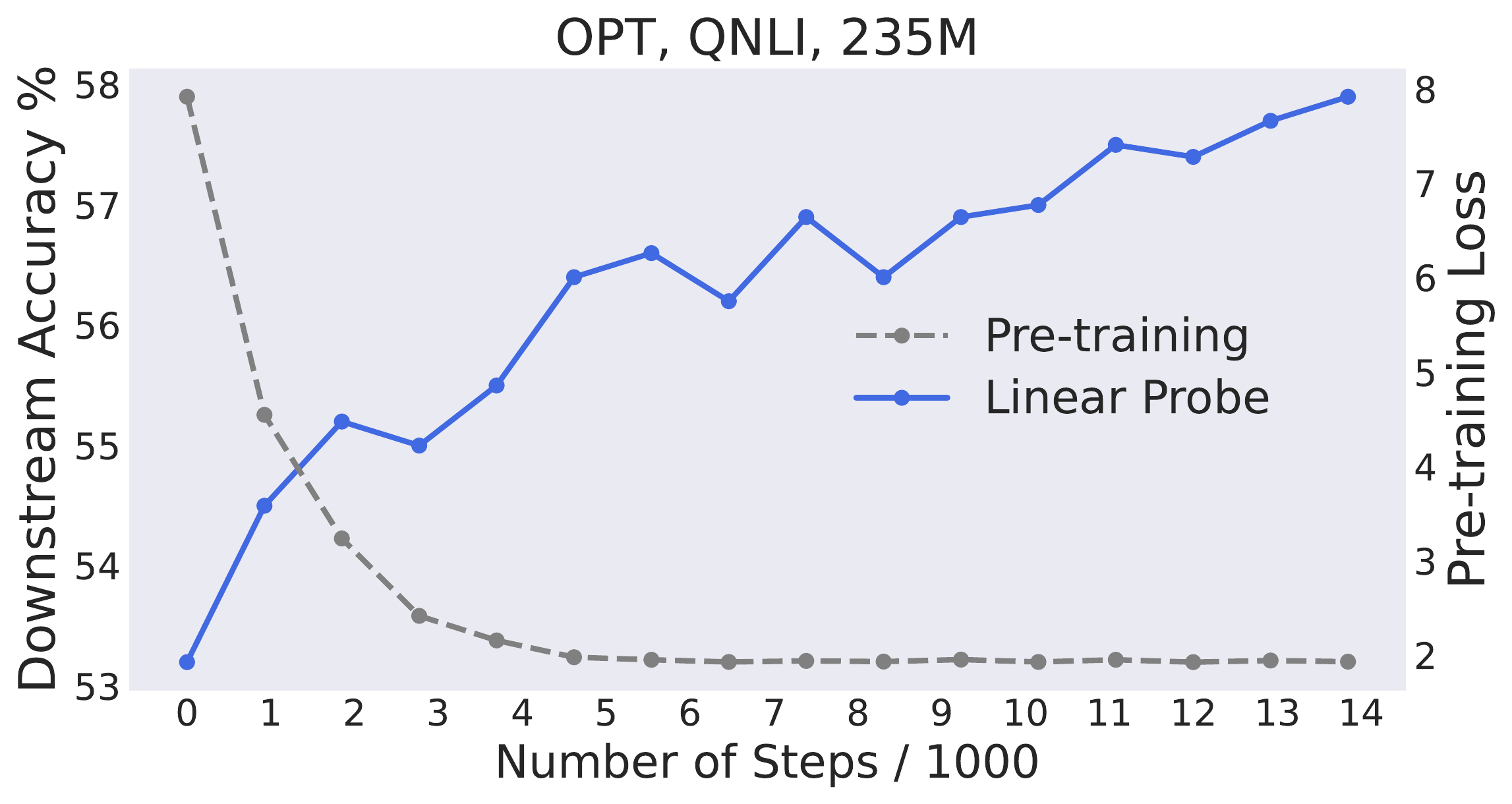}
    }
  \caption{Models at a later time step performs better, even after the pre-training loss converges. (a) A model with 41M parameters pre-trained on the PCFG-generated dataset, and evaluated on task C. (b) A model with 235M parameters pre-trained on the OPT-generated dataset, and evaluated on QNLI. The error bar in (a) shows the standard deviation of 5 random seeds in linear probe. We do not provide error bars in (b) due to the limitation of computation resources. Also note that the pre-training loss is approaching its minimal value (3.196 for the PCFG-generated dataset and 1.865 for the OPT-generated dataset) as we increase the number of steps. In Section~\ref{sec:kl}, we further provide evaluation of the pre-training loss with the KL divergence. 
  }  
 
  \label{fig1}
\end{figure}

\section{Related Work}
\paragraph{Language modeling and downstream adaptation.} Large language modeling has revolutionized the NLP field. Starting from~\citet{devlin2018bert}, a line of works improve the downstream performance on a wide range of tasks with increasing model size and data amount~\citep{yang2019xlnet,radford2019language,raffel2020exploring,brown2020language}. LLMs even exhibit unexpected emergent behaviors, such as in-context learning~\citep{xie2021explanation,min2022rethinking}, step-by-step reasoning~\citep{wei2022chain}, and zero-shot learning~\citep{brown2020language}. \citet{kaplan2020scaling,hernandez2021scaling} study the behavior of language models with increasing size, and find out that the pre-training loss is typically correlated with downstream performance as model size increases.
In practice, the pre-training loss is used as an evaluation metric for language models. A notable example is the efficient transformer line of works, which benchmark the pre-training loss given the same computation constraint~\citep{dai2020funnel,wang2020linformer,choromanski2020rethinking,tay2021synthesizer,liu2021pay}.

\paragraph{Understanding large language models.} 
Empirical works on understanding MLM find out that the representations of language models encode rich semantic and syntactic information~\citep{peters2018dissecting,htut2019attention,hewitt2019structural,mamou2020emergence}. Theoretical works show that good LM loss, or the ability to fit the LM conditional probability, is a sufficient condition for good performance on downstream tasks. \citet{zhang2021inductive} show MLM representations recover latent variables in graphical models. \citet{saunshi2020mathematical} introduce the natural assumption, which states that downstream tasks can be solved linearly with the true conditional probability. \citet{wei2021pretrained} instantiate MLM on datasets generated by HMMs and show linear probe on top of MLM models solves downstream tasks. In contrast, our empirical evidence indicates that other factors related to the architecture and optimization also contribute to the performance beyond the natural assumption---somewhat surprisingly, we show that linear probe on top of the features of language models is \textit{better} than linear probe on top of true conditional probability. 

Recent works also provide empirical evidence that good pre-training loss cannot fully explain the downstream success. \citet{tay2021scale} find out that a narrow but deep transformer is better than a wide but shallow transformer with the same pre-training loss. \citet{zhang2022unveiling} demonstrate that Albert~\citep{lan2019albert} generalizes better to OOD tasks than Bert on a synthetic reasoning task.
These works indicate that the architecture is an important factor for good downstream performance beyond pre-training loss. 
This paper discovers the implicit bias in language modeling on models with the same architecture and the same pre-training loss. Similar to our findings, \citet{xie2021explanation} also observe that despite similar perplexity, larger models are better than smaller modes for in-context learning, while in this paper, we focus on the standard fine-tuning and linear probe evaluation of language models, and provide a novel understanding of the mechanism behind the superiority of large models over small models.

\paragraph{Understanding self-supervised learning.} Our work is also related to the broader theoretical self-supervised learning literature. This line of works study why a seemingly unrelated self-supervised objective helps improve the performance on downstream tasks. \citet{arora2019theoretical} prove that contrastive learning representations work on downstream linear classification tasks. \citet{lee2020learning} study reconstruction-based self-supervised learning algorithms and show that linear probe on top of the self-supervised representations solves downstream tasks. \citet{haochen2021provable} show that the contrastive learning loss can be viewed as a principled spectral clustering objective. With the spectral contrastive loss, self-supervised representations recover the cluster structure in the augmentation graph. Recently, \citet{saunshi2022understanding} introduce the disjoint augmentation regime, where the minimizer of the contrastive learning loss can perform poorly on downstream tasks. Empirically, they find out that subtracting the mean of the representations of each class makes self-supervised models perform worse on downstream tasks, and ResNet~\citep{he2016deep} can have better downstream performance on downstream tasks than ViT~\citep{dosovitskiy2020image} and MLP-Mixer~\citep{tolstikhin2021mlp} on modified images. This indicates that pre-training loss is not all that matters for good downstream performance in self-supervised learning.

\paragraph{Implicit bias in supervised learning.} The training algorithm chooses solutions with certain properties, and usually leads to better generalization~\citep{gunasekar2018implicit,soudry2018implicit,li2017algorithmic,ji2018gradient,arora2019implicit,lyu2019gradient,li2020towards,woodworth2020kernel,haochen2020shape}. Recently, \citet{blanc2019implicit,damian2021label,li2021happens} demonstrate label noise SGD biases the models toward flatter minima. However, the setting of implicit bias in supervised learning is different from language modeling. In language modeling, we have access to gigantic corpus, and cannot interpolate the pre-training dataset. Moreover, we care about the adaptability of the solution on downstream tasks instead of generalization in distribution.
\section{The Existence of Implicit Bias in Language Modeling}\label{sec2}

In this section, we systematically investigate the relationship between pre-training loss and downstream performance with experiments. We find out that models with the same pre-training loss but different training procedures can have different downstream performance. 

\subsection{Formulations}

\paragraph{Masked language modeling.} Consider a vocabulary $W=\{0,1,\ldots,\sizevocab\}$, where $0$ is a special token for the mask. Let $x=[x_1,\ldots,x_T]$ denote the input sequence of length $T$, and $x_{-t}=[x_1,\ldots,x_{t-1},0,x_{t+1},\ldots,x_T]$ denote the masked sentence, where $t$ is sampled uniformly randomly and independently from $[T]$.\footnote{For simplicity, we only consider masking out one one token in each sentence.} The MLM conditional probability refers to the probability of $x_t$ given the rest of the sequence $\prb{x_t\mid x_{-t}}$. We use $\prb{\cdot \mid x_{-t}}$ to denote the $c$-dimensional probability vector $\prb{\cdot \mid x_{-t}} := [\prb{x_t=1\mid x_{-t}},\ldots,\prb{x_t=\sizevocab \mid x_{-t}}]\in \Real^c$. In MLM pre-training, the model $f_{\lmparam}(\cdot)$ (parameterized by $\lmparam$) outputs the predicted MLM conditional probability vector $f_{\lmparam}(x_{-t}) \in \Real^\sizevocab$. The model is trained to predict the masked token $x_t$ given the rest of the sentence $x_{-t}$ with cross entropy loss, $L(\lmparam) = \Exp_{x,t}[\ell(f_\lmparam(x_{-t}),x_t)]=\Exp_{x,t}[-\log([f_\lmparam(x_{-t})]_{x_t})]$.

\paragraph{Downstream evaluation.} The language model$f_{\lmparam}$ is composed of a feature extractor $h_{\psi}$, which outputs a sequence of contextual representations,  and a linear classifier that outputs the conditional probability at every position. On downstream tasks, we use a randomly initialized $g_{\phi}$ on top of the pre-trained $h_{\psi}$. In fine-tuning, both $g_\phi$ and $h_{\psi}$ are trained, while in linear probe, only $g_\phi$ is updated. For fine-tuning, we use the contextual representations of the \texttt{cls} token. For linear probe, we concatenate the contextual representations of all the tokens together.

\paragraph{Saturation regime.} To study models with the same pre-training loss, we introduce the saturation regime in this paper, where the model output equals the true conditional probability, $f_{\lmparam}(x_{-t})=\prb{\cdot \mid x_{-t}}$. In the saturation regime, the MLM loss is equal to the entropy of the true conditional probability $L(\lmparam) = \Exp_{x,t}[-\log(\prb{x_t \mid x_{-t}})]=\frac{1}{T}\sum_{t=1}^T H(x_t \mid x_{-t})$, which is also the optimal pre-training loss. Thus, \textit{all} models in the saturation regime have the same, optimal pre-training loss, and we will show that they behave differently on downstream tasks. Our experiments use expressive enough architectures such that there are multiple parameter configurations in the saturation regime for our simplified datasets. For real large-scale data, it is currently computationally challenging to arrive at the saturation regime. However, we hope that our experiments can provide insights for even larger models in the future and for other regimes where pre-training loss does not explain downstream performance. 

\subsection{Experimental Setup}

We design controlled experiments to study the correlation between pre-training loss and downstream performance. In particular, we will find a set of models with almost the same pre-training loss. We effectively use the same architecture family so that the main difference between the models only stems from training algorithms. More details are provided in Section~\ref{sec:appendix2}.

\paragraph{Datasets.} We introduce three generative models to produce simplified datasets, with which we can study various factors systematically. With the knowledge of the true generative models that generate the data, we can compute the true conditional probability and scale up the models until they approach the saturation regime to ensure they have almost the same pre-training loss. Moreover, we can generate unlimited amount of text for pre-training to avoid overfitting to the \textit{empirical} pre-training loss. 

\begin{enumerate}[label=\arabic*),leftmargin=*]
    \item \textit{PCFG}-generated dataset. PCFG~\citep{chomsky1956three} generates sentences with probabilistic trees and is widely used to understand natural language~\citep{johnson1998pcfg,roark2003supervised,kim2019compound,yang2021neural}.  
    We randomly generate the production rules which satisfy the Chomsky Normal Form~\citep{chomsky1956three}. The non-terminal symbols in the parse tree can be viewed as intrinsic quantities associated with the sentence such as sentiment and syntax. Thus we  design three downstream tasks A, B, and C to classify non-terminal symbols at different positions of the parse trees. 
    \item \textit{HMM-generated dataset.} HMM samples the hidden variables from the transition probabilities and the tokens from the emission probabilities.  \citep{wei2021pretrained,xie2021explanation} also analyze the properties of pre-trained language models with HMMs. We generate the transition and emission probabilities as random block-diagonal stochastic matrices.
     The downstream task is to classify the hidden variable in the sentence. We use task-$k$ to refer to classifying the $k$-th hidden variable.
    \item \textit{OPT-generated dataset.}
     We also introduce a more realistic pre-training dataset generated by the OPT models~\citep{zhang2022opt}. Starting from the \texttt{bos} token, we sample each token from the conditional probability output by the OPT model. For computational feasibility we only allow to generate the top-2000 most frequent tokens in the OPT vocabulary. We use QNLI and SST-2 from GLUE~\citep{wang2018glue} as downstream tasks.
\end{enumerate}

\begin{figure}[t]
  \centering
   \subfigure[PCFG$\rightarrow$Task B]{
    \includegraphics[width=0.315\textwidth]{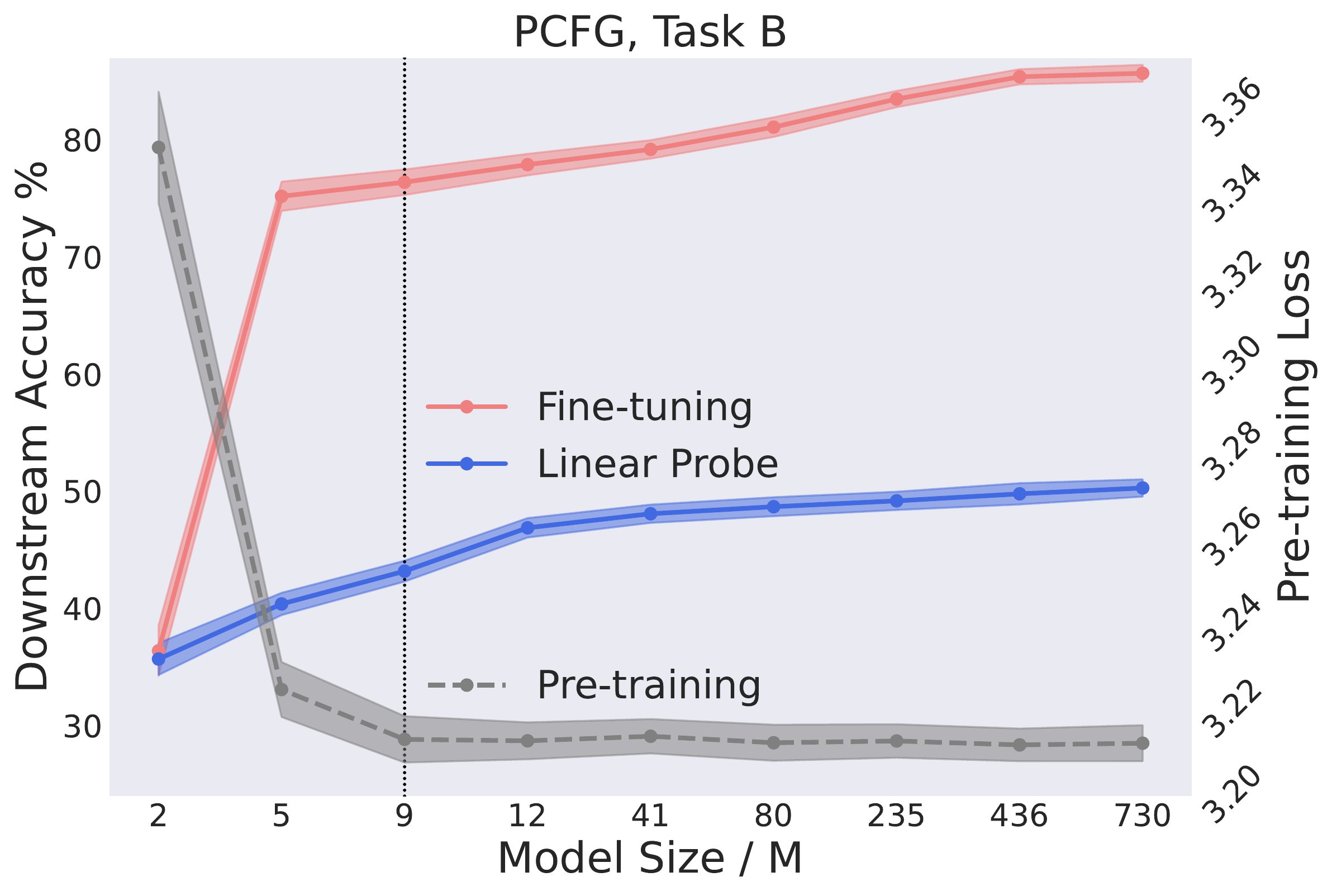} 
    }
    \hfill
   \subfigure[HMM$\rightarrow$Task-10]{
    \includegraphics[width=0.315\textwidth]{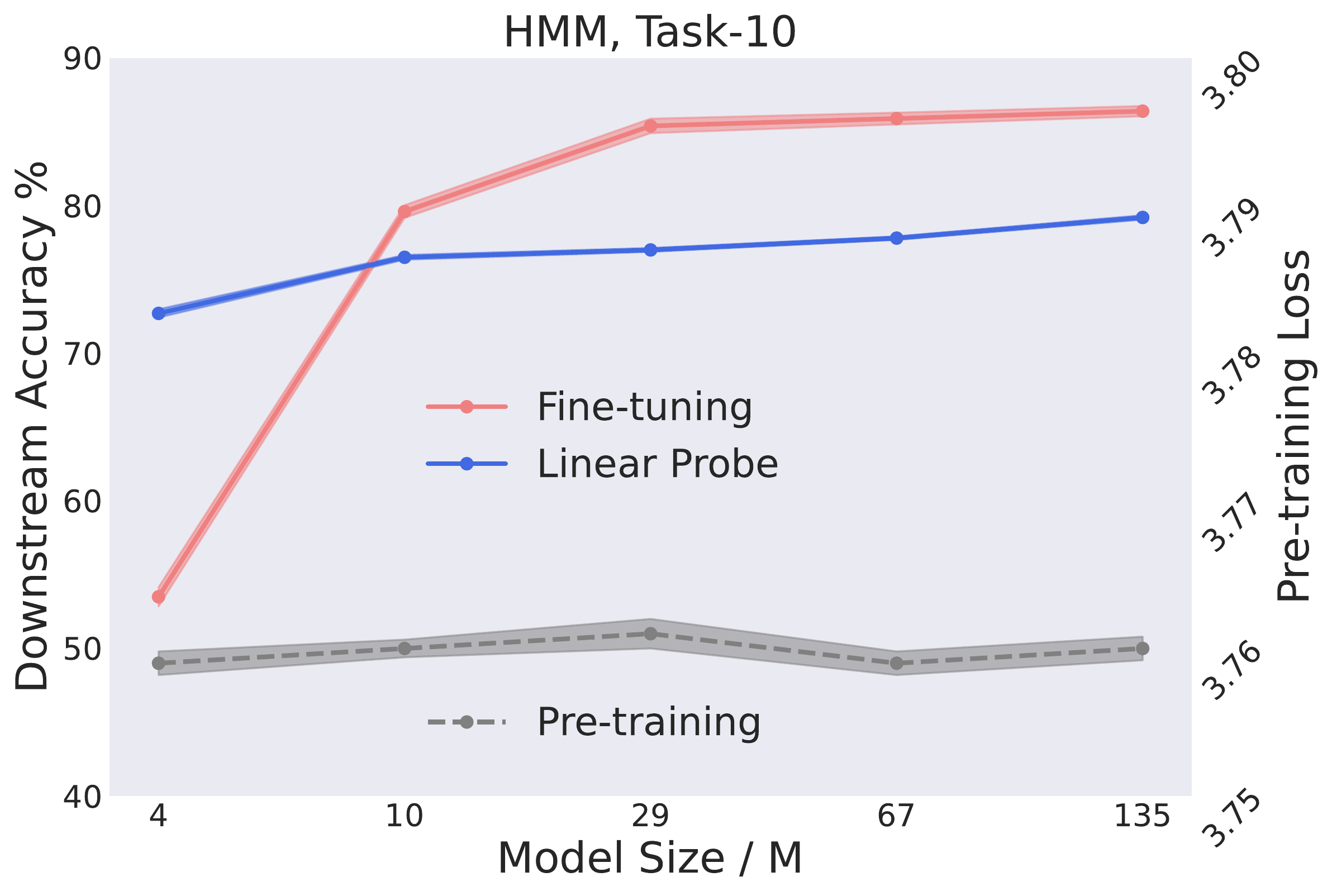}
    }
     \hfill
    \subfigure[OPT$\rightarrow$QNLI]{
    \includegraphics[width=0.315\textwidth]{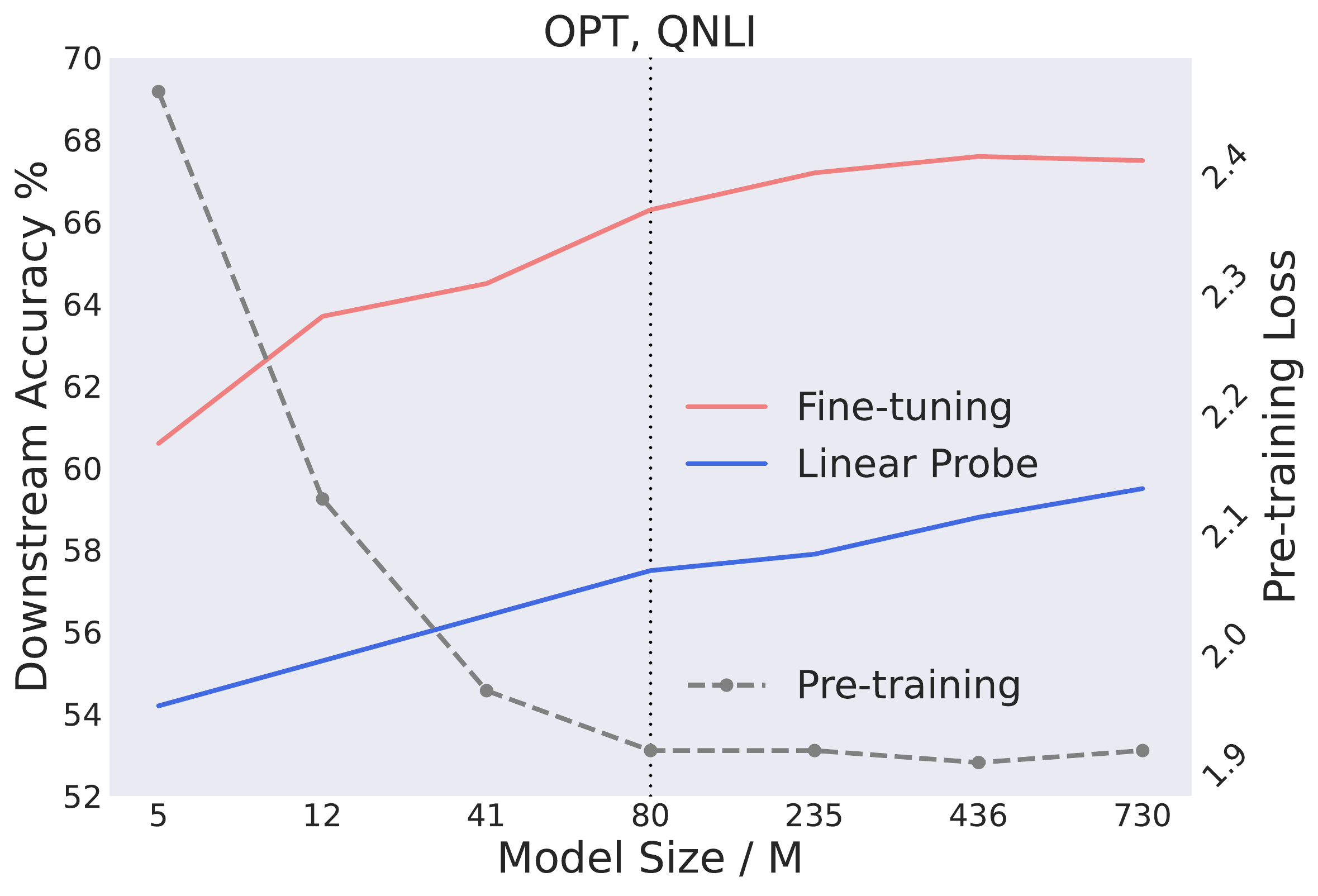}
    }
  \caption{Larger models perform better downstream than smaller models, even with almost the same pre-training loss. (a) Pre-train on the PCFG-generated dataset and evaluate on task B. (b) Pre-train on the HMM-generated dataset and evaluate on task-10. (c) Pre-train on the OPT-generated dataset and evaluate on QNLI. See Section~\ref{sec2} and Section~\ref{sec:appendix2} for details.
  } 
  \label{fig2}
\end{figure}

Note that the true conditional probability can be computed efficiently for the three datasets given the knowledge of the generated models. For PCFG and HMM-generated datasets, we can compute the true conditional probability with the inside algorithm~\citep{lari1990estimation} and the Viterbi algorithm~\citep{forney1973viterbi}, respectively. For the OPT-generated dataset, we can calculate the MLM conditional probability from the joint probability, and the joint probability can be decomposed into the autoregressive conditional probability of the OPT model. With the true conditional probability, we can evaluate the pre-training loss more accurately. The cross entropy loss can be decomposed into the KL divergence between the predicted and true conditional probability and the entropy of the true conditional probability. In addition to the cross entropy loss, we also report the log of the KL divergence in Section~\ref{sec:kl}. 

For the PCFG- and HMM-generated datasets, the pre-training and downstream distributions are the same. We mainly focus on this in-distribution evaluation to better control various factors and avoid complications. We will leave more systematic out-of-distribution evaluation for future work. We mainly focus on the in-domain evaluation because we can already see the interesting phenomenon that models can have the same pre-training loss but different downstream performance, and this indicates the implicit bias is not specific to the shift in the data distribution.

\paragraph{Models and algorithms.} For PCFG- and OPT-generated datasets, we use transformers~\citep{vaswani2017attention} following the implementation of BERT~\citep{devlin2018bert}. We use different model sizes ranging from 2M to 730M. For the HMM generated dataset, we use LSTM~\citep{hochreiter1997long} from 10M to 135M. In pre-training, all the models are pre-trained with AdamW following the protocol of \citet{izsak2021train}. We use batch size 4096, and train each model for more than 30K steps until the pre-training loss converges. Besides varying the hyper-parameters such as training steps and model size, we also consider two other approaches that produce models with the same pre-training loss and evaluate their downstream performance. First, inspired by~\citet{liu2020meta,raghu2021meta}, we pre-train models with an additional meta-learning objective that intends to mess up the downstream performance while keeping the same pre-training loss. The second approach is to virtually implement a model that produces the exactly correct conditional probabilities as the representations of the sentences, and we call such a model a \textit{look-up table}. Note that such a model has a perfect pre-training loss if the output embeddings matrix is simply the  identity matrix, and can be implemented by a sufficiently large transformers because transformer are universal approximators~\citep{yun2019transformers,wei2021statistically}. Because we have access to the true conditional probability, we can evaluate the linear probe performance of the ``look-up table'' model by training linear head on top of the concatenated conditional probabilities, without explicitly constructing parameters of the transformer that implements the look-up table.

\subsection{Results}

We compare the downstream performance of models with the same pre-training loss in the following situations: (2) training for different number of steps after the pre-training loss converges, (2) using different model sizes, and (3) training with normal training algorithms vs. adversarial training algorithms.  

In Figure~\ref{fig1}, we plot the validation pre-training loss and the downstream performance of different models checkpoints along pre-training. After the pre-training loss converges, although the pre-training loss does not improve, the downstream accuracy continues increasing. 

Even with the same pre-training loss, larger models are better than smaller models. In Figure~\ref{fig2}, we plot the pre-training loss and the downstream performance of models with different sizes. As we increase the model size, the pre-training loss approaches the entropy of the true conditional probability, which is 3.196, 3.758, and 1.865 for PCFG, HMM, and OPT respectively. For PCFG and OPT-generated datasets, we use the vertical dashed line to indicate the place where the pre-training loss saturates as we scale up the model. For the much simpler HMM, the smallest 4M model can fit pre-training close to the entropy of the true conditional probability. With the same pre-training loss, scaling up the models improves linear probe performance by $6.9\%$, $4.5\%$, and $2.0\%$, on PCFG, HMM, and OPT generated data, respectively. See Section~\ref{sec:appendix2} for results on other downstream tasks.

\begin{wraptable}{r}{7.5cm}
\addtolength{\tabcolsep}{-2.8pt} 
	\centering
	\vspace{-20pt}
	\caption{\small{Different pre-training algorithms on PCFG.}}
	\label{table1}
	\begin{small}
	\begin{tabular}{l|c|c|c}
	\toprule
	Method & Pre-training & Task A $\%$ & Task B $\%$\\
	\midrule
	AdamW & 3.204 & 89.9 $\pm$0.3 & 49.2$\pm$0.8\\
	Adversarial & 3.206 & 83.1 $\pm$0.6 & 42.3$\pm$1.5\\
	Lookup table & 3.196 & 71.2 & 39.7 \\
	\bottomrule
	\end{tabular}
	\end{small}
	\vspace{-10pt}
\end{wraptable}
Naturally trained transformers are better than adversarially trained ones. In Table~\ref{table1}, we evaluate the 235M transformers on PCFG tasks A and B with different pre-training algorithms. Although the adversarially trained transformer has almost the same pre-training loss as the normally trained 235M transformer, it is more than 6$\%$ worse than the normal 235M model, and even worse than a normal 9M model on the downstream task B. This indicates that we can still mess up the downstream performance without changing the pre-training loss and the architecture. The lookup table has perfect pre-training loss, but it performs worse than all normally trained transformers in Figure~\ref{fig2}(a) on task B. In other words, the transformer model that implements the lookup table does not perform as well as the naturally trained transformer. 

Moreover, comparing the lookup table row and the first row of Table~\ref{table1} suggests that linear probe on top of the representations of standard transformers is better than linear probe on top of the true conditional probability. This indicates that learning the conditional probability well cannot fully explain the success of language models and that the natural assumption in~\citet{saunshi2020mathematical} cannot not fully explain the performance of language models. Also note that our experiment is different from the label-orthogonal training in~\citet{saunshi2022understanding}. They find models with the same pre-training loss and different downstream performance by subtracting the mean of the representations in each class, which essentially changes the model architecture, while our experiments compare models within the same architecture.

The experiments above indicate that for models with the same architecture family and the same pre-training loss, the choice of training algorithms, model sizes, and the number of steps that the optimizer works can affect the downstream performance. This indicates the existence of \textit{implicit bias} of the training algorithms toward choosing more transferable parameter configurations among those with the same pre-training loss and architecture.

\section{Implicit Bias Leads to Flat Solutions in Language Modeling}\label{sec3}

As discussed in the introduction, the difference between the role of implicit bias in supervised learning and language modeling is conceptual, because the gap between empirical and population self-supervised loss is small and thus implicit bias is not needed for bridging this gap. Instead, the implicit bias benefits the performance on downstream tasks by picking networks that are more adaptable to those tasks.  

Fortunately, the mathematical tools developed for supervised learning can be adapted to language modeling, which even allows cleaner results by removing some artificial assumptions like adding label noise. Concretely, we show that mini-batch SGD can find models in the flatter areas of pre-training loss landscape. The flatness is measured by the trace of Hessian of the pre-training loss $\mathrm{Tr}[\nabla^2 L(\lmparam)]$. See Figure~\ref{fig:implicitbias} (Left) for an illustration of the implicit bias.

\newcommand{\tk}[1][k]{t^{(#1)}}
\newcommand{\xk}[1][k]{x^{(#1)}}
\newcommand{\diff}{\mathrm{d}}

We analyze SGD on the population cross-entropy loss $L(\lmparam)= \Exp_{x,t}[-\log([f_\lmparam(x_{-t})]_{x_t})]$ with freshly sampled data at every iteration, because, as argued, the difference between empirical and population pre-training loss is not our focus. For simplicity, we present the results for batch size $= 1$ , though they can be generalized to arbitrary batch size (see discussion below Theorem~\ref{thm:mlm_sgd_implicit_bias}). 
Let $\eta$ be the learning rate and let $\lmparam^\eta_k$ denote the parameter at step $k$.
We drop the superscript $\eta$ when there is no ambiguity. 
We will show that the implicit bias kicks in when SGD reaches a global minimizer---it drives the iterate towards flatter global minimizers. 
For simplicity of demonstration, we analyze the process starting from a global minimizer $\overline \lmparam$, \emph{i.e.}, we assume that $\lmparam^\eta_0=\overline \lmparam$ (for all $\eta$). At each iteration $k$, we get a fresh sample $(x,t)$, where $x$ is a sentence and $t$ is the position of the masked token,  and update the parameter $\lmparam$ by $\lmparam_{k+1} = \lmparam_k - \eta \nabla_\lmparam \ell(f_{\lmparam_k}(x_{-t}), x_{t})$. 
We assume the network is sufficiently expressive such that there are many fundamentally different global minimizers of the pre-training loss $L$. As a (non-trivial) regularity condition, 
following prior works \citep{fehrman2020convergence,li2021happens,arora2022understanding}, we also assume that the minimizers of the loss function $L$  are connected and form a smooth manifold.

\begin{assumption}\label{assump:manifold}
Assume that the loss $L$ is a $\mathcal{C}^{3}$-smooth function, and that the set of global minimizers, $\Gamma$, is a $(d-M)$-dimensional $\mathcal{C}^{2}$-submanifold of $\mathbb{R}^{d}$ for some integer $1 \leq M \leq d$, where for all $\lmparam \in \Gamma, \operatorname{rank}\left(\nabla^{2} L(\lmparam)\right)=M$.
\end{assumption}

A key observation for language model is that even if the  model reaches the saturation regime, that is, the model reaches a point on the manifold $\Gamma$ of the minimizers, 
the optimization process still has non-vanishing gradient noise, because the cross-entropy loss is typically \textit{non-zero} at the global minimizers and thus the stochastic gradient variance is also non-zero.\footnote{This is in contrast with typical supervised setting where the empirical 0-1 loss and cross-entropy loss can both achieve zero and consequently the mini-batch noise vanishes. Such a difference enables us to prove cleaner results (without the label noise) than in the supervised setting~\citep{damian2021label,li2021happens}. }
Therefore, the dynamics of SGD do not completely stop; instead, the iterate oscillates around the manifold $\Gamma$. It turns out that this oscillation in turn encourages the parameter to move in a certain direction along the manifold, determined by the covariance structure of the stochastic gradient. The following lemma shows that the covariance  of stochastic gradient for  language models in the saturating regime has a favorable property, \emph{i.e.}, it is equal to the Hessian of pre-training loss.

\begin{lemma}[Bartlett identity]\label{lem:covariance_equal_to_hessian_trace}
For any $\lmparam\in\Gamma$, $\Sigma(\lmparam) =\nabla^2 L(\lmparam)$, where $\Sigma(\lmparam)$ is the covariance of the stochastic gradient at $\lmparam$, that is, $  \Sigma(\lmparam) = \Exp_{t,x}\left[\nabla_{\lmparam}  \log [f_\lmparam(x_{-t})]_{x_t} (\nabla_{\lmparam}  \log [f_\lmparam(x_{-t})]_{x_t})^\top\right] - \nabla L(\theta)^\top \nabla L(\theta)$.
\end{lemma}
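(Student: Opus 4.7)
The statement is a form of Bartlett's second identity (information matrix equality), specialized to the masked language modeling loss at a point where the model matches the true conditional distribution. I will carry it out in three conceptual steps.

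\textbf{Step 1: Saturation at $\theta\in\Gamma$.} First I will argue that any global minimizer automatically lies in the saturation regime, i.e.\ $f_\theta(x_{-t})_w = \Pr(w\mid x_{-t})$ for almost every $(x_{-t},w)$. This follows from the standard KL decomposition of the cross-entropy loss: $L(\theta)=\mathbb{E}_{x,t}[\mathrm{KL}(\Pr(\cdot\mid x_{-t})\,\Vert\, f_\theta(x_{-t}))] + \tfrac{1}{T}\sum_t H(x_t\mid x_{-t})$, so a global minimum forces the KL term to vanish. A by-product is $\nabla L(\theta)=0$, which already eliminates the subtracted term $\nabla L(\theta)^\top\nabla L(\theta)$ in the definition of $\Sigma(\theta)$, reducing the claim to
\[
  \mathbb{E}_{t,x}\!\left[\nabla_\theta\log [f_\theta(x_{-t})]_{x_t}\,\bigl(\nabla_\theta\log [f_\theta(x_{-t})]_{x_t}\bigr)^{\!\top}\right] \;=\; \nabla^2 L(\theta).
\]

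\textbf{Step 2: Conditional Bartlett identity.} Next I will condition on $(x_{-t},t)$ and write $p_\theta(w)\coloneqq [f_\theta(x_{-t})]_w$, so that $\sum_w p_\theta(w)\equiv 1$ for all $\theta$. Differentiating this identity once gives $\sum_w p_\theta(w)\,\nabla_\theta\log p_\theta(w)=0$; differentiating a second time and rearranging yields the classical pointwise identity
\[
  \sum_w p_\theta(w)\,\nabla_\theta\log p_\theta(w)\,\nabla_\theta\log p_\theta(w)^{\!\top}
  \;=\; -\sum_w p_\theta(w)\,\nabla_\theta^{2}\log p_\theta(w).
\]
The exchange of differentiation and the finite sum over $w\in\{1,\dots,c\}$ is trivial, and the $\mathcal{C}^{3}$-smoothness of $L$ (Assumption~\ref{assump:manifold}) propagates to $\log p_\theta$ in a neighborhood of $\Gamma$ (where probabilities are bounded away from $0$ by the saturation identity and the fact that the true conditional probabilities are strictly positive whenever the entropy is finite).

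\textbf{Step 3: Re-expressing the expectation under the data distribution.} Finally, Step~1 gives $p_\theta(w)=\Pr(w\mid x_{-t})$ at $\theta\in\Gamma$, so the expectation ``under the model'' in the conditional identity coincides with the expectation under the data-generating distribution. Taking expectation over $(x_{-t},t)$ turns the left-hand side into $\mathbb{E}[\nabla_\theta\log p_\theta(x_t)\,\nabla_\theta\log p_\theta(x_t)^{\!\top}]$ and the right-hand side into $\mathbb{E}[-\nabla_\theta^{2}\log [f_\theta(x_{-t})]_{x_t}] = \nabla^{2}L(\theta)$ (interchanging the second derivative with the outer expectation is justified by dominated convergence using $\mathcal{C}^{3}$-smoothness on a neighborhood of the compact set we work on, plus the finiteness of the vocabulary). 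Combining with Step~1 then gives $\Sigma(\theta)=\nabla^{2}L(\theta)$.

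\textbf{Expected obstacle.} There is no serious analytic difficulty: the heart of the argument is two derivatives of $\sum_w p_\theta(w)=1$, and the saturation identity $p_\theta=\Pr(\cdot\mid x_{-t})$ at $\theta\in\Gamma$. The only mildly delicate point is verifying the regularity needed to swap differentiation and the outer expectation over $(x,t)$; this is handled cleanly by Assumption~\ref{assump:manifold} together with the fact that the softmax outputs of $f_\theta$ stay in a compact subset of the open simplex on a neighborhood of $\Gamma$, so all scores and their derivatives are uniformly bounded.
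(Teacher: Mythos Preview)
Your proposal is correct and follows essentially the same route as the paper: differentiate the normalization $\sum_w p_\theta(w)=1$ twice (equivalently, use $\sum_w \nabla_\theta^2 p_\theta(w)=0$) to get the conditional Fisher identity, then invoke saturation $p_\theta(\cdot)=\Pr(\cdot\mid x_{-t})$ so that the model-expectation becomes the data-expectation, and use $\nabla L(\theta)=0$ to drop the centering term. The paper's proof is terser and simply assumes (rather than derives via KL) that $\theta\in\Gamma$ implies saturation, and it does not spell out the regularity/interchange arguments you added, but the mathematical content is identical.
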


Though we give a proof of the lemma in Appendix~\ref{sec:proofs} for completeness, 
the formula holds for the MLE loss of any well-specified probabilistic models at a global minimizer, and both the gradient covariance and the Hessian equals to the Fisher information matrix.

With Lemma~\ref{lem:covariance_equal_to_hessian_trace}, we can invoke Corollary 5.2 of~\citet{li2021happens} to derive the following theorem which says that SGD will locally decrease the trace of Hessian along the solution of ordinary differential equation~\eqref{eq:limiting_diffusion_final} defined below.
\begin{align}\label{eq:limiting_diffusion_final}
		\diff \hat \lmparam(t) = -\frac{1}{4}\nabla_\Gamma \mathrm{Tr}[\nabla^2 L(\hat \lmparam(t))] \diff t, \quad \hat \lmparam(0)=\overline \lmparam
	\end{align}
where $\nabla_\Gamma = P^\perp_\Gamma\nabla $ is the Riemannian gradient on manifold $\Gamma$, or just the ordinary gradient projected back to the tangent space of $\Gamma$ at $\lmparam$. In other words, the ODE~\eqref{eq:limiting_diffusion_final} is essentially a projected gradient descent algorithm with loss function $\mathrm{Tr}[\nabla^2 L(\lmparam)]$, the constraint set $\Gamma$, and infinitesimal learning rate. 
We show that SGD effectively minimizes the trace of the Hessian $\mathrm{Tr}[\nabla^2 L(\lmparam)]$ with the constraint set $\Gamma$ similarly to ODE in~\eqref{eq:limiting_diffusion_final}. 
 
\begin{theorem}\label{thm:mlm_sgd_implicit_bias}
	Suppose the loss function $L$ and the manifold of global minimizers $\Gamma$ satisfy 
	Assumption \ref{assump:manifold}. For any $K>0$ such that ODE~\eqref{eq:limiting_diffusion_final}  has a solution $\{\hat \lmparam(t)\}_{t=0}^K$, it holds that $\lmparam^\eta_{K/\eta^2}$ converges in distribution to $\hat\lmparam(K)$ as $\eta\to 0$. 
\end{theorem}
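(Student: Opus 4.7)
The plan is to reduce Theorem~\ref{thm:mlm_sgd_implicit_bias} to the black-box slow-dynamics result, Corollary 5.2 of~\citet{li2021happens}. That corollary takes an SGD process on a $\mathcal{C}^3$ loss whose minimizers form a smooth manifold $\Gamma$ satisfying the rank condition, and it shows that on the accelerated timescale $k = K/\eta^2$ the iterate $\lmparam^\eta_k$ converges in distribution to a Riemannian gradient flow on $\Gamma$ whose potential is an explicit functional of the Hessian $\nabla^2 L$ and the stochastic-gradient covariance $\Sigma$. Once the hypotheses are verified, our theorem follows by plugging the Bartlett identity (Lemma~\ref{lem:covariance_equal_to_hessian_trace}) into that potential.

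First, I would check the regularity prerequisites. The $\mathcal{C}^3$-smoothness of $L$ and the smooth-manifold-plus-rank structure on $\Gamma$ are exactly Assumption~\ref{assump:manifold}. The initial condition $\lmparam^\eta_0 = \overline{\lmparam} \in \Gamma$ sits on the manifold, which matches the starting condition of Corollary 5.2. I would also need uniform polynomial moment bounds on $\nabla_\lmparam \ell(f_\lmparam(x_{-t}), x_t)$ in a neighborhood of the image $\hat{\lmparam}([0,K])$; since this image is compact (the ODE has a solution on $[0,K]$), such bounds can be obtained by localization, using the fact that for cross-entropy on a softmax output the per-sample gradient norm is controlled by the magnitudes of the logit derivatives, which are bounded on compact sets of parameters.

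Second, I would identify the effective drift. The general potential appearing in Corollary 5.2 is (up to a universal factor) $\mathrm{Tr}[\Sigma(\lmparam)\, (\nabla^2 L(\lmparam))^{\dagger}\,\nabla^2 L(\lmparam)]$ or a close variant---its precise form matters only insofar as substituting $\Sigma(\lmparam) = \nabla^2 L(\lmparam)$ from Lemma~\ref{lem:covariance_equal_to_hessian_trace} collapses it to $\mathrm{Tr}[\nabla^2 L(\lmparam)]$ (since the Bartlett identity holds pointwise on $\Gamma$, and by continuity in a tubular neighborhood up to lower-order corrections that do not affect the weak limit). The factor $\tfrac{1}{4}$ in \eqref{eq:limiting_diffusion_final} is the universal time-rescaling constant from Corollary 5.2 that arises from the It\^o-to-Stratonovich correction in the analysis of the fast oscillations transverse to $\Gamma$. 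Taking the Riemannian gradient $\nabla_\Gamma = P^\perp_\Gamma \nabla$ of this effective potential gives precisely the right-hand side of~\eqref{eq:limiting_diffusion_final}, and the conclusion of Corollary 5.2 then yields the claimed distributional convergence $\lmparam^\eta_{K/\eta^2} \Rightarrow \hat{\lmparam}(K)$.

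The main obstacle I anticipate is not structural but technical: carefully verifying the moment/growth conditions required by the black-box theorem for the MLM cross-entropy, since the stochastic gradient remains noisy at the minimum (this is actually the feature that makes artificial label noise unnecessary, but it does mean one must control tails of $\nabla_\lmparam \log [f_\lmparam(x_{-t})]_{x_t}$ uniformly near $\Gamma$). A clean way to handle this is a standard stopping-time / localization argument: restrict to a compact neighborhood of $\hat{\lmparam}([0,K])$, prove tightness of $\{\lmparam^\eta_{\lfloor t/\eta^2 \rfloor}\}_{t \in [0,K]}$ there, and then show the stopping time escapes to infinity with probability tending to $1$ as $\eta \to 0$, so that the localization is asymptotically vacuous. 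For batch size $B > 1$, the only change is that $\Sigma$ is rescaled by $1/B$, which rescales time in~\eqref{eq:limiting_diffusion_final} and leaves the qualitative conclusion intact.
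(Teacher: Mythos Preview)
Your proposal is correct and follows essentially the same approach as the paper: the paper's entire argument is to invoke Corollary~5.2 of \citet{li2021happens} as a black box once Lemma~\ref{lem:covariance_equal_to_hessian_trace} supplies $\Sigma(\lmparam) = \nabla^2 L(\lmparam)$ on $\Gamma$, so that the effective potential collapses to $\mathrm{Tr}[\nabla^2 L(\lmparam)]$ with the $\tfrac{1}{4}$ factor inherited from that corollary. The paper does not spell out the moment/localization checks you discuss; your plan is strictly more detailed in that regard, but the reduction is identical.
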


Finally, we note that the above result can be extended to an arbitrary batch size $B$. The covariance of stochastic gradient at $\lmparam$ with batch size, denoted by $\Sigma_B(\lmparam)$, satisfies that $\Sigma_B(\lmparam) = \frac{1}{B}\Sigma(\lmparam)$.  Therefore $\Sigma_B(\lmparam) = \frac{1}{B} \nabla^2 L(\lmparam)$ and we can again invoke Corollary 5.2 of~\citet{li2021happens} to derive the same result as in Theorem~\ref{thm:mlm_sgd_implicit_bias} but with the coefficient $\frac{1}{4}$ in equation~\eqref{eq:limiting_diffusion_final} replaced by $\frac{1}{4B}$.
\begin{figure}[t]
  \centering
   \subfigure{
    \includegraphics[width=0.46\textwidth]{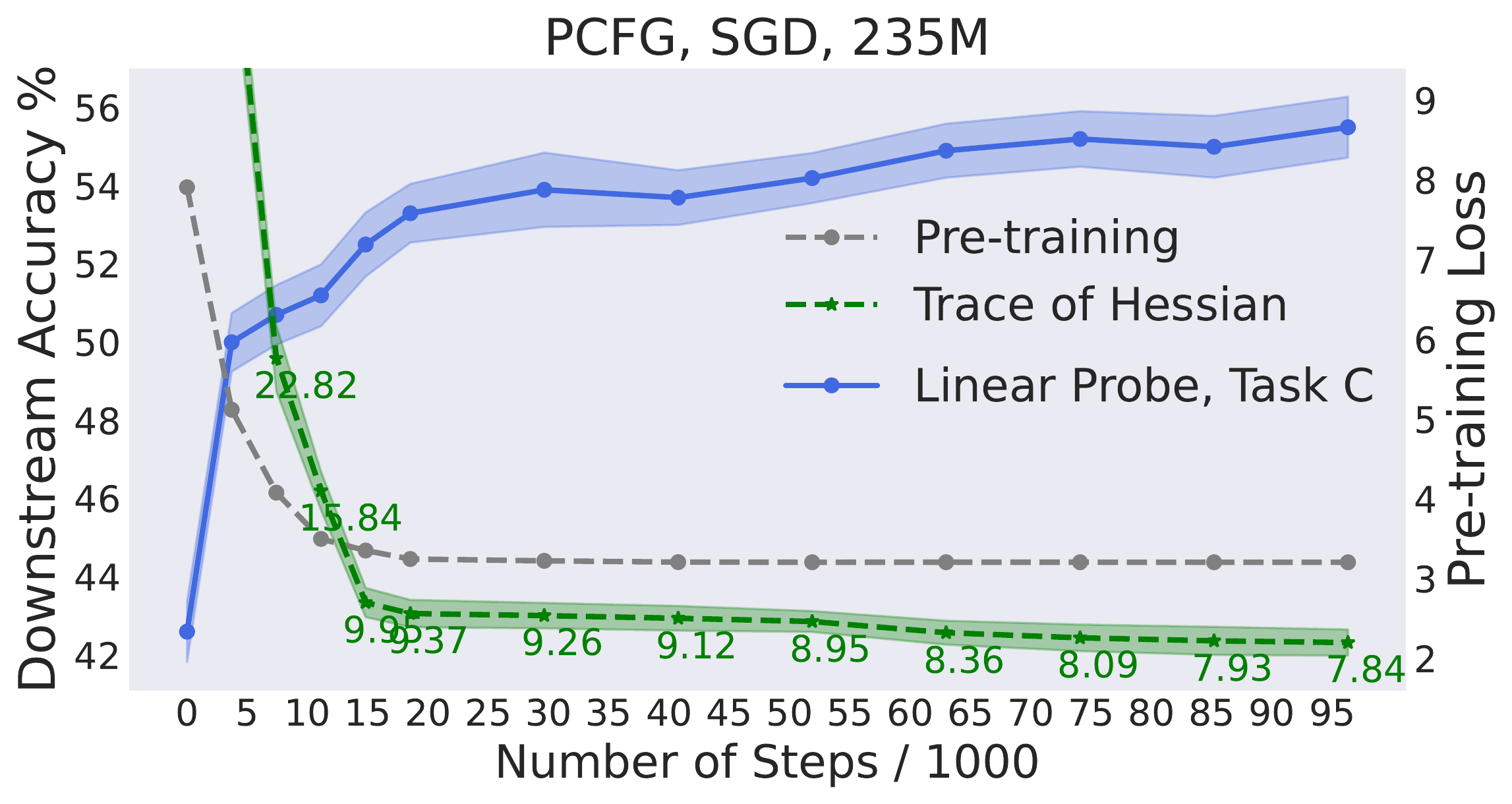} 
    }
     \hspace{3pt}
    \subfigure{
    \includegraphics[width=0.47\textwidth]{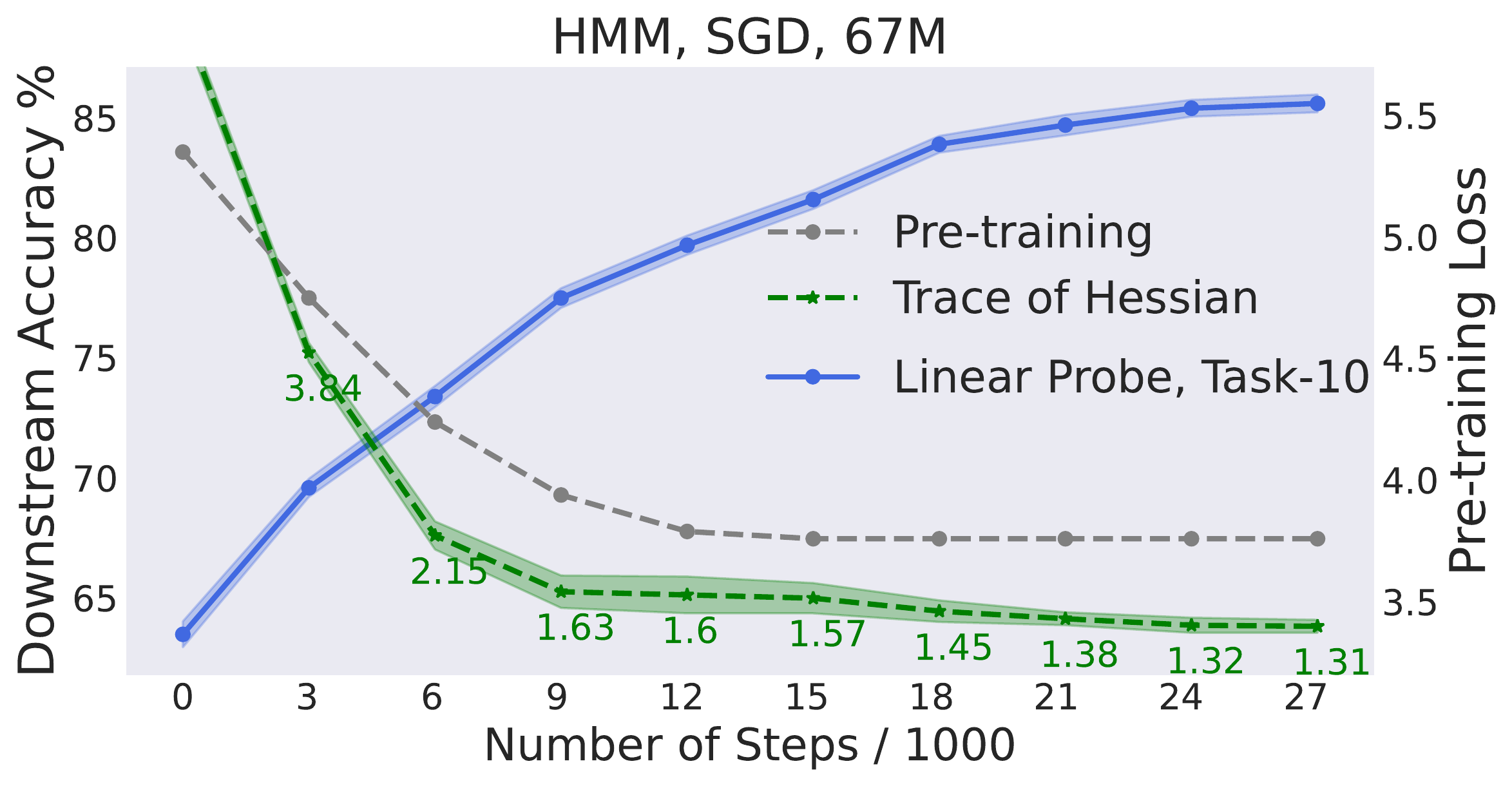}
    }
  \caption{The trace of Hessian correlates with downstream performance for model checkpoints with different number of steps after the pre-training loss converges. Left: A model with 235M parameters pre-trained on the PCFG-generated dataset, and evaluated on task C. Right: A model with 67M parameters model pre-trained on the HMM-generated dataset, and evaluated on task-10.
  } 
  \label{fig3}
\end{figure}

\section{Flatter Models Have Better Downstream Performance}\label{sec4}

In this section, we demonstrate with experiments that the flatness is well correlated with downstream performance in the setting introduced in Section~\ref{sec2}.

\begin{figure}
  \begin{center}
    \includegraphics[width=0.94\textwidth]{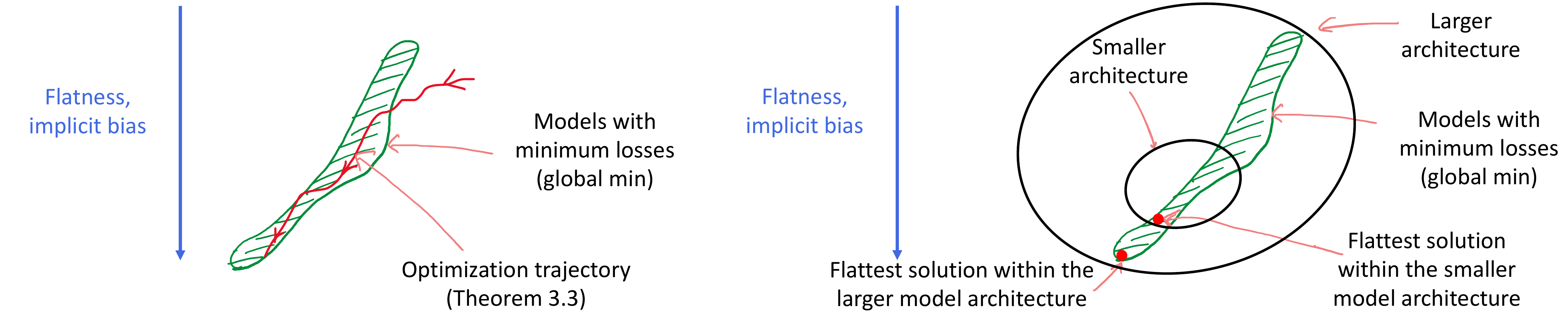}
  \end{center}
  \caption{Left: The role of implicit bias. After the pre-training loss converges, the implicit bias drives the model toward flat minima, as predicted by Theorem~\ref{thm:mlm_sgd_implicit_bias}. Right: The interaction between model size and implicit bias. The implicit bias drives the model toward flat minima on both larger models and smaller models. The smaller model architecture can be viewed as a subset of the larger model architecture. Therefore, larger models can achieve flatter minima than smaller models. }
  \label{fig:implicitbias}
\end{figure}

\paragraph{Evaluation of flatness.} As in Theorem~\ref{thm:mlm_sgd_implicit_bias}, we measure the flatness of different models in Section~\ref{sec2} by the trace of Hessian of the pre-training loss (smaller trace of Hessian indicates flatter minima.)
Note that when the model approaches the saturation regime, the trace of Hessian is approximately the second order derivative times the square of the norm of the Jacobian, which is a high-dimensional matrix.
For computational feasibility, we adopt a technique inspired by Lemma~\ref{lem:covariance_equal_to_hessian_trace} and~\citet{wei2020implicit}
to unbiasedly estimate the trace of Hessian with random samples. Details are provided in Section~\ref{sec:estimate}.

\begin{wraptable}{r}{7.44cm}
\addtolength{\tabcolsep}{-3.0pt} 
	\centering
	\vspace{-10pt}
	\caption{\small{A 235M transformer pre-trained with different algorithms evaluated on PCFG Task C.}}
	\label{table2}
	\begin{small}
	\begin{tabular}{l|c|c|c}
	\toprule
	Method & Pre-training & Acc $\%$ & Tr(H)\\
	\midrule
	AdamW & 3.20 &  55.7$\pm$0.6 & 8.01$\pm$0.73\\
	Adversarial & 3.20 & 50.2$\pm$1.0 & 19.34$\pm$0.92 \\
	\bottomrule
	\end{tabular}
	\end{small}
	\vspace{-10pt}
\end{wraptable}

\paragraph{Results.} In Figure~\ref{fig3}, we compare the downstream accuracy and the trace of Hessian of different checkpoints obtained at different times during pre-training. On the PCFG and HMM datasets, the trace of Hessian demonstrates a clear decreasing trend after the validation pre-training loss converges, following the prediction of Theorem~\ref{thm:mlm_sgd_implicit_bias}. Furthermore, as the trace of Hessian decreases, the downstream performance improves by 1.6$\%$ and 4.0$\%$ on the PCFG and HMM datasets, respectively.

We compare the trace of Hessian of the models pre-trained with adversarial algorithm and standard AdamW in Table~\ref{table2}. The trace of Hessian of the adversarially pre-trained model is 2 times larger than the normally pre-trained model, corresponding to a drop of 5.5$\%$ in downstream performance.

In Figure~\ref{fig4}, we compare the downstream accuracy and the trace of Hessian of transformers with different sizes. We view all the transformers of various sizes as some parameter configurations within a large transformer architecture, and then compare the trace of Hessian of the new views of these models. Intuitively, we can view a small transformer as a large one by filling zeros into the additional parameters or replicating the parameters of the smaller models. For vanilla MLPs, both ways maintain the input-output functionality and the trace of Hessian. Although the layer-norm in transformers causes subtleties, we can still keep the functionality and the trace of Hessian with respect to most parameters unchanged by replicating the models properly (See Section~\ref{sec:embed} for details). 
Therefore, the views of these models have the same parameterization/architecture and the same pre-training loss (because they have the same representations as the corresponding original smaller models), and we evaluate the relationship between the their trace of Hessian and their downstream performance. 

On the dataset generated by the PCFG, the pre-training loss is almost the same for models (technically, the new views of these models) that are larger than 9M. As we increase the model size, the trace of Hessian of the pre-training loss decreases from 19.8 to 12.6, correlating with the increase of linear probe accuracy from 40.4$\%$ to 50.5$\%$. On the OPT-generated dataset, we can also observe an increase in linear probe accuracy that co-occurs with a sharp decrease in the trace of Hessian, as we increase the model size. 

\paragraph{Interaction between implicit bias and model size.} Intuitively, the implicit bias drives the model toward flat minima on both larger models and smaller models. As demonstrated above, the smaller transformer architecture is a subset of the larger transformer architecture family. Thus the flattest minimum found within a larger transformer is flatter than the flattest minimum found within a smaller transformer, and performs better downstream. (See Figure~\ref{fig:implicitbias} (Right).)

\begin{figure}[t]
  \centering
   \subfigure[PCFG$\rightarrow$Task B]{
    \includegraphics[width=0.46\textwidth]{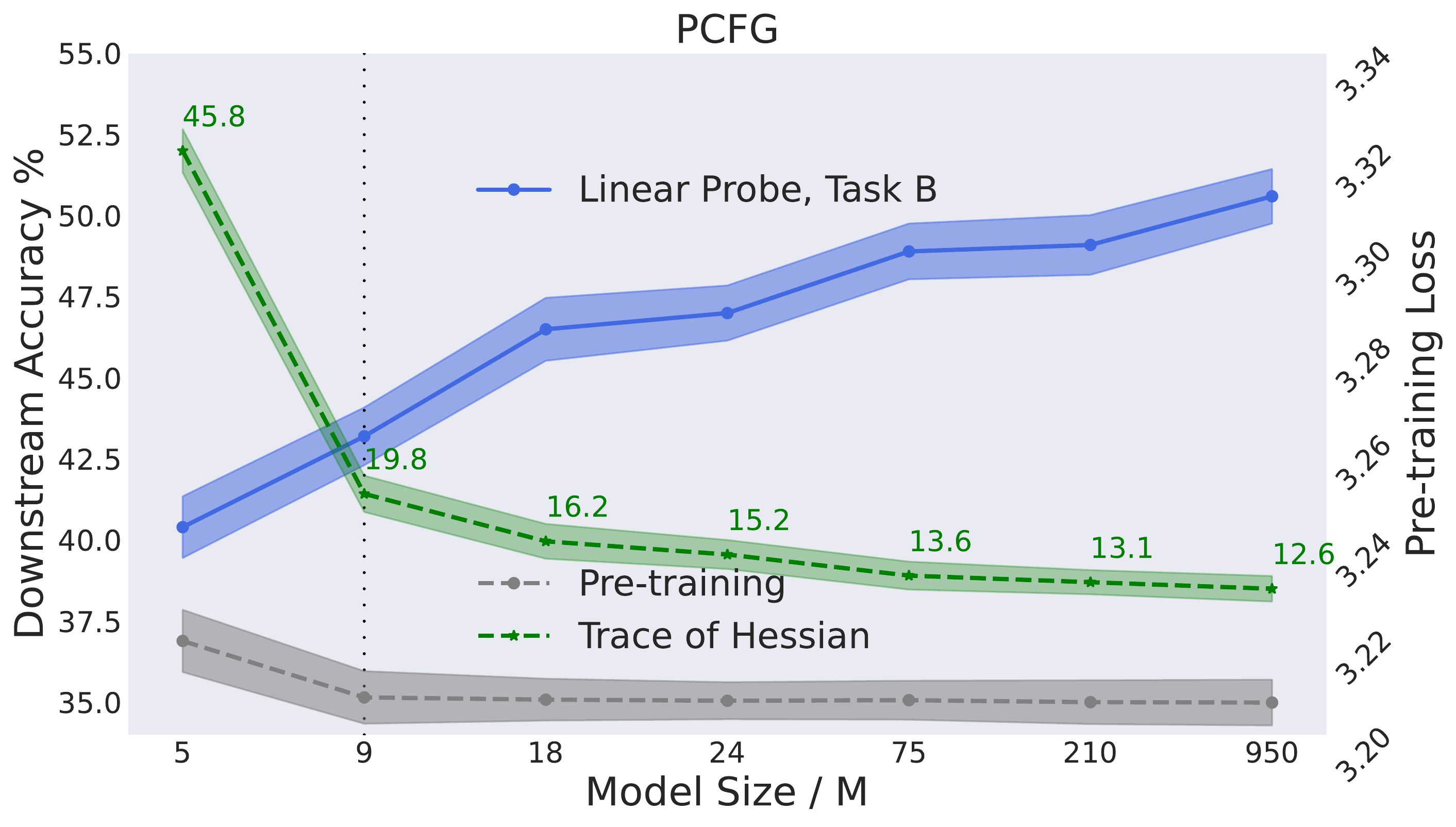} 
    }
    \hfill
    \subfigure[OPT$\rightarrow$QNLI]{
    \includegraphics[width=0.45\textwidth]{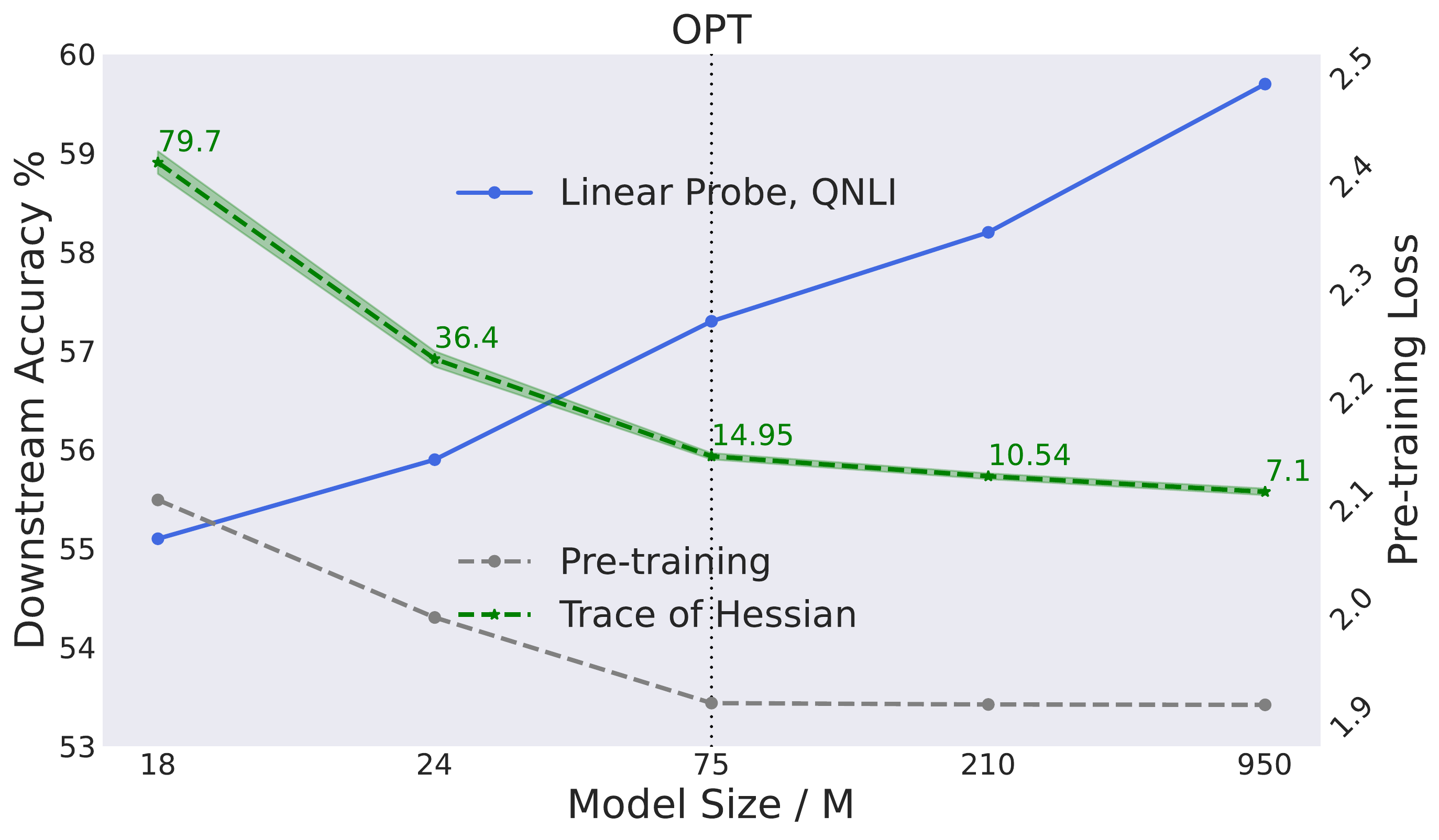}
    }
  \caption{The trace of Hessian correlates with downstream performance for models with different sizes and almost the same pre-training loss. As we increase the model size, the trace of the Hessian continues to decrease, as the performance on downstream tasks increases.
  } 
  \label{fig4}
\end{figure}
\section{Flatness Regularization Provably Identifies Transferable Models on Synthetic Language}\label{sec5}

\begin{figure}
  \begin{center}
    \includegraphics[width=0.94\textwidth]{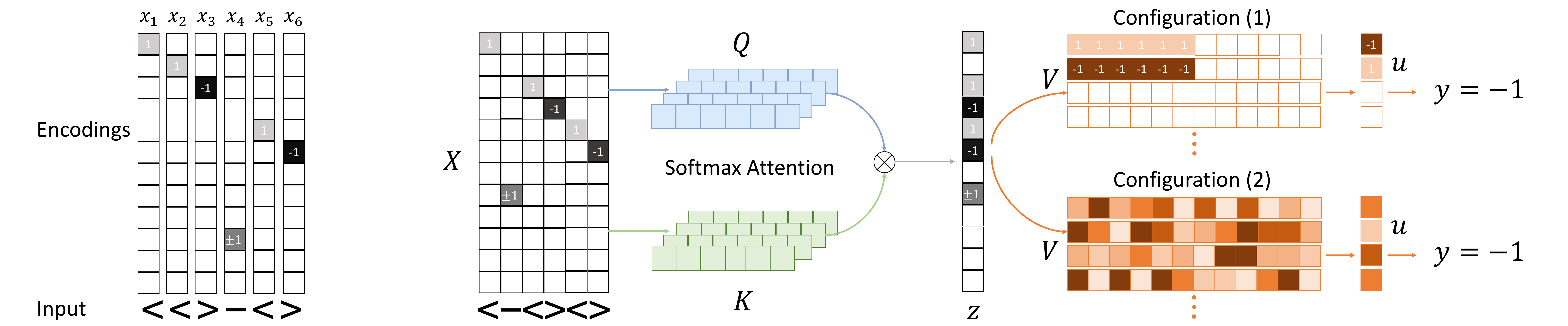}
  \end{center}
  \caption{\textbf{The synthetic language setting.} Left: An example of input encodings with sentence length $T=6$. Right: Illustration of the two solutions. The softmax attention can sum the token encodings into $z$. Solution (1) contains two features transferable to the downstream task. The neurons in solution (2) are sampled from Gaussian distribution, and not related to the downstream task. Both solutions can output the correct prediction for MLM pre-training, but solution (1) has much smaller trace of Hessian.}
  \label{fig:setting}
\end{figure} 

Toward formally proving the connection between the flatness regularization (introduced by the stochastic gradient as argued in Section~\ref{sec3}) and the downstream performance, we consider a setting with synthetic Dyck language. 
The simplicity of the data allows us to sharply analyze the internal working of a single-layer transformer (with an attention layer and an MLP layer) for masked language modeling. 
We show that multiple parameter configurations can predict the conditional probability well, including one ideal model that learns the correct representations capturing the intrinsic structure of the sentence, and many ``cheating'' models that essentially memorize the conditional probability using random features.  We will prove that the flattest model is the desired model that transfers to downstream tasks. 

\paragraph{Pre-training Distribution.} Consider a variant of the Dyck language~\citep{nivat1970some} consisting of matching brackets. The vocabulary of the language has two brackets $\langle$ and $\rangle$. 
Each sentence is composed of a sequence of tokens such that the total numbers of $\langle$'s and $\rangle$'s are equal. To sample from the pre-training distribution $P$, we first draw a sentence uniformly over all valid sentences with even length $T$. Then, we randomly select one position in $[T]$ and replace the bracket with a mask token.

\paragraph{Downstream Task.} The most intrinsic property about the synthetic language is the difference in the number of left and right brackets in a prefix, and thus we use it as the downstream task. Concretely, for any sequence $x$ in $\{\langle, \rangle \}^*$ of length $T$, let $g^*(x)$ count the number of mismatches in $x$:  
\begin{align}
g^*(x) \triangleq  \textup{\# of $\rangle$'s in $x$  $-$ \# of $\langle$'s in $x$} 
\end{align}
Thus, the sentence $x$ is a valid string in the language if and only if $g^*(x)=0$. For MLM, the masked token can also be recovered from $g^*(x)$: $g^*(x) = 1$ if the masked token is $\langle$, and $g^*(x) = -1$ if the masked token is $\rangle$. To evaluate if the model learns the structure, we consider a downstream distribution of sentences which do not necessarily belong to the the language. Each token is sampled from $\{\langle, \rangle \}$ uniformly, randomly, and independently.

\paragraph{Encoding of the Inputs.} With a slight abuse of notation, we also denote by $x_t$ the encoding of the $t$-th token. 
We encode the input as a one-hot vector in dimension $d=2T$, where the index of the nonzero element encodes the position and the sign encodes the bracket. Concretely, let $e_t\in \Real^{d}$ be the natural basis vector where the $t$-th entry is 1. Let $x_t = e_t$ if the $t$-th token is $\langle$ and $x_t = -e_t$ otherwise. If the position $t$ is a mask, we set $x_t$ to $v$, where $v\sim\textup{Unif}(\{\pm e_{t+T}\})$.
Examples of the encodings with $T=6$ are provided in Figure~\ref{fig:setting} (Left). 
 Note that the target function can be expressed as $g^{\star}(x) = -\langle \mathbf{1}_{T} , [\sum_{t=1}^{T}x_t]_{1:T} \rangle$ with this input encoding, where $\mathbf{1}_{T}$ is the all one vector in $\Real^T$ and $[a]_{1:T}$ refers to the first $T$ coordinates in $a$.

\paragraph{Models and Algorithms.} Suppose $Q,K \in \Real^{k\times d}$ are the query and key matrices, $V\in\Real^{m \times d}$ is the value matrix and $u\in\Real^{m}$ is the parameter of the output layer. 
Let $\psi = (Q,K,V)$. A single-layer transformer is composed of an attention layer and an MLP layer. $[\textup{Attn}_{\psi,u}(x)]_t =\frac{1}{m} u^\top \sigma(\sum_{j=1}^{T} a_{t,j} V x_j)$, where the attention score $a_{t,1:T} = \textup{softmax}(\langle Qx_t, Kx_t\rangle,\cdots\langle Qx_t, Kx_{T}\rangle)$.  $\sigma(x)=\max\{x,0\}$ is the relu activation. We use the output of the first token, $f_{\psi,u}(x)=[\textup{Attn}_{\psi,u}(x)]_1$. 

We use the squared loss for both MLM and downstream adaptation. The loss function of MLM is $L(\psi,u)$. In downstream adaptation, we have a finite dataset $\{x^{(i)}\}_{i=1}^{n}$ sampled i.i.d. from $\Pds$. 
The training loss with $n$ data is $\widehat{L}^{\Pds}(\psi,u)$, and the population loss for the downstream task is $L^{\Pds}(\psi,u)$.

\paragraph{Main Intuitions.} 
We are interested in two kinds of parameter configurations both with good pre-training loss: 
(1) learning the natural and transferable features $\mathbf{1}_T$ and (2) fitting the pre-training task by memorizing the masked sentences. We construct the two solutions as follows. For solution (1), first note that the softmax attention layer can take the average of all the token encodings $[x_t]_{t=1}^{T}$ in a sentence. Let us denote the sum by $z\in\Real^{d}$, $z = \sum_{t=1}^{T}x_t$. Note that the first $T$ coordinates in $z$ are $\pm 1$ indicating the bracket type and the last $T$ coordinates indicate the position of the mask (See Figure~\ref{fig:setting} (Right)). On top of $z$, two neurons can predict the masked token in MLM perfectly. Consider the two neurons $V_1 = [\mathbf{1}_T;\mathbf{0}_T]$, $V_2 = [-\mathbf{1}_T;\mathbf{0}_T]$. Then $g^*(x) = \sigma(V_2^\top z) - \sigma(V_1^\top z)$, which is the transferable solution. For solution (2), we set the entries in $V$ to i.i.d. samples from $\mathcal{N}(0,T)$. If $m$ is sufficiently large, we can find the coefficient $u$ to express $g^{*}(x)$ with random Gaussian features, i.e. $g^{*}(x) = u^\top\sigma(Vz)$.

We observe that the trace of Hessian of configuration (1) is smaller than configuration (2), due to a main difference between them--the cancellation between activated neurons. In configuration (1), for every possible input, only one of the neurons $\sigma(V_1^\top z)$ and $\sigma(V_2^\top z)$ is activated. In contrast, in configuration (2), many neurons can be activated at the same time. Among them, the output coefficient $u_i$'s contain both positive and negative values, leading to cancellation between activated neurons. In Lemma~\ref{lem:lower_bound_of_tr_h}, we link the trace of the Hessian with the cancellation between neurons. Indeed, we show that the mimimum of trace of the Hessian can be achieved only if there is no such cancellation. Therefore solution (1) is also the minimizer of the trace of Hessian. The intuitions are formalized in Theorem~\ref{thm:trace_of_hessian_gives_sparse_solution}.

\begin{theorem}\label{thm:trace_of_hessian_gives_sparse_solution}
Suppose $m\ge 2$ and $T\ge 6$. Consider minimizing the trace of Hessian among all the solutions to the MLM pre-training task: $\textup{minimize}_{\psi,u}\mathrm{Tr}[\nabla^2_\psi L(\psi,u)] + \mathrm{Tr}[\nabla^2_u L(\psi,u)]$, subject to $L(\psi,u) = 0$.  The flattest solution $\hat\psi, \hat u$ are defined as the solution of the optimization problem above. $\tilde u$ is the minimizer of downstream training loss on top of $\hat\psi$, $\tilde u\in\mathop{\arg\min}_{u}\|u\|_2$ subject to $\widehat{L}^{\Pds}(\hat\psi,u)=0$. Then with probability at least $1-2^{-n}$, $L^{\Pds}(\hat\psi,\tilde u)=0.$
\end{theorem}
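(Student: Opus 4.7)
My plan is to reduce the theorem to three stages: (i) the flatness minimization pins down $\hat\psi$ to essentially the structure of solution~(1) described just before the theorem; (ii) this structure yields features that are a universal function of $g^{\star}(x)$ on \emph{any} $\{\langle,\rangle\}^{T}$ input, not merely on Dyck words; and (iii) a min-norm linear head on these features achieves zero downstream population loss whenever the $n$ training samples exhibit both signs of $g^{\star}$, which is the high-probability event underlying the $1-2^{-n}$ bound.

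Stage~(i) is the heart of the argument. For the squared MLM loss at a zero-loss parameter $(\psi,u)$, $\mathrm{Tr}[\nabla^{2}L]$ reduces to $2\,\Exp_{x_{-t}}\|\nabla_{(\psi,u)} f_{\psi,u}(x_{-t})\|^{2}$, which splits into a part involving $(V,u)$ (through the MLP $\sigma(V\tilde z)$, with $\tilde z$ the attention-aggregated input) and a part involving $(Q,K)$. My plan is to invoke Lemma~\ref{lem:lower_bound_of_tr_h} (the ``no-cancellation'' fact stated just before the theorem) to prove that the minimum of $\mathrm{Tr}[\nabla^{2}L]$ on the zero-loss manifold is attained only when, on every input, no two active neurons with opposite-sign $u_i$'s fire simultaneously. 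Combined with the requirement that $f_{\psi,u}(x_{-t})\in\{\pm 1\}$ matches the MLM target on every masked sentence, this forces the attention to perform a uniform aggregation, $\tilde z\propto z=\sum_{t=1}^{T}x_{t}$, and $V$ to contain exactly two informative rows of the form $\pm\beta[\mathbf{1}_{T};\mathbf{0}_{T}]$, reproducing solution~(1) up to scaling.

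Stage~(ii) is then immediate: because both the two informative rows of $V$ and the uniform attention weights are input-independent, the feature map $x\mapsto\sigma(V\tilde z)$ evaluates to the two-dimensional vector $\bigl(\sigma(\beta g^{\star}(x)),\,\sigma(-\beta g^{\star}(x))\bigr)$, embedded in $\Real^{m}$ with zero padding, for every $x\in\{\langle,\rangle\}^{T}$---including sentences outside the Dyck language. Stage~(iii) is then a short linear-algebra argument. Conditional on the $n$ downstream samples containing at least one strictly positive and one strictly negative value of $g^{\star}$, the two informative feature columns are linearly independent, interpolating the targets $g^{\star}(x^{(i)})$ pins down $\tilde u$ within the effective 2-D subspace, and the min-norm property zeros out all other coordinates. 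The resulting head then recomputes $g^{\star}(x)$ exactly on every input $x$, giving $L^{\Pds}(\hat\psi,\tilde u)=0$. A direct tail estimate on the i.i.d.\ Rademacher sums $g^{\star}(x^{(i)})$ then yields the $1-2^{-n}$ bound.

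The main obstacle is Stage~(i), in particular ruling out alternative flattest minimizers with non-uniform attention or with more than two informative MLP neurons. Since the softmax couples $(Q,K)$ with $V$, the Hessian is not cleanly separable. I would first fix $\tilde z$ and minimize over $(V,u)$ using the cancellation lemma to obtain the two-neuron structure, then run a perturbation argument showing that any non-uniform attention strictly increases $\Exp\|\sigma(V\tilde z)\|^{2}$ because it would make the firing pattern depend on the masked position in a way that no longer respects the cancellation condition enforcing the minimum.
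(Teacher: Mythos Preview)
Your three-stage scaffold matches the paper's, but Stage~(i) contains a genuine gap: you assert that the flatness minimization forces $\hat\psi$ to be \emph{exactly} solution~(1), i.e., two informative rows $\pm\beta[\mathbf{1}_{T};\mathbf{0}_{T}]$ with zero padding elsewhere. This is false. The trace lower bound in Lemma~\ref{lem:lower_bound_of_tr_h} is tight on a whole family of parameters, not a single point. The equality conditions are (a) $V_i^\top h_{Q,K}(x)=|u_i|\,\|h_{Q,K}(x)\|_2$ on every active input, (b) no sign-cancellation among active neurons, and (c) uniform attention; none of these force the number of active rows to be two, nor do they force the mask-position block $V_i^{(p)}$ to vanish. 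The paper's Lemma~\ref{lem:equality_conditions} (via a full-rank argument on the pre-training support) shows only that every nonzero neuron has content block $V_i^{(c)}=c_i\mathbf{1}_T$ for some scalar $c_i$ whose sign is fixed by whether $i\in I_+$ or $i\in I_-$, while $V_i^{(p)}$ can have coordinates in $\{0,\pm c_i\}$. So your Stage~(ii) conclusion---that the feature map is literally the two-dimensional $(\sigma(\beta g^\star),\sigma(-\beta g^\star))$ embedded with zeros---does not follow, and the min-norm head argument in Stage~(iii) must be redone for this larger family using the per-neuron constants $c_i$.

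A second, smaller issue: your plan to first minimize over $(V,u)$ for fixed $\tilde z$ and then ``run a perturbation argument'' to force uniform attention is both vaguer and weaker than what the paper does. The paper's proof of Lemma~\ref{lem:lower_bound_of_tr_h} is a single chain of inequalities---AM--GM on the $(V,u)$ terms, a triangle inequality giving the no-cancellation condition, and then $\|h_{Q,K}(x)\|_2\ge 1/\sqrt{T}$ (with equality iff the softmax scores are all $1/T$)---that simultaneously yields the sharp bound $\sqrt{4/T}$ and all three equality conditions. There is no need for a separate perturbation step, and your sketch does not explain how the no-cancellation condition would survive a non-uniform attention perturbation. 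If you replace your Stage~(i) with this inequality chain and weaken its conclusion to ``$V_i^{(c)}=c_i\mathbf{1}_T$ for every $i$'' (plus uniform attention), Stages~(ii) and~(iii) can be repaired along the lines of the paper's explicit construction of $\tilde u$ on the index sets $I_+$ and $I_-$.
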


\section{Conclusion}
We study the relationship between pre-training loss and downstream performance on language models. We discover that implicit bias matters beyond pre-training loss, and explore the mechanism of implicit bias in language modeling. For downstream evaluation, we also focus on in-distribution data to better control various factors. Future works can study the effect of implicit bias on out-of-distribution tasks. Our experiments also focus on simplified datasets due to constraint of computational resources. We hope that the phenomenon can predict the implicit bias on more complex datasets as the community scale the models to even larger. Theoretically we provide cases where the flatness regularization can decide the performance on downstream performance. We wish this motivates future works on the relationship between implicit bias and the feature representations of the models.

\section*{Acknowledgement}
We thank Jeff Z. HaoChen, Yining Chen, Garrett Thomas and Ananya Kumar for valuable feedbacks. HL is supported by the Stanford Graduate Fellowship. SMX is supported by an NDSEG Fellowship. 
The authors would like to thank the support from NSF IIS 2045685.


\begin{thebibliography}{68}
\providecommand{\natexlab}[1]{#1}
\providecommand{\url}[1]{\texttt{#1}}
\expandafter\ifx\csname urlstyle\endcsname\relax
  \providecommand{\doi}[1]{doi: #1}\else
  \providecommand{\doi}{doi: \begingroup \urlstyle{rm}\Url}\fi

\bibitem[Arora et~al.(2019{\natexlab{a}})Arora, Cohen, Hu, and
  Luo]{arora2019implicit}
Sanjeev Arora, Nadav Cohen, Wei Hu, and Yuping Luo.
\newblock Implicit regularization in deep matrix factorization.
\newblock In \emph{Advances in Neural Information Processing Systems}, pages
  7411--7422, 2019{\natexlab{a}}.

\bibitem[Arora et~al.(2019{\natexlab{b}})Arora, Khandeparkar, Khodak,
  Plevrakis, and Saunshi]{arora2019theoretical}
Sanjeev Arora, Hrishikesh Khandeparkar, Mikhail Khodak, Orestis Plevrakis, and
  Nikunj Saunshi.
\newblock A theoretical analysis of contrastive unsupervised representation
  learning.
\newblock In \emph{International Conference on Machine Learning},
  2019{\natexlab{b}}.

\bibitem[Arora et~al.(2022)Arora, Li, and Panigrahi]{arora2022understanding}
Sanjeev Arora, Zhiyuan Li, and Abhishek Panigrahi.
\newblock Understanding gradient descent on the edge of stability in deep
  learning.
\newblock In Kamalika Chaudhuri, Stefanie Jegelka, Le~Song, Csaba Szepesvari,
  Gang Niu, and Sivan Sabato, editors, \emph{Proceedings of the 39th
  International Conference on Machine Learning}, volume 162 of
  \emph{Proceedings of Machine Learning Research}, pages 948--1024. PMLR,
  17--23 Jul 2022.
\newblock URL \url{https://proceedings.mlr.press/v162/arora22a.html}.

\bibitem[Ba et~al.(2016)Ba, Kiros, and Hinton]{ba2016layer}
Jimmy~Lei Ba, Jamie~Ryan Kiros, and Geoffrey~E Hinton.
\newblock Layer normalization.
\newblock \emph{arXiv preprint arXiv:1607.06450}, 2016.

\bibitem[Bai and Lee(2020)]{bai2019beyond}
Yu~Bai and Jason~D Lee.
\newblock Beyond linearization: On quadratic and higher-order approximation of
  wide neural networks.
\newblock \emph{International Conference on Learning Representations (ICLR)},
  2020.

\bibitem[Blanc et~al.(2019)Blanc, Gupta, Valiant, and
  Valiant]{blanc2019implicit}
Guy Blanc, Neha Gupta, Gregory Valiant, and Paul Valiant.
\newblock Implicit regularization for deep neural networks driven by an
  ornstein-uhlenbeck like process.
\newblock \emph{arXiv preprint arXiv:1904.09080}, 2019.

\bibitem[Brown et~al.(2020)Brown, Mann, Ryder, Subbiah, Kaplan, Dhariwal,
  Neelakantan, Shyam, Sastry, Askell, et~al.]{brown2020language}
Tom Brown, Benjamin Mann, Nick Ryder, Melanie Subbiah, Jared~D Kaplan, Prafulla
  Dhariwal, Arvind Neelakantan, Pranav Shyam, Girish Sastry, Amanda Askell,
  et~al.
\newblock Language models are few-shot learners.
\newblock \emph{Advances in neural information processing systems},
  33:\penalty0 1877--1901, 2020.

\bibitem[Chomsky(1956)]{chomsky1956three}
Noam Chomsky.
\newblock Three models for the description of language.
\newblock \emph{IRE Transactions on information theory}, 2\penalty0
  (3):\penalty0 113--124, 1956.

\bibitem[Choromanski et~al.(2020)Choromanski, Likhosherstov, Dohan, Song, Gane,
  Sarlos, Hawkins, Davis, Mohiuddin, Kaiser, et~al.]{choromanski2020rethinking}
Krzysztof Choromanski, Valerii Likhosherstov, David Dohan, Xingyou Song,
  Andreea Gane, Tamas Sarlos, Peter Hawkins, Jared Davis, Afroz Mohiuddin,
  Lukasz Kaiser, et~al.
\newblock Rethinking attention with performers.
\newblock \emph{arXiv preprint arXiv:2009.14794}, 2020.

\bibitem[Dai et~al.(2020)Dai, Lai, Yang, and Le]{dai2020funnel}
Zihang Dai, Guokun Lai, Yiming Yang, and Quoc Le.
\newblock Funnel-transformer: Filtering out sequential redundancy for efficient
  language processing.
\newblock \emph{Advances in neural information processing systems},
  33:\penalty0 4271--4282, 2020.

\bibitem[Damian et~al.(2021)Damian, Ma, and Lee]{damian2021label}
Alex Damian, Tengyu Ma, and Jason~D Lee.
\newblock Label noise sgd provably prefers flat global minimizers.
\newblock \emph{Advances in Neural Information Processing Systems},
  34:\penalty0 27449--27461, 2021.

\bibitem[Devlin et~al.(2018)Devlin, Chang, Lee, and Toutanova]{devlin2018bert}
Jacob Devlin, Ming-Wei Chang, Kenton Lee, and Kristina Toutanova.
\newblock Bert: Pre-training of deep bidirectional transformers for language
  understanding.
\newblock \emph{arXiv preprint arXiv:1810.04805}, 2018.

\bibitem[Dosovitskiy et~al.(2020)Dosovitskiy, Beyer, Kolesnikov, Weissenborn,
  Zhai, Unterthiner, Dehghani, Minderer, Heigold, Gelly,
  et~al.]{dosovitskiy2020image}
Alexey Dosovitskiy, Lucas Beyer, Alexander Kolesnikov, Dirk Weissenborn,
  Xiaohua Zhai, Thomas Unterthiner, Mostafa Dehghani, Matthias Minderer, Georg
  Heigold, Sylvain Gelly, et~al.
\newblock An image is worth 16x16 words: Transformers for image recognition at
  scale.
\newblock \emph{arXiv preprint arXiv:2010.11929}, 2020.

\bibitem[Fehrman et~al.(2020)Fehrman, Gess, and
  Jentzen]{fehrman2020convergence}
Benjamin Fehrman, Benjamin Gess, and Arnulf Jentzen.
\newblock Convergence rates for the stochastic gradient descent method for
  non-convex objective functions.
\newblock \emph{Journal of Machine Learning Research}, 21:\penalty0 136, 2020.

\bibitem[Forney(1973)]{forney1973viterbi}
G~David Forney.
\newblock The viterbi algorithm.
\newblock \emph{Proceedings of the IEEE}, 61\penalty0 (3):\penalty0 268--278,
  1973.

\bibitem[Gunasekar et~al.(2018)Gunasekar, Lee, Soudry, and
  Srebro]{gunasekar2018implicit}
Suriya Gunasekar, Jason~D Lee, Daniel Soudry, and Nati Srebro.
\newblock Implicit bias of gradient descent on linear convolutional networks.
\newblock In \emph{Advances in Neural Information Processing Systems}, pages
  9461--9471, 2018.

\bibitem[HaoChen et~al.(2020)HaoChen, Wei, Lee, and Ma]{haochen2020shape}
Jeff~Z HaoChen, Colin Wei, Jason~D Lee, and Tengyu Ma.
\newblock Shape matters: Understanding the implicit bias of the noise
  covariance.
\newblock \emph{arXiv preprint arXiv:2006.08680}, 2020.

\bibitem[HaoChen et~al.(2021)HaoChen, Wei, Gaidon, and Ma]{haochen2021provable}
Jeff~Z HaoChen, Colin Wei, Adrien Gaidon, and Tengyu Ma.
\newblock Provable guarantees for self-supervised deep learning with spectral
  contrastive loss.
\newblock \emph{arXiv preprint arXiv:2106.04156}, 2021.

\bibitem[He et~al.(2016)He, Zhang, Ren, and Sun]{he2016deep}
Kaiming He, Xiangyu Zhang, Shaoqing Ren, and Jian Sun.
\newblock Deep residual learning for image recognition.
\newblock In \emph{Proceedings of the IEEE conference on computer vision and
  pattern recognition}, pages 770--778, 2016.

\bibitem[Hernandez et~al.(2021)Hernandez, Kaplan, Henighan, and
  McCandlish]{hernandez2021scaling}
Danny Hernandez, Jared Kaplan, Tom Henighan, and Sam McCandlish.
\newblock Scaling laws for transfer.
\newblock \emph{arXiv preprint arXiv:2102.01293}, 2021.

\bibitem[Hewitt and Manning(2019)]{hewitt2019structural}
John Hewitt and Christopher~D Manning.
\newblock A structural probe for finding syntax in word representations.
\newblock In \emph{Proceedings of the 2019 Conference of the North American
  Chapter of the Association for Computational Linguistics: Human Language
  Technologies, Volume 1 (Long and Short Papers)}, pages 4129--4138, 2019.

\bibitem[Hochreiter and Schmidhuber(1997)]{hochreiter1997long}
Sepp Hochreiter and J{\"u}rgen Schmidhuber.
\newblock Long short-term memory.
\newblock \emph{Neural computation}, 9\penalty0 (8):\penalty0 1735--1780, 1997.

\bibitem[Htut et~al.(2019)Htut, Phang, Bordia, and Bowman]{htut2019attention}
Phu~Mon Htut, Jason Phang, Shikha Bordia, and Samuel~R Bowman.
\newblock Do attention heads in bert track syntactic dependencies?
\newblock \emph{arXiv preprint arXiv:1911.12246}, 2019.

\bibitem[Izsak et~al.(2021)Izsak, Berchansky, and Levy]{izsak2021train}
Peter Izsak, Moshe Berchansky, and Omer Levy.
\newblock How to train bert with an academic budget.
\newblock \emph{arXiv preprint arXiv:2104.07705}, 2021.

\bibitem[Ji and Telgarsky(2018)]{ji2018gradient}
Ziwei Ji and Matus Telgarsky.
\newblock Gradient descent aligns the layers of deep linear networks.
\newblock \emph{arXiv preprint arXiv:1810.02032}, 2018.

\bibitem[Johnson(1998)]{johnson1998pcfg}
Mark Johnson.
\newblock Pcfg models of linguistic tree representations.
\newblock \emph{Computational Linguistics}, 24\penalty0 (4):\penalty0 613--632,
  1998.

\bibitem[Kaplan et~al.(2020)Kaplan, McCandlish, Henighan, Brown, Chess, Child,
  Gray, Radford, Wu, and Amodei]{kaplan2020scaling}
Jared Kaplan, Sam McCandlish, Tom Henighan, Tom~B Brown, Benjamin Chess, Rewon
  Child, Scott Gray, Alec Radford, Jeffrey Wu, and Dario Amodei.
\newblock Scaling laws for neural language models.
\newblock \emph{arXiv preprint arXiv:2001.08361}, 2020.

\bibitem[Kim et~al.(2019)Kim, Dyer, and Rush]{kim2019compound}
Yoon Kim, Chris Dyer, and Alexander~M Rush.
\newblock Compound probabilistic context-free grammars for grammar induction.
\newblock \emph{arXiv preprint arXiv:1906.10225}, 2019.

\bibitem[Lan et~al.(2019)Lan, Chen, Goodman, Gimpel, Sharma, and
  Soricut]{lan2019albert}
Zhenzhong Lan, Mingda Chen, Sebastian Goodman, Kevin Gimpel, Piyush Sharma, and
  Radu Soricut.
\newblock Albert: A lite bert for self-supervised learning of language
  representations.
\newblock \emph{arXiv preprint arXiv:1909.11942}, 2019.

\bibitem[Lari and Young(1990)]{lari1990estimation}
Karim Lari and Steve~J Young.
\newblock The estimation of stochastic context-free grammars using the
  inside-outside algorithm.
\newblock \emph{Computer speech \& language}, 4\penalty0 (1):\penalty0 35--56,
  1990.

\bibitem[Lee et~al.(2020)Lee, Lee, Na, Kim, Park, Yang, and
  Hwang]{lee2020learning}
Hae~Beom Lee, Hayeon Lee, Donghyun Na, Saehoon Kim, Minseop Park, Eunho Yang,
  and Sung~Ju Hwang.
\newblock Learning to balance: Bayesian meta-learning for imbalanced and
  out-of-distribution tasks.
\newblock In \emph{International Conference on Learning Representations}, 2020.

\bibitem[Li et~al.(2017)Li, Ma, and Zhang]{li2017algorithmic}
Yuanzhi Li, Tengyu Ma, and Hongyang Zhang.
\newblock Algorithmic regularization in over-parameterized matrix sensing and
  neural networks with quadratic activations.
\newblock \emph{arXiv preprint arXiv:1712.09203}, pages 2--47, 2017.

\bibitem[Li et~al.(2020)Li, Luo, and Lyu]{li2020towards}
Zhiyuan Li, Yuping Luo, and Kaifeng Lyu.
\newblock Towards resolving the implicit bias of gradient descent for matrix
  factorization: Greedy low-rank learning.
\newblock \emph{arXiv preprint arXiv:2012.09839}, 2020.

\bibitem[Li et~al.(2021)Li, Wang, and Arora]{li2021happens}
Zhiyuan Li, Tianhao Wang, and Sanjeev Arora.
\newblock What happens after sgd reaches zero loss?--a mathematical framework.
\newblock \emph{arXiv preprint arXiv:2110.06914}, 2021.

\bibitem[Liu et~al.(2021)Liu, Dai, So, and Le]{liu2021pay}
Hanxiao Liu, Zihang Dai, David So, and Quoc~V Le.
\newblock Pay attention to mlps.
\newblock \emph{Advances in Neural Information Processing Systems},
  34:\penalty0 9204--9215, 2021.

\bibitem[Liu et~al.(2020)Liu, HaoChen, Wei, and Ma]{liu2020meta}
Hong Liu, Jeff~Z HaoChen, Colin Wei, and Tengyu Ma.
\newblock Meta-learning transferable representations with a single target
  domain.
\newblock \emph{arXiv preprint arXiv:2011.01418}, 2020.

\bibitem[Lyu and Li(2019)]{lyu2019gradient}
Kaifeng Lyu and Jian Li.
\newblock Gradient descent maximizes the margin of homogeneous neural networks.
\newblock \emph{arXiv preprint arXiv:1906.05890}, 2019.

\bibitem[Mamou et~al.(2020)Mamou, Le, Del~Rio, Stephenson, Tang, Kim, and
  Chung]{mamou2020emergence}
Jonathan Mamou, Hang Le, Miguel Del~Rio, Cory Stephenson, Hanlin Tang, Yoon
  Kim, and SueYeon Chung.
\newblock Emergence of separable manifolds in deep language representations.
\newblock \emph{arXiv preprint arXiv:2006.01095}, 2020.

\bibitem[Min et~al.(2022)Min, Lyu, Holtzman, Artetxe, Lewis, Hajishirzi, and
  Zettlemoyer]{min2022rethinking}
Sewon Min, Xinxi Lyu, Ari Holtzman, Mikel Artetxe, Mike Lewis, Hannaneh
  Hajishirzi, and Luke Zettlemoyer.
\newblock Rethinking the role of demonstrations: What makes in-context learning
  work?
\newblock \emph{arXiv preprint arXiv:2202.12837}, 2022.

\bibitem[Nivat(1970)]{nivat1970some}
Maurice Nivat.
\newblock On some families of languages related to the dyck language.
\newblock In \emph{Proceedings of the second annual ACM symposium on Theory of
  computing}, pages 221--225, 1970.

\bibitem[Peters et~al.(2018)Peters, Neumann, Zettlemoyer, and
  Yih]{peters2018dissecting}
Matthew~E Peters, Mark Neumann, Luke Zettlemoyer, and Wen-tau Yih.
\newblock Dissecting contextual word embeddings: Architecture and
  representation.
\newblock \emph{arXiv preprint arXiv:1808.08949}, 2018.

\bibitem[Radford et~al.(2019)Radford, Wu, Child, Luan, Amodei, Sutskever,
  et~al.]{radford2019language}
Alec Radford, Jeffrey Wu, Rewon Child, David Luan, Dario Amodei, Ilya
  Sutskever, et~al.
\newblock Language models are unsupervised multitask learners.
\newblock \emph{OpenAI blog}, 1\penalty0 (8):\penalty0 9, 2019.

\bibitem[Raffel et~al.(2020)Raffel, Shazeer, Roberts, Lee, Narang, Matena,
  Zhou, Li, and Liu]{raffel2020exploring}
Colin Raffel, Noam Shazeer, Adam Roberts, Katherine Lee, Sharan Narang, Michael
  Matena, Yanqi Zhou, Wei Li, and Peter~J Liu.
\newblock Exploring the limits of transfer learning with a unified text-to-text
  transformer.
\newblock \emph{Journal of Machine Learning Research}, 21:\penalty0 1--67,
  2020.

\bibitem[Raghu et~al.(2021)Raghu, Lorraine, Kornblith, McDermott, and
  Duvenaud]{raghu2021meta}
Aniruddh Raghu, Jonathan Lorraine, Simon Kornblith, Matthew McDermott, and
  David~K Duvenaud.
\newblock Meta-learning to improve pre-training.
\newblock \emph{Advances in Neural Information Processing Systems},
  34:\penalty0 23231--23244, 2021.

\bibitem[Roark and Bacchiani(2003)]{roark2003supervised}
Brian Roark and Michiel Bacchiani.
\newblock Supervised and unsupervised pcfg adaptation to novel domains.
\newblock In \emph{Proceedings of the 2003 Human Language Technology Conference
  of the North American Chapter of the Association for Computational
  Linguistics}, pages 205--212, 2003.

\bibitem[Saunshi et~al.(2020)Saunshi, Malladi, and
  Arora]{saunshi2020mathematical}
Nikunj Saunshi, Sadhika Malladi, and Sanjeev Arora.
\newblock A mathematical exploration of why language models help solve
  downstream tasks.
\newblock \emph{arXiv preprint arXiv:2010.03648}, 2020.

\bibitem[Saunshi et~al.(2022)Saunshi, Ash, Goel, Misra, Zhang, Arora, Kakade,
  and Krishnamurthy]{saunshi2022understanding}
Nikunj Saunshi, Jordan Ash, Surbhi Goel, Dipendra Misra, Cyril Zhang, Sanjeev
  Arora, Sham Kakade, and Akshay Krishnamurthy.
\newblock Understanding contrastive learning requires incorporating inductive
  biases.
\newblock \emph{arXiv preprint arXiv:2202.14037}, 2022.

\bibitem[Soudry et~al.(2018)Soudry, Hoffer, Nacson, Gunasekar, and
  Srebro]{soudry2018implicit}
Daniel Soudry, Elad Hoffer, Mor~Shpigel Nacson, Suriya Gunasekar, and Nathan
  Srebro.
\newblock The implicit bias of gradient descent on separable data.
\newblock \emph{The Journal of Machine Learning Research}, 19\penalty0
  (1):\penalty0 2822--2878, 2018.

\bibitem[Tay et~al.(2021{\natexlab{a}})Tay, Bahri, Metzler, Juan, Zhao, and
  Zheng]{tay2021synthesizer}
Yi~Tay, Dara Bahri, Donald Metzler, Da-Cheng Juan, Zhe Zhao, and Che Zheng.
\newblock Synthesizer: Rethinking self-attention for transformer models.
\newblock In \emph{International conference on machine learning}, pages
  10183--10192. PMLR, 2021{\natexlab{a}}.

\bibitem[Tay et~al.(2021{\natexlab{b}})Tay, Dehghani, Rao, Fedus, Abnar, Chung,
  Narang, Yogatama, Vaswani, and Metzler]{tay2021scale}
Yi~Tay, Mostafa Dehghani, Jinfeng Rao, William Fedus, Samira Abnar, Hyung~Won
  Chung, Sharan Narang, Dani Yogatama, Ashish Vaswani, and Donald Metzler.
\newblock Scale efficiently: Insights from pre-training and fine-tuning
  transformers.
\newblock \emph{arXiv preprint arXiv:2109.10686}, 2021{\natexlab{b}}.

\bibitem[Tolstikhin et~al.(2021)Tolstikhin, Houlsby, Kolesnikov, Beyer, Zhai,
  Unterthiner, Yung, Steiner, Keysers, Uszkoreit, et~al.]{tolstikhin2021mlp}
Ilya~O Tolstikhin, Neil Houlsby, Alexander Kolesnikov, Lucas Beyer, Xiaohua
  Zhai, Thomas Unterthiner, Jessica Yung, Andreas Steiner, Daniel Keysers,
  Jakob Uszkoreit, et~al.
\newblock Mlp-mixer: An all-mlp architecture for vision.
\newblock \emph{Advances in Neural Information Processing Systems},
  34:\penalty0 24261--24272, 2021.

\bibitem[Vaswani et~al.(2017)Vaswani, Shazeer, Parmar, Uszkoreit, Jones, Gomez,
  Kaiser, and Polosukhin]{vaswani2017attention}
Ashish Vaswani, Noam Shazeer, Niki Parmar, Jakob Uszkoreit, Llion Jones,
  Aidan~N Gomez, Lukasz Kaiser, and Illia Polosukhin.
\newblock Attention is all you need.
\newblock \emph{arXiv preprint arXiv:1706.03762}, 2017.

\bibitem[Wang et~al.(2018)Wang, Singh, Michael, Hill, Levy, and
  Bowman]{wang2018glue}
Alex Wang, Amanpreet Singh, Julian Michael, Felix Hill, Omer Levy, and Samuel~R
  Bowman.
\newblock Glue: A multi-task benchmark and analysis platform for natural
  language understanding.
\newblock \emph{arXiv preprint arXiv:1804.07461}, 2018.

\bibitem[Wang et~al.(2020)Wang, Li, Khabsa, Fang, and Ma]{wang2020linformer}
Sinong Wang, Belinda~Z Li, Madian Khabsa, Han Fang, and Hao Ma.
\newblock Linformer: Self-attention with linear complexity.
\newblock \emph{arXiv preprint arXiv:2006.04768}, 2020.

\bibitem[Wei and Ma(2019{\natexlab{a}})]{wei2019data}
Colin Wei and Tengyu Ma.
\newblock Data-dependent sample complexity of deep neural networks via
  lipschitz augmentation.
\newblock In \emph{Advances in Neural Information Processing Systems}, pages
  9722--9733, 2019{\natexlab{a}}.

\bibitem[Wei and Ma(2019{\natexlab{b}})]{wei2019improved}
Colin Wei and Tengyu Ma.
\newblock Improved sample complexities for deep networks and robust
  classification via an all-layer margin.
\newblock \emph{arXiv preprint arXiv:1910.04284}, 2019{\natexlab{b}}.

\bibitem[Wei et~al.(2020)Wei, Kakade, and Ma]{wei2020implicit}
Colin Wei, Sham Kakade, and Tengyu Ma.
\newblock The implicit and explicit regularization effects of dropout.
\newblock \emph{arXiv preprint arXiv:2002.12915}, 2020.

\bibitem[Wei et~al.(2021{\natexlab{a}})Wei, Chen, and Ma]{wei2021statistically}
Colin Wei, Yining Chen, and Tengyu Ma.
\newblock Statistically meaningful approximation: a case study on approximating
  turing machines with transformers, 2021{\natexlab{a}}.

\bibitem[Wei et~al.(2021{\natexlab{b}})Wei, Xie, and Ma]{wei2021pretrained}
Colin Wei, Sang~Michael Xie, and Tengyu Ma.
\newblock Why do pretrained language models help in downstream tasks? an
  analysis of head and prompt tuning.
\newblock \emph{arXiv preprint arXiv:2106.09226}, 2021{\natexlab{b}}.

\bibitem[Wei et~al.(2022)Wei, Wang, Schuurmans, Bosma, Chi, Le, and
  Zhou]{wei2022chain}
Jason Wei, Xuezhi Wang, Dale Schuurmans, Maarten Bosma, Ed~Chi, Quoc Le, and
  Denny Zhou.
\newblock Chain of thought prompting elicits reasoning in large language
  models.
\newblock \emph{arXiv preprint arXiv:2201.11903}, 2022.

\bibitem[Woodworth et~al.(2020)Woodworth, Gunasekar, Lee, Moroshko, Savarese,
  Golan, Soudry, and Srebro]{woodworth2020kernel}
Blake Woodworth, Suriya Gunasekar, Jason~D Lee, Edward Moroshko, Pedro
  Savarese, Itay Golan, Daniel Soudry, and Nathan Srebro.
\newblock Kernel and rich regimes in overparametrized models.
\newblock \emph{arXiv preprint arXiv:2002.09277}, 2020.

\bibitem[Xie et~al.(2021)Xie, Raghunathan, Liang, and Ma]{xie2021explanation}
Sang~Michael Xie, Aditi Raghunathan, Percy Liang, and Tengyu Ma.
\newblock An explanation of in-context learning as implicit bayesian inference.
\newblock \emph{arXiv preprint arXiv:2111.02080}, 2021.

\bibitem[Yang et~al.(2021)Yang, Zhao, and Tu]{yang2021neural}
Songlin Yang, Yanpeng Zhao, and Kewei Tu.
\newblock Neural bi-lexicalized pcfg induction.
\newblock \emph{arXiv preprint arXiv:2105.15021}, 2021.

\bibitem[Yang et~al.(2019)Yang, Dai, Yang, Carbonell, Salakhutdinov, and
  Le]{yang2019xlnet}
Zhilin Yang, Zihang Dai, Yiming Yang, Jaime Carbonell, Russ~R Salakhutdinov,
  and Quoc~V Le.
\newblock Xlnet: Generalized autoregressive pretraining for language
  understanding.
\newblock \emph{Advances in neural information processing systems}, 32, 2019.

\bibitem[Yun et~al.(2019)Yun, Bhojanapalli, Rawat, Reddi, and
  Kumar]{yun2019transformers}
Chulhee Yun, Srinadh Bhojanapalli, Ankit~Singh Rawat, Sashank~J Reddi, and
  Sanjiv Kumar.
\newblock Are transformers universal approximators of sequence-to-sequence
  functions?
\newblock \emph{arXiv preprint arXiv:1912.10077}, 2019.

\bibitem[Zhang et~al.(2022{\natexlab{a}})Zhang, Roller, Goyal, Artetxe, Chen,
  Chen, Dewan, Diab, Li, Lin, et~al.]{zhang2022opt}
Susan Zhang, Stephen Roller, Naman Goyal, Mikel Artetxe, Moya Chen, Shuohui
  Chen, Christopher Dewan, Mona Diab, Xian Li, Xi~Victoria Lin, et~al.
\newblock Opt: Open pre-trained transformer language models.
\newblock \emph{arXiv preprint arXiv:2205.01068}, 2022{\natexlab{a}}.

\bibitem[Zhang and Hashimoto(2021)]{zhang2021inductive}
Tianyi Zhang and Tatsunori Hashimoto.
\newblock On the inductive bias of masked language modeling: From statistical
  to syntactic dependencies.
\newblock \emph{arXiv preprint arXiv:2104.05694}, 2021.

\bibitem[Zhang et~al.(2022{\natexlab{b}})Zhang, Backurs, Bubeck, Eldan,
  Gunasekar, and Wagner]{zhang2022unveiling}
Yi~Zhang, Arturs Backurs, S{\'e}bastien Bubeck, Ronen Eldan, Suriya Gunasekar,
  and Tal Wagner.
\newblock Unveiling transformers with lego: a synthetic reasoning task.
\newblock \emph{arXiv preprint arXiv:2206.04301}, 2022{\natexlab{b}}.

\end{thebibliography}

\appendix
\newpage
\section{Details in Section~\ref{sec2}}\label{sec:appendix2}

\subsection{Generating Simplified Datasets}

\textbf{PCFG-generated dataset.} We consider a PCFG with vocabulary size 200. The state space is $S$, and $|S| = 50$. All the production rules have two symbols on the right side. The sentence length is limited to 32, which means the depth of the parse tree is limited to 6. We generate a total of $2\times 10^7$ sentences, which is $3.4\times 10^8$ tokens. The downstream tasks are classifying the non-terminal symbols in the parse tree of the PCFG (50-way classification). The label is defined as $y = \mathop{\arg\max}_{s\in S}\prb{s \mid x_1,x_2,\ldots,x_{1+L}}$. Tasks A, B and C are defined on the symbols corresponding to span length $L=32, 16\ \text{and}\ 8$, respectively. Each of the downstream task contains 0.1M examples. Examples of the generated trees are provided in Figure~\ref{fig:pcfg_example}.

\begin{figure}[h]
  \begin{center}
    \includegraphics[width=0.97\textwidth]{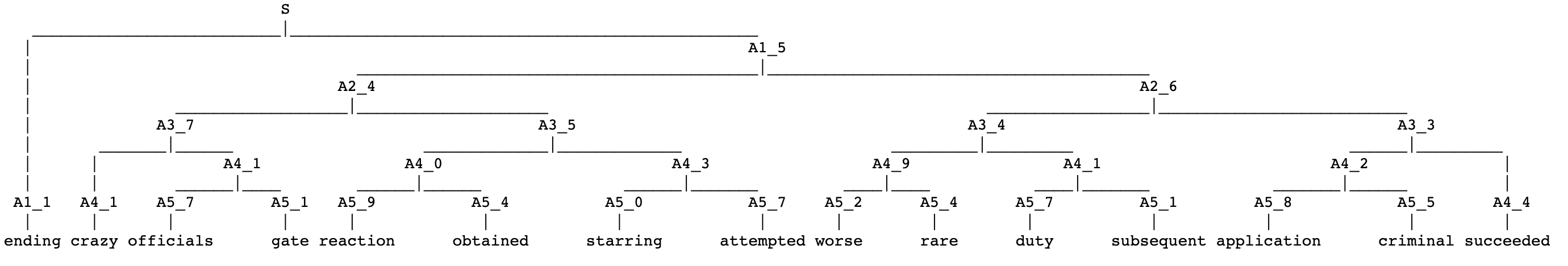}
  \end{center}
  \caption{An example of the genearated PCFG sentence.}
  \label{fig:pcfg_example}
\end{figure}

\textbf{HMM-generated dataset.} We consider an HMM with vocabulary size 200 and state space size 100. The sentence length is restricted to 16. We generate a total of $1\times 10^7$ sentences, which is $1.6\times 10^7$ tokens. The downstream task is to classify the latent variable in the HMM generative model. We consider task-6 and task-10, which classify the 6-th and 10-th hidden variables respectively. Each of the downstream task contains 0.1M examples.

\textbf{OPT-generated dataset.} We use the 125M OPT model to generate the training dataset. To simplify the dataset, we further process the logit of OPT to select only from the top-2000 tokens in the vocabulary. Starting from the bos token, we sample every token of the sentence from the predicted autoregressive LM probability. The sentence length is restricted to 24. We generate a total of $2\times 10^8$ sentences, which is $3.2 \times 10^9$ tokens. Examples of the generated text are provided in Figure~\ref{fig:opt_example}.
\begin{figure}[h]
  \begin{center}
    \includegraphics[width=0.97\textwidth]{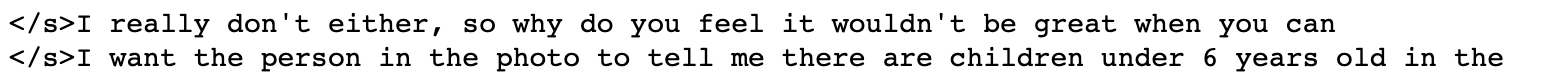}
  \end{center}
  \caption{An example of the genearated OPT sentence.}
  \label{fig:opt_example}
\end{figure}

\subsection{Compute the True Conditional Probabilities}\label{sec:truecond}

We can compute the true MLM conditional probability $\prb{x_t \mid x_{-t}}$ from the joint probability $\prb{x_t, x_{-t}}$,
\begin{align*}
    \prb{x_t = c \mid x_{-t}} = \frac{\prb{x_t = c , x_{-t}}}{\sum_{c\in W}\prb{x_t = c , x_{-t}}}.
\end{align*}
Since we already know the generative model, we can compute the joint probability efficiently. For PCFG, we can compute the joint probability with the inside algorithm, which decomposes the joint probability into lower layers in the parse tree. For HMM, we can compute the joint probability with the Viterbi algorithm. For OPT, we have $\prb{x_1,\ldots,x_T} = \prb{x_1}\prod_{t=1}^{T-1}\prb{x_{t+1} \mid x_1,\ldots,x_t}$. 

\subsection{Models}
We use transformers on PCFG and OPT-generated datasets. We use learning rate 1e-3 and warmup proportion 0.06. All the models are trained based on the implementation of~\citet{izsak2021train}. We list the sizes of the transformers in Table~\ref{fig:size}. d-model is the size of the hidden layers. d-inter is the size of the intermediate layers in MLP. n-head is the number of heads per layer. $\#$layers is the number of layers. For LSTMs, we use the implementation of PyTorch. We consider d-hidden in [128,256,512,768,1024], and $\#$layers in [4,6,8,12,16].

\begin{table}[t]
\addtolength{\tabcolsep}{0pt} 
\centering 
\caption{Shape of the transformers in Section~\ref{sec2}.} 
\label{fig:size}
\begin{small}
\begin{tabular}{c|c|c|c}
\toprule d-model & n-nead & $\#$layers & d-inter\\
\midrule
64 & 1 & 1 & 256 \\
128 & 2 & 4 & 512 \\
192 & 3 & 5 & 768 \\
256 & 4 & 6 & 1024 \\
512 & 8 & 8 & 2048 \\
768 & 12 & 12 & 3072 \\
1024 & 16 & 16 & 4096 \\
1280 & 20 & 20 & 5120 \\
1536 & 24 & 24 & 6144 \\
\bottomrule
\end{tabular}
\end{small}
\end{table}

\subsection{Algorithms}
\textbf{The lookup table.} To evaluate the downstream performance of the lookup table, we first create the lookup table with the data of the downstream task. With the method mentioned in Section~\ref{sec:truecond}, we can generate the true conditional probability of each token and use it as the contextual embeddings.

\textbf{The adversarial algorithm.} The adversarial algorithm we use to mess up the downstream performance is maximizing a meta-learning objective in pre-training. Suppose the linear head of the downstream task is $g_\phi$ and the feature representation is $h_{\psi}$. The meta-learning algorithm first trains the head $g_\phi$ to minimize the training loss of the downstream task, and then update $h_{\psi}$ to maximize the validation loss on the downstream tasks. Concretely, we randomly sample two disjoint subsets $D_1$ and $D_2$ from the downstream training dataset $D$. We train $g_\phi$ to minimize the loss of downstream tasks on $D_1$, $\hat \phi (\psi)\in \arg\min \frac{1}{|D_1|}\sum_{(x,y)\in D_1}\ell(g_\phi(h_{\psi}(x)),y)$. Then we train $h_{\psi}$ to maximize the validation loss on $D_2$ during pre-training, $\textup{minimize}_{\psi}L(\psi) -\lambda\frac{1}{|D_2|}\sum_{(x,y)\in D_2}\ell(g_{\hat\phi(\psi)}(h_{\psi}(x)),y)$. The optimization can be efficiently carried out with closed form solution of $\phi$ as shown in~\citet{liu2020meta}.

\textbf{Fine-tuning.} Following the standard protocol of~\citet{devlin2018bert}, we use the contextual embeddings of the CLS token for fine-tuning. We use AdamW with learning rate 1e-4. We perform 200 warmup steps and train on the downstream tasks for 10 epochs.

\textbf{Linear probe.} Since the CLS token is not trained in pre-training, we concatenate embeddings of all the tokens in the sentence as the representations. We use AdamW with learning rate 1e-3 to train the linear head. We train on the downstream tasks for 100 epochs. Note that to make the capacity of the linear probe itself controlled, we adopt a random Gaussian projection to dimension 512 on the concatenation of the embeddings and find out that this does not affect the final performance. Therefore we report the results of standard linear probe by default. 

We report the standard deviation of linear probe and fine-tuning from 5 random seeds.

\textbf{Evaluation of pre-training loss.} Since we have access calculate the true conditional probability, we can calculate the cross entropy loss as the sum of the entropy of the true conditional probability and the KL divergence between the predicted and true conditional probabilities. This is more accurate than evaluating on the validation datasets in the standard ways. We report the number of pre-training loss with $10^6$ sentences, and calculate the standard deviation on 5 subsets, each of which has size $2\times10^5$.

\subsection{Results on Other Downstream Tasks.}
We also provide results on other downstream tasks in this subsection. On PCFG Task A, OPT SST-2 and the Task-6 of HMM, we can also observe the increase in downstream performance as we scale up the models in the saturation regime. 

\begin{figure}[h]
  \centering
   \subfigure[PCFG$\rightarrow$Task A]{
    \includegraphics[width=0.315\textwidth]{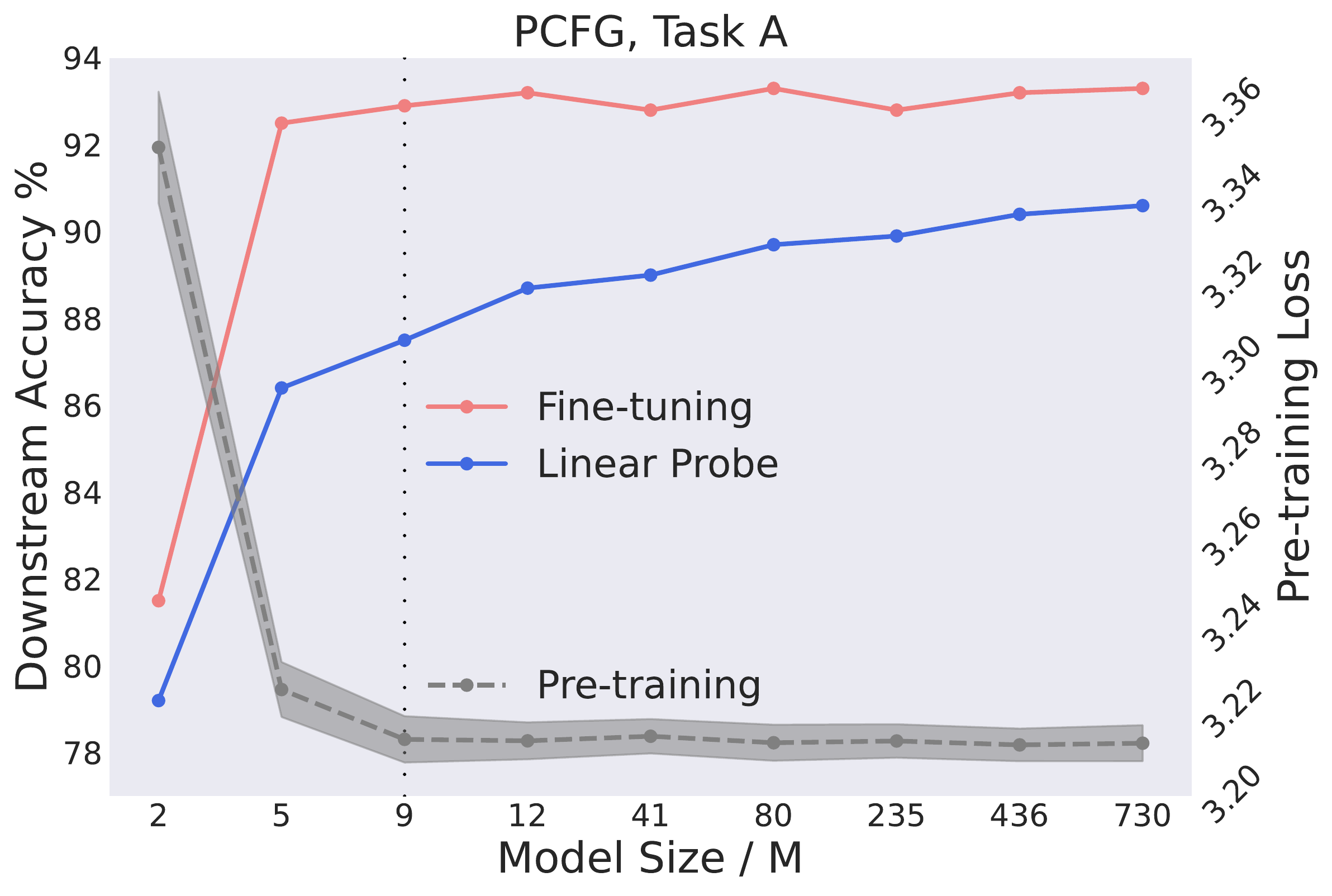} 
    }
     \hfill 
    \subfigure[OPT$\rightarrow$SST-2]{
    \includegraphics[width=0.315\textwidth]{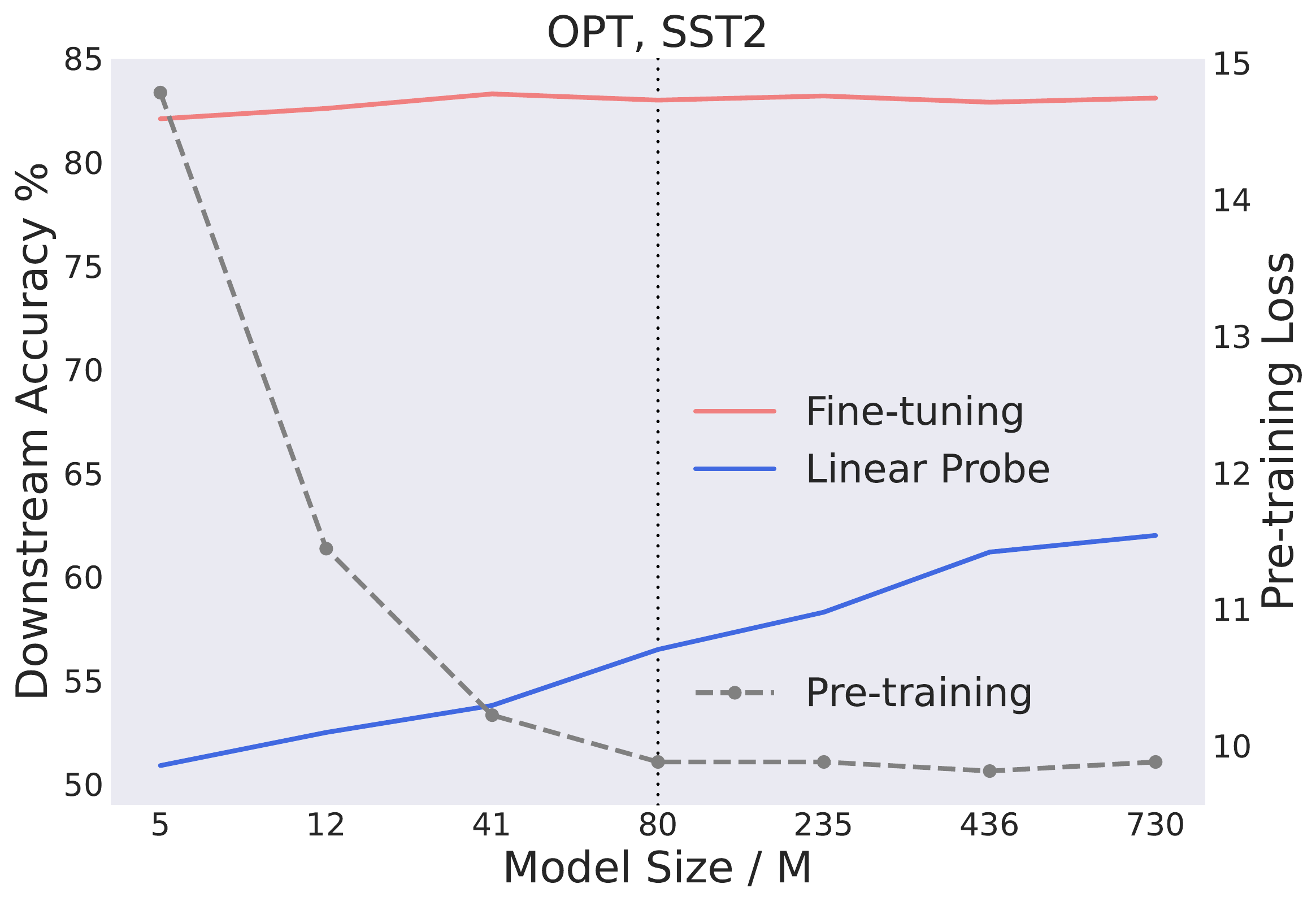}
    }
    \hfill
    \subfigure[HMM$\rightarrow$Task-6]{
    \includegraphics[width=0.315\textwidth]{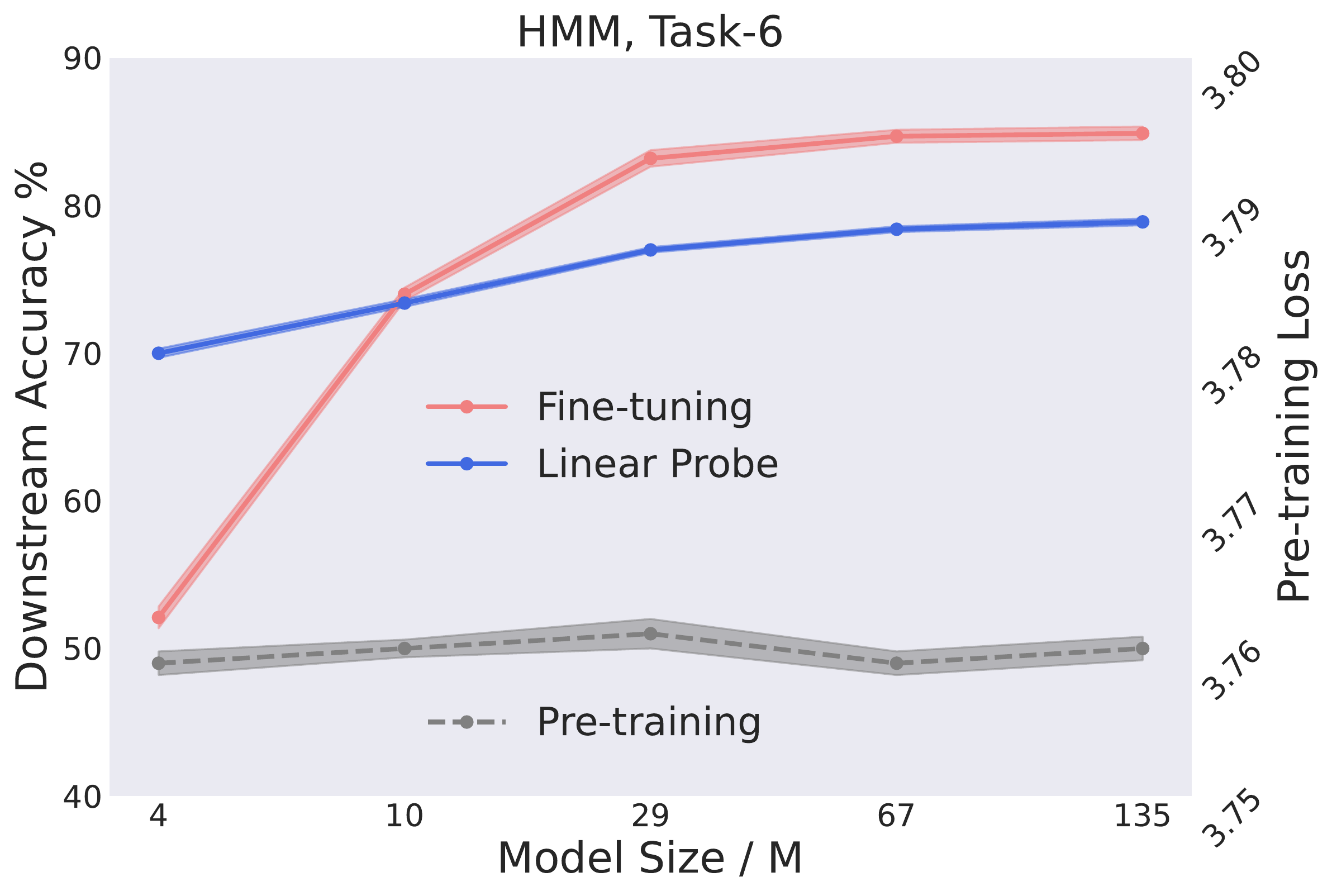}
    }
  \caption{The downstream accuracy continues to rise as we increase the model size, althought the pre-training loss remains unchanged.
  } 
  \label{figa4}
\end{figure}

\subsection{Evaluation of Pre-training Loss with KL Divergence.}\label{sec:kl}
Since we have access to the true conditional probability from the generative models, we can decompose the cross entropy into the KL divergence between prediction and true conditional probability plus the entropy of the true conditional probability. When comparing different models in the saturation regime, we can use the KL divergence to reduce the variance of the loss evaluation. We provide the results with KL divergence evaluation in Figure~\ref{figa5}. Even with the log of the KL divergence as the pre-training loss, we can still observe the models are saturating as training proceeds or as we scale up the models.

\begin{figure}[h]
  \centering
   \subfigure[PCFG$\rightarrow$Task B]{
    \includegraphics[width=0.315\textwidth]{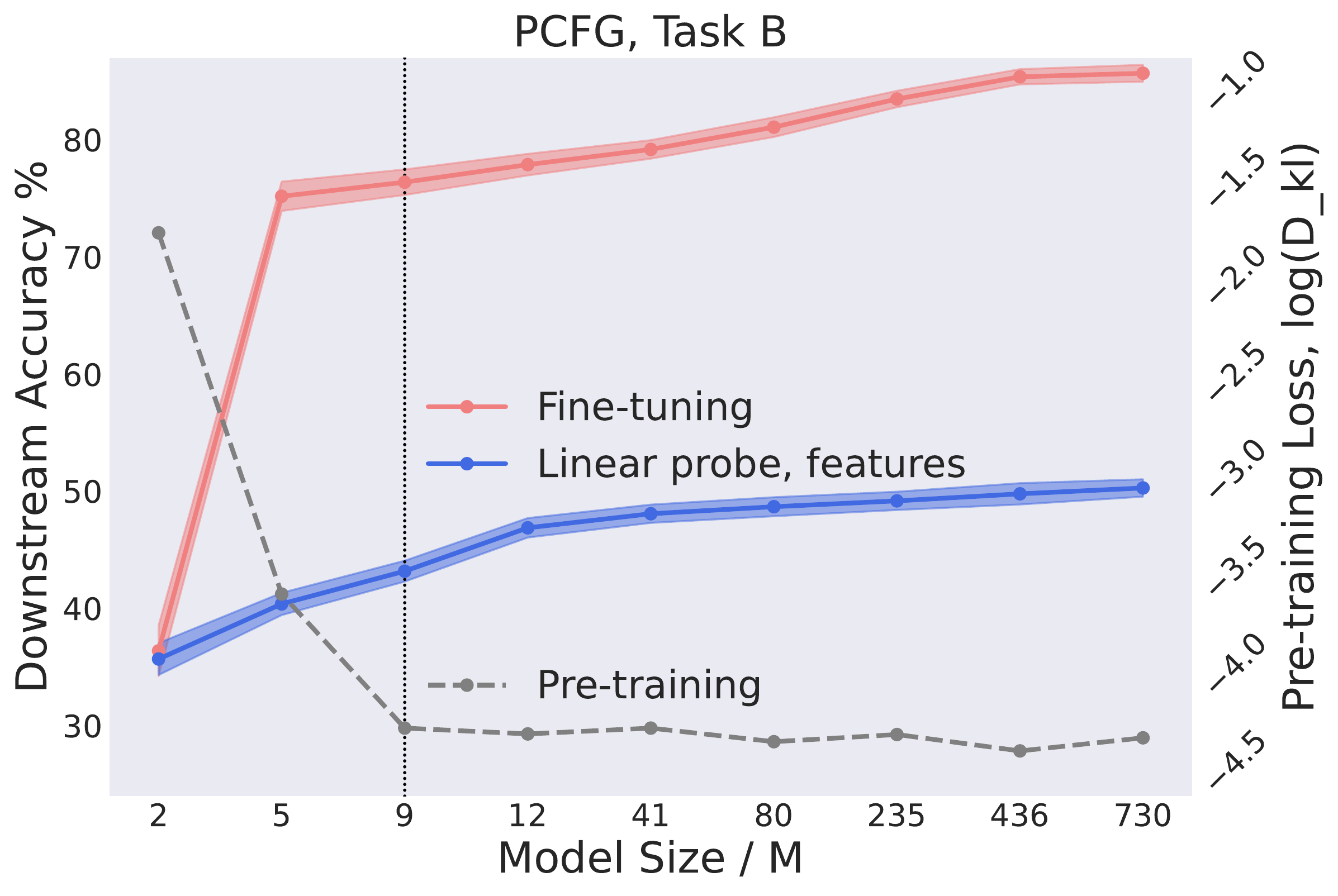} 
    }
     \hspace{5pt} 
    \subfigure[PCFG$\rightarrow$Task C]{
    \includegraphics[width=0.41\textwidth]{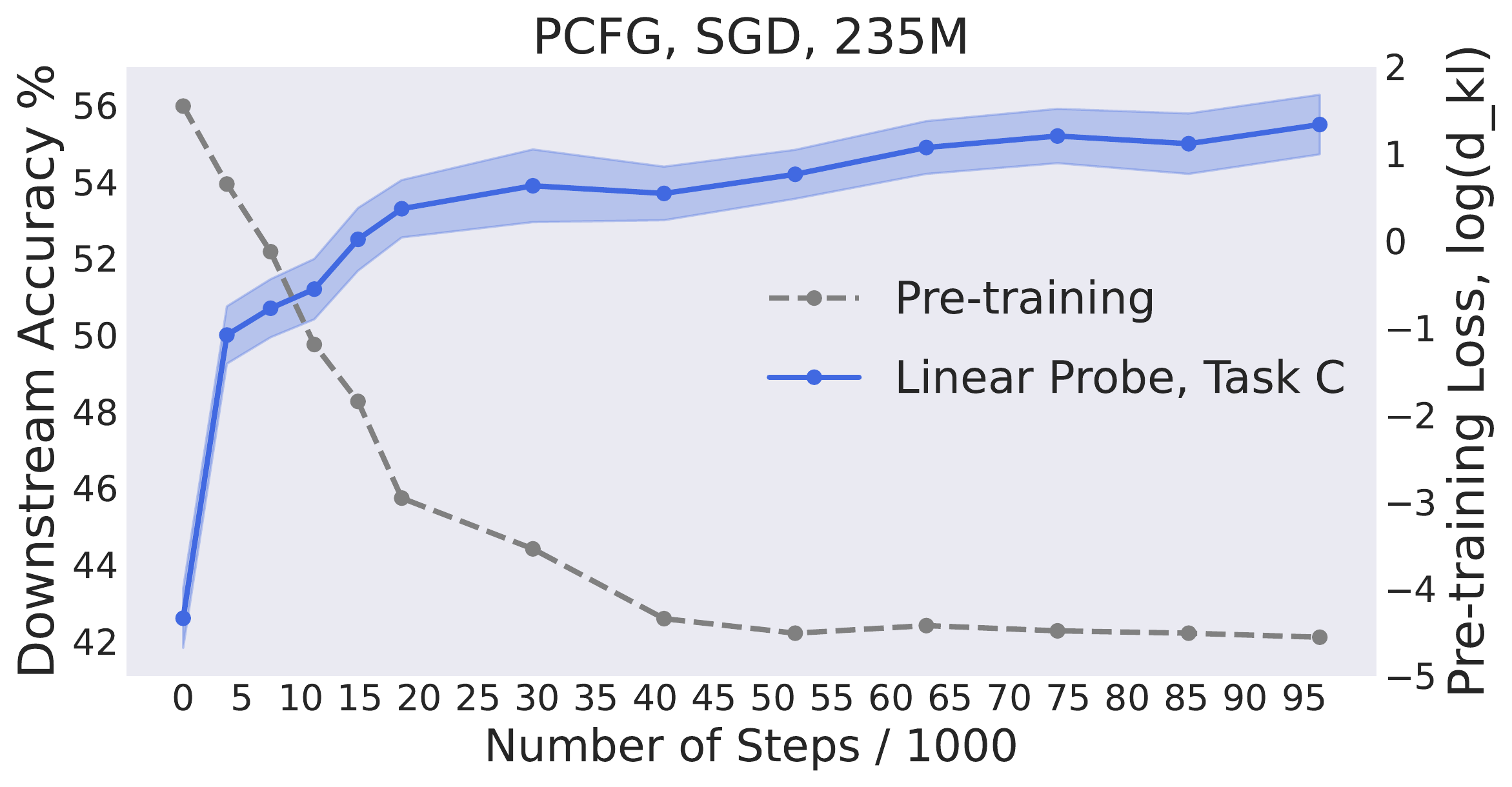}
    }
  \caption{Results with KL divergence evaluation.
  } 
  \label{figa5}
\end{figure}

\newpage
\section{Details in Section~\ref{sec4} }\label{sec:appendix4}

\subsection{Unbiased Estimate of the Trace of Hessian.}\label{sec:estimate} Evaluating the trace of Hessian requires the norm of the Jacobian $\nabla_{\lmparam}  \log [f_\lmparam(x_{-t})]$. Since the output dimension $c$ and the number of parameters are all very large, computing the Jacobian $\nabla_{\lmparam}  \log [f_\lmparam(x_{-t})]$ will be very inefficient. Instead, we can estimate the trace of Hessian unbiasedly with random samples as follows. Suppose $f_\theta(x_{-t})$ is the predicted probability of the conditional probability. In the saturation regime, as $f_\theta(x_{-t})$ approaches the true conditional probability, the Hessian of the pre-training loss w.r.t. the parameters can be expressed as
\begin{align*}
    \nabla^2 L(\lmparam) =\Exp_{t,x_{-t}} \Exp_{x_t\mid t,x_{-t}}\left[\nabla_{\lmparam}  \log [f_\lmparam(x_{-t})]_{x_t} (\nabla_{\lmparam}  \log [f_\lmparam(x_{-t})]_{x_t})^\top\right].
\end{align*}
Therefore we have 
\begin{align*}
    \mathrm{Tr}(\nabla^2 L(\lmparam)) =\Exp_{t,x_{-t}} \Exp_{x_t\mid t,x_{-t}}\left\|\nabla_{\lmparam}  \log [f_\lmparam(x_{-t})]_{x_t} \right\|_2^2.
\end{align*}

To approximate this expectation, we can first sample $t,x_{-t}$ from the language, then draw i.i.d. samples $x_t$ from $(f_\theta(x_{-t}))$, and use the average as the unbiased estimate. For all experiments, we sample 10000 $x_{-t}$ and sample 50 $x_t$ for each $x_{-t}$.

\subsection{Details in Figure~\ref{fig3}.} To verify Theorem~\ref{thm:mlm_sgd_implicit_bias} that SGD biases the model towards flatter minima, we conduct MLM on PCFG and HMM-generated datasets with SGD. We set the proportion of warmup stage to 12$\%$ total number of steps, and fix the learning rate to 1e-3 after the warmup. We evaluate the downstream performance and the trace of Hessian of different checkpoints along pre-training. The standard deviation of trace of Hessian is calculated based on 5 times of sampling 50 examples as mentioned above. Apart from the PCFG task C and HMM task-10, we also provide results on PCFG tasks A, B and HMM task-6 in Figure~\ref{figa2}.

\begin{figure}[h]
  \centering
   \subfigure[PCFG, SGD, Task A, 235M]{
    \includegraphics[width=0.315\textwidth]{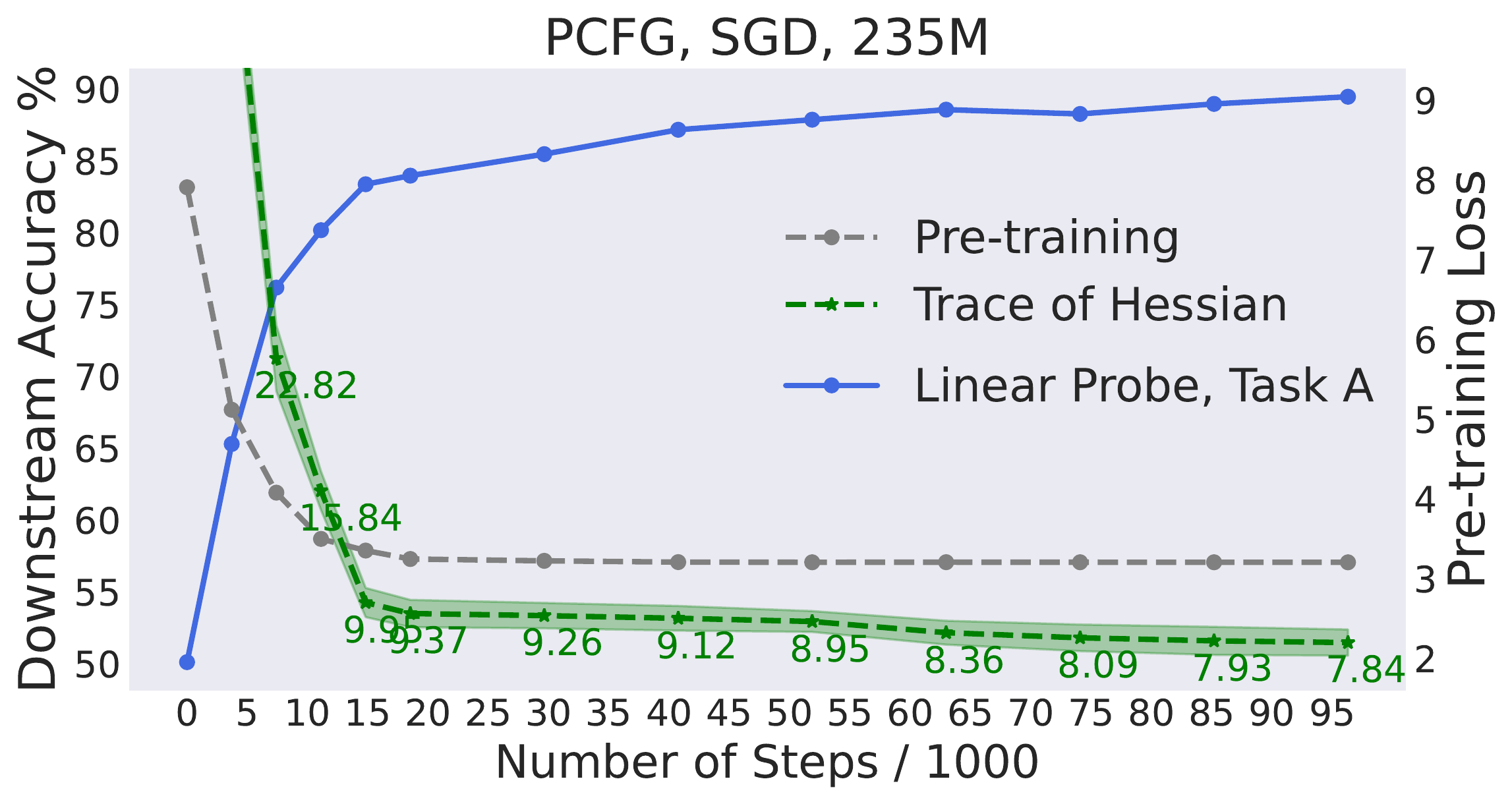} 
    }
     \hfill
    \subfigure[PCFG, SGD, Task B, 235M]{
    \includegraphics[width=0.315\textwidth]{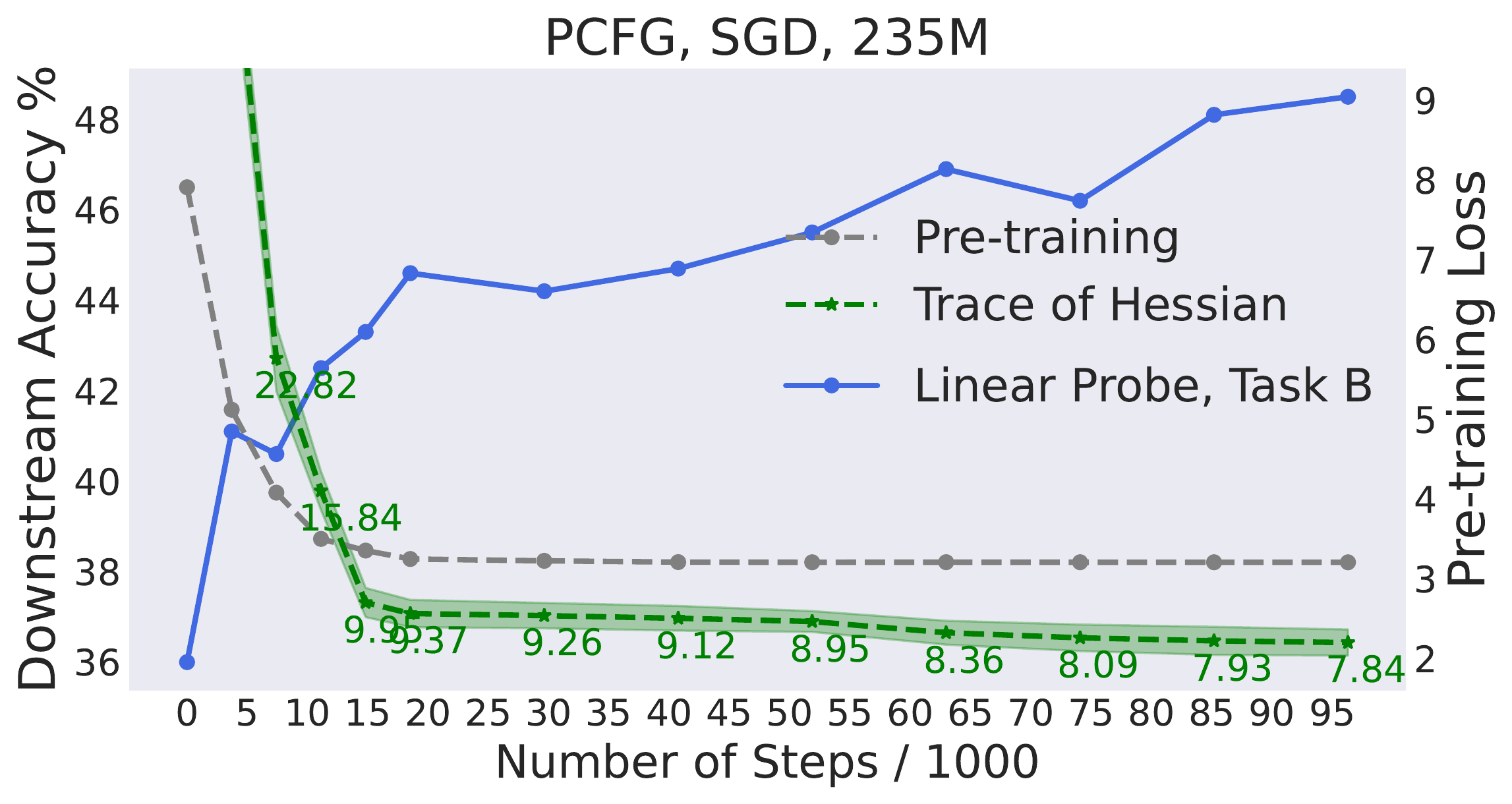} 
    }
     \hfill
    \subfigure[HMM, SGD, Task-6,67M]{
    \includegraphics[width=0.315\textwidth]{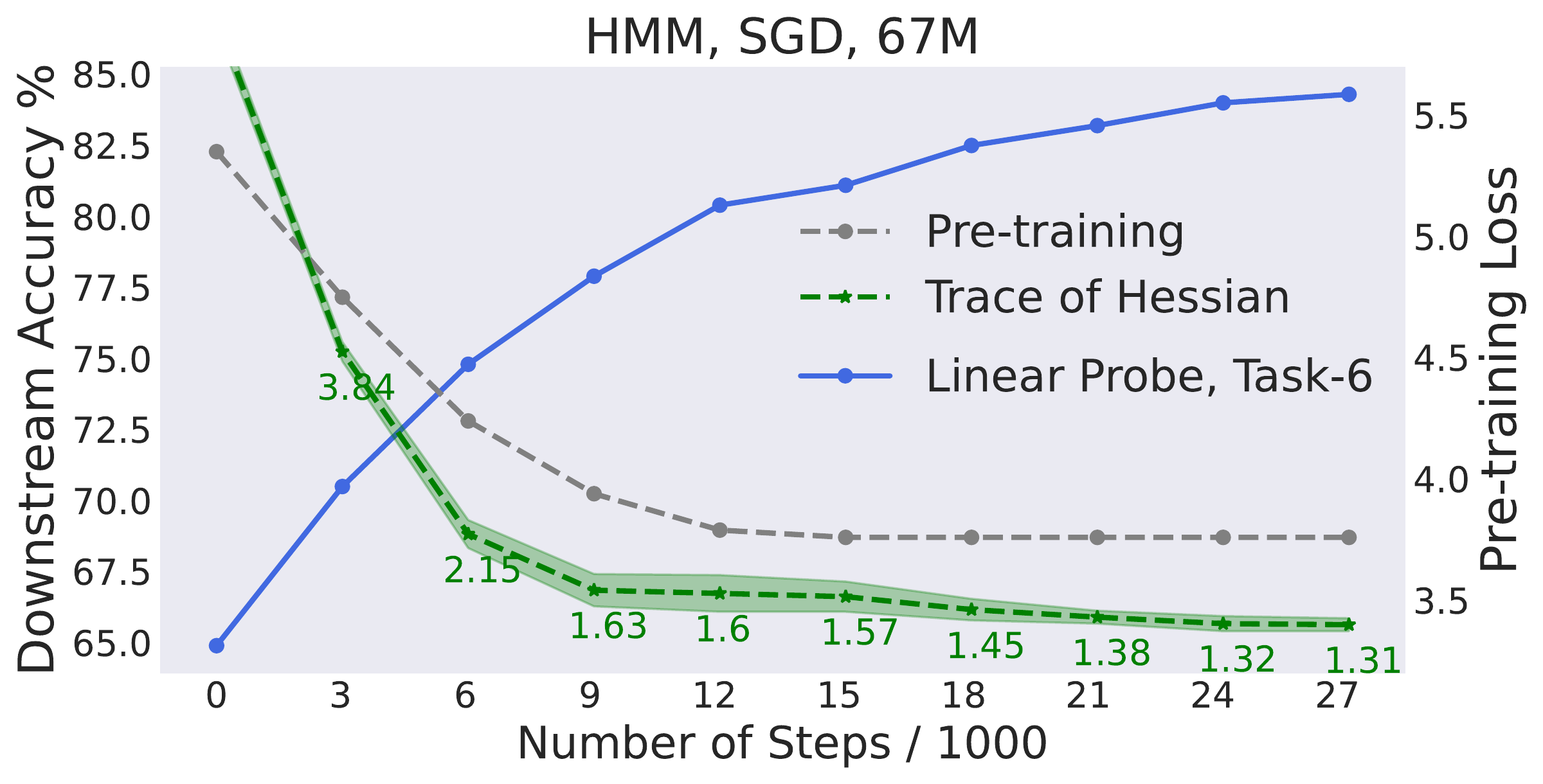}
    }
  \caption{The trace of Hessian correlates with downstream performance for model checkpoints with different number of steps after the pre-training loss converges. 
  } 
  \label{figa2}
\end{figure}

\subsection{Details in Figure~\ref{fig4}.}
To compare the trace of Hessian of transformers with different sizes, we need to embed the smaller transformers into larger transformers. However, this requires the width of the larger transformers to be a multiple of the width of the smaller transformers. We set the width of the largest transformer to the least common multiple of the width of the smaller transformers as in Table~\ref{fig:size4}. For evaluation, we still use linear probe, and follow the setting of Section~\ref{sec2}.

\begin{table}[h]
\addtolength{\tabcolsep}{0pt} 
\centering 
\caption{Shape of the transformers in Figure~\ref{fig4}.} 
\label{fig:size4}
\begin{small}
\begin{tabular}{c|c|c|c}
\toprule d-model & n-nead & $\#$layers & d-inter\\
\midrule
128 & 2 & 4 & 512 \\
192 & 3 & 5 & 768 \\
320 & 5 & 6 & 1024 \\
384 & 6 & 7 & 2048 \\
640 & 10 & 12 & 3072 \\
960 & 15 & 16 & 4096 \\
1920 & 30 & 20 & 5120 \\
\bottomrule
\end{tabular}
\end{small}
\end{table}

\subsection{Embedding of a Smaller Transformer into a Larger Transformer.}\label{sec:embed} In this subsection, we show that a smaller transformer can be embed into a larger transformer without changing the functionality. We enable the embedding by considering two techniques (1) adding additional layers using residual connections without changing the functionality and (2) increasing feature dimension / adding more attention heads without change the functionality by duplicating the weights. 
\subsubsection{The Base Case with MLPs.} To gain some insights of how to increase the feature dimension without changing the functionality, we start with vanilla MLPs without layer-norm~\cite{ba2016layer} and residual connections~\cite{he2016deep}. Consider a multi-layer MLP $f_{W,a}(x)$. The weight matrices are $W=[W_0, \ldots, W_{L-1}]$.The dimensionality is $W_l \in \mathbb{R}^{d_{l+1}\times d_l}$. The representations are defined recursively, $h_{l+1}(x) = \sigma(W_l h_l(x))$. We denote the input by $h_0(x) = x$ and the final output is defined as $f_{W,a}(x) = a^\top h_L(x)$. The activation $\sigma$ here is relu or leaky relu. We aim to embed $f_{W,a}(x)$ into $f_{\tilde W,\tilde a}(x)$, where $\tilde W=[\tilde W_0, \ldots,\tilde W_{L-1}]$ and $\tilde W_l \in \mathbb{R}^{2d_{l+1}\times 2d_l}$. We need to make sure $f_{W,a}(x) = f_{\tilde W, \tilde a}(x)$ for all $x$.

For this case, we can set $\tilde W_l = 1/2\begin{bmatrix} W_l  & W_l  \\  W_l  & W_l \end{bmatrix}$ for $l\in[L-1]$, $\tilde W_0 = \frac{1}{\sqrt{2}}\begin{bmatrix} W    \\  W  \end{bmatrix}$, and $\tilde a = \frac{1}{\sqrt{2}}\begin{bmatrix} a    \\  a  \end{bmatrix}$.
We can verify that $\tilde h_l(x) = \frac{1}{\sqrt{2}}\begin{bmatrix} h_l(x)    \\  h_l(x)  \end{bmatrix}$ inductively. Therefore we have
\begin{align*}
f_{\tilde W, \tilde a}(x) =& \tilde a^\top \tilde h_L(x) \\
=&  \frac{1}{\sqrt{2}}[a^\top,   a^\top]  \frac{1}{\sqrt{2}} \begin{bmatrix} h_L(x)    \\  h_L(x)  \end{bmatrix}\\
=& \frac{1}{2}f_{W,a}(x) + \frac{1}{2}f_{W,a}(x)\\
=&f_{W,a}(x).
\end{align*}

\subsubsection{Real Transformers.} Next we turn to transformers with residual connections and layer norm. We first use the same strategy as the MLP case to add additional feature dimension and attention heads by replicating the weights, and then show how to add new layers using residual connections. At a high level, replicating the weight maintains the mean and the variance calculated by the layernorm. Therefore the representations inside the transformer also get replicated, without changing the values in each of the replicated groups.

\paragraph{Setup.} A transformer is composed of an input embedding $W_E$, $L$ blocks of self-attention, and an output layer. Transformers also contain layer-norm and residual connections. Suppose the input $x = [x_1, \ldots, x_T]$, where $x_i\in \Real^{d}$. Each block of the self-attention contains of an attention layer and an MLP layer, both equipped with residual connections and layer-norm. Let us denote by $[h_0(x)]_i = \mathrm{LN}(W_E x_i)$ the input embeddings. Suppose the hidden size is $d_h$, i.e. $[h_l(x)]_i\in \Real^{d_h}$. The attention layer is defined as $[v_{l}(x)]_i = \mathrm{LN}([h_l(x)]_i + [\text{Attn}_l(h_l(x))]_i)$, and the MLP layer is defined as $[h_{l+1}(x)]_i = \mathrm{LN}([v_{l}(x)]_i + U_{l}\sigma(W_{l}[v_{l}(x)]_i+b_{l}))$. The activation $\sigma$ is GeLU. The final output is $[f(x)]_i = W_E^\top [h_L(x)]_i$. Note that $W_E$ is both the input embedding and the weight of the output layer. They are tied in training. 

The layer norm is on the feature dimension. $[\mathrm{LN}(x_i)]_j = \gamma_j * \hat x_{ij} + \beta_j$. $\hat x_i$ is the normalized version of $x_i$ with zero mean and unit variance. $\gamma$ and $\beta$ are trainable.

The multi-head attention consists of $n_h$ self-attention heads. The definition of the multi-head attention is $\text{Attn}_l(h_l(x)) = [(A_{l1}h_l(x)V_{l1})^\top, \ldots, (A_{ln_h}h_l(x)V_{ln_h})^\top]^\top O_l$. The output matrix $O_l \in \Real^{d_h \times d_h}$. The attention heads composes of the attention score times the feature matrix times the value matrix. The attention score $A_{lk}\in\Real^{d' \times d'}$ is computed with softmax dot product. For each $k\in[n_h]$, $A_{lk} = \text{softmax}(h_l(x)Q_{lk}K_{lk}^\top h_l(x)^\top)$. 

Following the implementation of~\citet{devlin2018bert}, the dimension of the attention head is always $d' = 64$, thus $d_h = 64 n_h$. The dimension of the intermediate layer in the MLP is set to $4d_h$, which means $U_l \in \Real^{d_h \times 4d_h}$ and $W_l \in \Real^{4d_h \times d_h}$.

We aim to embed the smaller transformer $f(x)$ into $\tilde f(x)$, where $\tilde d_h = 2 d_h$, $\tilde n_h = 2 n_h$, and $\tilde L = L+L'$.

\paragraph{Increasing feature dimension with replication of the parameters.} Although the transformers have layer-norm and residual connections, we can still modify the strategy in the base case with MLPs slightly to increase the width of the model and the number of attention heads without changing the functionality. Consider the following weight replication method. For $l\in[0,\ldots,L-1]$,
\begin{align*}
\tilde W_E = \frac{1}{2}\begin{bmatrix} W_E    \\  W_E  \end{bmatrix}, \ \tilde \gamma_l = \begin{bmatrix} \gamma_l    \\  \gamma_l  \end{bmatrix} , \ \tilde b_l = \begin{bmatrix} b_l    \\  b_l  \end{bmatrix} , \ \tilde \beta_l = \begin{bmatrix} \beta_l    \\  \beta_l  \end{bmatrix}, \ \tilde W_l = \frac{1}{2} \begin{bmatrix} W_l  & W_l  \\  W_l  & W_l \end{bmatrix}, \ \tilde U_l = \frac{1}{2} \begin{bmatrix} U_l  & U_l  \\  U_l  & U_l \end{bmatrix}, \ \tilde O_l = \frac{1}{2} \begin{bmatrix} O_l  & O_l  \\  O_l  & O_l \end{bmatrix},\end{align*}
\begin{align*}
 \ \tilde Q_{lk} = \frac{1}{2}[Q_{lk},Q_{lk}], \ \tilde K_{lk} = \frac{1}{2}[K_{lk},K_{lk}], \ \tilde V_{lk} = \frac{1}{2}[V_{lk},V_{lk}] \  \textup{for} \ k \in [1,\ldots ,n_h], 
\end{align*}
\begin{align*}
 \ \tilde Q_{lk} = \frac{1}{2}[Q_{l(k - n_h)},Q_{l(k - n_h)}], \ \tilde K_{lk} = \frac{1}{2}[K_{l(k - n_h)},K_{l(k - n_h)}], \ \tilde V_{lk} = \frac{1}{2}[V_{l(k - n_h)},V_{l(k - n_h)}] \  \textup{for} \ k \in [n_h+1,\ldots ,2n_h]. 
\end{align*}

We observe that the intermediate layers of the transformers are also replicated for the first $L$ blocks, i.e. $\tilde h_l(x) = \begin{bmatrix} h_l(x)   \\  h_l(x)  \end{bmatrix}$ and $\tilde v_l(x) = \begin{bmatrix} v_l(x)   \\  v_l(x)  \end{bmatrix}$ for $l\in [1,\ldots,L]$. First note that $[h_0(x)]_i = \mathrm{LN}(W_E x_i)$. Since replicating the features will not change the mean and the variance, we have $\tilde h_0(x) = \begin{bmatrix} h_0(x)   \\  h_0(x)  \end{bmatrix}$. Then we can show that replicating the features will not change the attention scores as well. This makes $\tilde v_l(x) = \begin{bmatrix} v_l(x)   \\  v_l(x)  \end{bmatrix}$. Finally note that we can apply the base case of the MLP to reason about the MLP layer, and show $\tilde h_{l+1}(x) = \begin{bmatrix} h_{l+1}(x)   \\  h_{l+1}(x)  \end{bmatrix}$. Therefore we have shown $\tilde h_l(x) = \begin{bmatrix} h_l(x)   \\  h_l(x)  \end{bmatrix}$ inductively.

\paragraph{Adding additional layers using residual connections.} We have demonstrated that $\tilde h_l(x) = \begin{bmatrix} h_l(x)   \\  h_l(x)  \end{bmatrix}$ for $l\in [0,\ldots,L]$. Now let's consider the added $L'$ blocks on top of the small model. Since the transformer contains residual connections, we can add new blocks on top of a small model and fill in zeros to the added parameters. We will show that in this way, $\tilde h_l(x)=\tilde h_L(x)$, for any $l \in [L,\ldots,L+L']$. This will indicate that $\tilde h_{L + L'}(x)=\tilde h_L(x) = \begin{bmatrix} h_l(x)    \\  h_l(x)  \end{bmatrix}$. Recall that $\tilde W_E = \frac{1}{2}\begin{bmatrix} W_E    \\  W_E  \end{bmatrix}$. This indicate that 
\begin{align*}
[\tilde f(x)]_i =& \tilde W_E^\top \tilde h_{L+L'}(x) \\
=&  \frac{1}{2}[W_E^\top,   W_E^\top]  \begin{bmatrix} h_L(x)    \\  h_L(x)  \end{bmatrix}\\
=& \frac{1}{2}[f(x)]_i + \frac{1}{2}[f(x)]_i\\
=&[f(x)]_i,
\end{align*}
which means we can add new layers on top of a small transformer without changing the functionality.

Now we show that $U_l=0$ and $O_l=0$  for $l\in [L, \ldots L+L'-1]$ will make $\tilde h_l(x)=\tilde h_L(x)$, for any $l \in [L,\ldots,L+L']$. This holds because $[v_{l}(x)]_i = \mathrm{LN}([h_l(x)]_i + [\text{Attn}_l(h_l(x))]_i)$ and $[h_{l+1}(x)]_i = \mathrm{LN}([v_{l}(x)]_i + U_{l}\sigma(W_{l}[v_{l}(x)]_i+b_{l}))$. If $O_l=0$, we have $v_{l}(x) = h_l(x)$ from the first equation. If $U_l=0$, we have $h_{l+1}(x) = v_{l}(x)$ from the second equation. Therefore we have $\tilde h_l(x)=\tilde h_L(x)$, for any $l \in [L,\ldots,L+L']$. 

\subsubsection{Viewing a Small Transformer as a Special Case of a Large Transformer}

As demonstrated above, smaller transformers can be embedded into larger transformers with functionality preserved. The smaller transformer architecture can therefore be viewed as a subset of the larger transformer architecture. In this sense, a set of transformers with different sizes and the same pre-training loss found in Section~\ref{sec2} can be viewed as a set of transformers with the same size after the embedding. Note that the training algorithm only finds out the natural larger models, instead of the larger models which are embedded from the smaller models. This indicates that the implicit bias of the optimizer can interact with the model architecture. The implicit bias drives the model toward flat minima on both larger models and smaller models. The smaller transformer architecture is a subset of the larger transformer architecture, thus the flattest minima found with a larger transformer is flatter than the minima found with a smaller transformer. (See Figure~\ref{fig:setting}).

\newpage\section{Omitted Proofs in Section~\ref{sec3}}\label{sec:proofs}

\begin{proof}[Proof of Lemma~\ref{lem:covariance_equal_to_hessian_trace}]
We first recall loss 
$$L(\lmparam) = \Exp_{x,t}\left[ -\log\  [f_\lmparam(x_{-t})]_{x_t}\right] = \Exp_{t, x_{-t}} \Exp_{x_t\mid t, x_{-t}}\left[ -\log\  [f_\lmparam(x_{-t})]_{x_t}\right].$$

Note that conditioned on any $x_{-t},t$, it holds that 
\begin{align*}
    &\Exp_{x_t\mid t, x_{-t}}\left[ - \nabla_{\lmparam} ^2 \log\  [f_\lmparam(x_{-t})]_{x_t}\right]\\
= & \Exp_{x_t\mid t, x_{-t}}\left[ - \frac{\nabla_{\lmparam} ^2  [f_\lmparam(x_{-t})]_{x_t}}{[f_\lmparam(x_{-t})]_{x_t}}\right] 
+ \Exp_{x_t\mid t, x_{-t}}\left[  \frac{\nabla_{\lmparam}   [f_\lmparam(x_{-t})]_{x_t} (\nabla_{\lmparam}   [f_\lmparam(x_{-t})]_{x_t})^\top} {[f_\lmparam(x_{-t})]_{x_t}^2}\right]\\
= & 0 + \Exp_{x_t\mid t, x_{-t}}\left[  \nabla_{\lmparam}  \log [f_\lmparam(x_{-t})]_{x_t} (\nabla_{\lmparam}  \log [f_\lmparam(x_{-t})]_{x_t})^\top\right],
\end{align*}
where in the last step, we use the assumption that $\lmparam\in\Gamma$, that is, for all $x,t$, $f_\lmparam(x_{-t}) = \prb{\cdot\mid x_{-t}}$, which implies the following 
\begin{align*}
     \Exp_{x_t\mid t, x_{-t}}\left[ - \frac{\nabla_{\lmparam} ^2  [f_\lmparam(x_{-t})]_{x_t}}{[f_\lmparam(x_{-t})]_{x_t}}\right] = -\sum_{x_t=1}^{c} \nabla_{\lmparam} ^2  [f_\lmparam(x_{-t})]_{x_t} =   -\nabla_{\lmparam} ^2\sum_{x_t=1}^{c}   [f_\lmparam(x_{-t})]_{x_t} = - \nabla_{\lmparam}^2 1 =0.
\end{align*}

Since $\lmparam$ is a global minimizer of $L$, we have that $\nabla L(\lmparam) =  \Exp_{t,x} \nabla_{\lmparam}  \log [f_\lmparam(x_{-t})]_{x_t} =0$. Therefore, we have that
\begin{align*}
    \Sigma(\lmparam) = &\Exp_{t,x}\left[\nabla_{\lmparam}  \log [f_\lmparam(x_{-t})]_{x_t} (\nabla_{\lmparam}  \log [f_\lmparam(x_{-t})]_{x_t})^\top\right] \\
     & -   \Exp_{t,x} \nabla_{\lmparam}  \log [f_\lmparam(x_{-t})]_{x_t} \left( \Exp_{t,x} \nabla_{\lmparam}  \log [f_\lmparam(x_{-t})]_{x_t} \right)^\top \\
    = & \Exp_{t,x_{-t}} \Exp_{x_t\mid t,x_{-t}}\left[\nabla_{\lmparam}  \log [f_\lmparam(x_{-t})]_{x_t} (\nabla_{\lmparam}  \log [f_\lmparam(x_{-t})]_{x_t})^\top\right]\\
    = &  \Exp_{t, x_{-t}} \nabla^2_\lmparam \left(\Exp_{x_t\mid t, x_{-t}}\left[ -\log\  [f_\lmparam(x_{-t})]_{x_t}\right]\right) \\
    = & \nabla^2 L(\lmparam),
\end{align*}
which completes the proof.
\end{proof}

\newpage\section{Omitted Proofs in Section~\ref{sec5}}\label{sec:proofs_sec5}
\subsection{Omitted Proofs of Theorem~\ref{thm:trace_of_hessian_gives_sparse_solution}}
Recall that the loss function of MLM is $L(\psi,u)=\Exp_{x\sim P}[(f_{\psi,u}(x)-g^*(x))^2]$. In downstream adaptation, we have access to a finite dataset $\{x^{(i)}\}_{i=1}^{n}$ sampled i.i.d. from $\Pds$. 
The training loss is $\widehat{L}^{\Pds}(\psi,u)=\frac{1}{n}\sum_{i=1}^{n}[(f_{\psi,u}(x^{(i)})-g^*(x^{(i)}))^2]$, and the population loss for the downstream task is $L^{\Pds}(\psi,u)=\Exp_{x\sim \Pds}[(f_{\psi,u}(x)-g^*(x))^2]$.
\begin{proof}[Proof of Theorem~\ref{thm:trace_of_hessian_gives_sparse_solution}] 
We first calculate the trace of Hessian of the pre-training loss and then derive a lower bound for it in Lemma~\ref{lem:lower_bound_of_tr_h}. We then show that the lower bound can be achieved only if the output of the attention are in one direction for all the downstream input in Lemma~\ref{lem:equality_conditions}. This translates to constant sample complexity for the downstream task.

\begin{lemma}\label{lem:lower_bound_of_tr_h}
Denote by $h_{Q,K}(x)=\sum_{j=1}^{T}a_jx_j$ the output of the attention head. In the setting of Theorem~\ref{thm:trace_of_hessian_gives_sparse_solution}, $I_{+}=\{i\in[m] \mid u_i>0\}$ and $I_{-}=\{i\in[m] \mid u_i<0\}$. The trace of Hessian can be lower bounded,
\begin{align*}
    \textup{Tr}[\nabla^2_\psi L(\psi,u)] + \textup{Tr}[\nabla^2_u L(\psi,u)] \ge \sqrt{\frac{4}{T}},
\end{align*}
where the lower bound is achieved if and only if the following conditions are satisfied,
\begin{align}
    \forall \ i\in[m],  x\in\{x \mid V_i^\top h_{Q,K}(x) > 0\}, \quad  V_i^\top h_{Q,K}(x) = |u_i| \norm{h_{Q,K}(x)}_2\label{eqn:same_value_activate},
\end{align}
\begin{align}
    \forall x,\ i\in I_{+},\ i'\in I_{-},\quad  V_i^\top x V_{i'}^\top x \le 0\label{eqn:no_cancellation_neuron},
\end{align}
\begin{align}
    \forall x,\  j\in[T],\quad  a_j=\frac{1}{T} \label{eqn:uniform_attention}.
\end{align}
\end{lemma}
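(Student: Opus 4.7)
\textbf{Proof Proposal for Lemma \ref{lem:lower_bound_of_tr_h}.} The plan is to compute each of the two traces explicitly at a zero-loss solution, then decompose the per-sample squared gradient norm neuron-by-neuron, and apply three inequalities (AM--GM across parameter groups, triangle inequality across neurons, Cauchy--Schwarz across attention weights) whose tightness conditions match conditions \eqref{eqn:same_value_activate}--\eqref{eqn:uniform_attention} respectively. The key identity I would use is that, since $L$ is a squared loss and $L(\psi,u)=0$ forces $f_{\psi,u}(x)=g^\star(x)$ almost surely under $P$, the Gauss--Newton term in the Hessian vanishes and one has
\begin{equation*}
\mathrm{Tr}[\nabla^2_\psi L(\psi,u)]+\mathrm{Tr}[\nabla^2_u L(\psi,u)]
= 2\,\Exp_{x\sim P}\bigl[\|\nabla_\psi f_{\psi,u}(x)\|_2^2 + \|\nabla_u f_{\psi,u}(x)\|_2^2\bigr].
\end{equation*}
Writing $z=h_{Q,K}(x)$, the first-layer chain rule gives $\nabla_{u_i}f = \tfrac{1}{m}\sigma(V_i^\top z)$ and $\nabla_{V_i}f = \tfrac{u_i}{m}\mathbbm{1}[V_i^\top z>0]\,z$, so the sum under the expectation is at least (dropping the $Q,K$ contributions, which are non-negative)
\begin{equation*}
\sum_{i=1}^{m}\Bigl(\tfrac{u_i^2}{m^2}\mathbbm{1}[V_i^\top z>0]\|z\|_2^2 \;+\; \tfrac{1}{m^2}\sigma(V_i^\top z)^2\Bigr).
\end{equation*}

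Next I would apply AM--GM to each summand to obtain $\tfrac{u_i^2}{m^2}\mathbbm{1}[V_i^\top z>0]\|z\|_2^2 + \tfrac{1}{m^2}\sigma(V_i^\top z)^2 \;\ge\; \tfrac{2}{m^2}|u_i|\,\sigma(V_i^\top z)\,\|z\|_2$, where equality forces $V_i^\top z = |u_i|\|z\|_2$ on the activation set---this is exactly condition \eqref{eqn:same_value_activate}. Summing and applying the triangle inequality across neurons yields
\begin{equation*}
\sum_i |u_i|\sigma(V_i^\top z) \;=\; \sum_{i\in I_+}u_i\sigma(V_i^\top z) - \sum_{i\in I_-}u_i\sigma(V_i^\top z) \;\ge\; \Bigl|\sum_i u_i\sigma(V_i^\top z)\Bigr| \;=\; m\,|f_{\psi,u}(x)| \;=\; m\,|g^\star(x)|,
\end{equation*}
with equality iff no $i\in I_+$ and $i'\in I_-$ are simultaneously activated, which I would argue is equivalent to condition \eqref{eqn:no_cancellation_neuron} by using the linearity $V_i^\top z = \sum_j a_j V_i^\top x_j$ and the fact that the pre-training support of $x$ is rich enough to force the sign-disjointness to hold token-by-token. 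Finally I invoke Cauchy--Schwarz on the orthogonal one-hot encodings: $\|z\|_2^2 = \sum_j a_j^2 \|x_j\|_2^2 = \sum_j a_j^2 \ge (\sum_j a_j)^2 / T = 1/T$, so $\|z\|_2 \ge 1/\sqrt{T}$ with equality iff $a_j = 1/T$ for all $j$, which is condition \eqref{eqn:uniform_attention}.

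Chaining these bounds, using $|g^\star(x)| = 1$ on the pre-training distribution (one bracket is always masked), and taking the expectation yields the stated lower bound. For the equality direction, I would verify that at a point simultaneously satisfying \eqref{eqn:same_value_activate}--\eqref{eqn:uniform_attention} the dropped $Q,K$ gradients also vanish: under uniform attention the softmax sits at a symmetric point, and condition \eqref{eqn:same_value_activate} coupled with \eqref{eqn:no_cancellation_neuron} makes the derivative of $f$ with respect to each attention weight collapse to a quantity proportional to $g^\star(x)$, which is killed by the softmax Jacobian's orthogonality to $\mathbf{1}_T$. The main obstacle I anticipate is this last point---showing that the $(Q,K)$ contribution to $\|\nabla_\psi f\|_2^2$ is exactly zero at the putative minimizer, so that the chain of lower bounds is saturated simultaneously; the condition $T\ge 6$ in the hypothesis likely enters here to ensure enough distinct pre-training samples to rule out accidental cancellations, and the $m\ge 2$ hypothesis ensures at least one positive and one negative neuron are available to represent the sign-changing target $g^\star$.
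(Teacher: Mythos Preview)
Your proposal is correct and follows essentially the same chain of inequalities as the paper: reduce the trace of the Hessian to $\Exp\|\nabla f\|_2^2$ at a zero-loss point, drop the $Q,K$ contribution, apply AM--GM per neuron (giving~\eqref{eqn:same_value_activate}), apply the triangle inequality across neurons (giving~\eqref{eqn:no_cancellation_neuron}), and finally bound $\|h_{Q,K}(x)\|_2\ge 1/\sqrt{T}$ using that the attention weights lie on the simplex and the token encodings are orthonormal (giving~\eqref{eqn:uniform_attention}). Two small remarks: the paper does \emph{not} verify that the $Q,K$ gradients vanish at the minimizer---it simply records ``$\nabla_Q f=\nabla_K f=0$'' as the equality condition for the first inequality and moves on---so your anticipated ``main obstacle'' is in fact left implicit in the paper as well; and the hypotheses $T\ge 6$, $m\ge 2$ are not used in this lemma at all but only later, in characterizing the structure of the equality solutions (the analogue of Lemma~\ref{lem:equality_conditions}).
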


Denote by $D_{+} = \{x \mid y=1\}$ and $D_{-} = \{x \mid y=-1\}$. $I_{x}$ is the set of index of neurons which is activated on $x$, $I_{x} = \{i\in[T] \mid V_i^\top x > 0\}$. By the condition in equation~\ref{eqn:uniform_attention}, the attention is taking the average of $[x_t]_{t=1}^{T}$. We can map $D_{+}$ and $D_{-}$ to the feature space, $H_{+} = \{h_{Q,K}(x) \mid y = 1\}$ and $H_{-} = \{h_{Q,K}(x) \mid y = -1\}$. We first show that one neuron cannot be activated on inputs from both $D_{+}$ and $D_{-}$, and all non zero neuron has to be activated on some input. Also note that a neuron cannot be activated on no input, unless the weight is $0$. 

\begin{fact}\label{fact:positive_only_positive} (1) $\forall x \in D_{+}$, $I_{x} \subseteq I_{+}$. Similarly we have $\forall x \in D_{-}$, $I_{x} \subseteq I_{-}$. (2) Suppose $V_i\neq0$, then there exists $h \in H_{+} \cup H_{-}$, $V_i^\top h > 0$.
\end{fact}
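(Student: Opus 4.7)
The plan is to handle parts (1) and (2) separately, using the equality conditions (\ref{eqn:same_value_activate})--(\ref{eqn:uniform_attention}) of Lemma~\ref{lem:lower_bound_of_tr_h} together with the constraint $L(\psi,u)=0$, which forces $f_{\psi,u}(x)=g^*(x)$ pointwise on the support of $P$. Throughout, write $h=h_{Q,K}(x)$.

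For part (1), the key step is that the no-cancellation condition (\ref{eqn:no_cancellation_neuron}) prevents $I_x$ from straddling $I_+$ and $I_-$: if $i\in I_x\cap I_+$ and $i'\in I_x\cap I_-$, then both $V_i^\top h>0$ and $V_{i'}^\top h>0$, so their product is strictly positive, violating (\ref{eqn:no_cancellation_neuron}). Next, any neuron with $u_i=0$ must be inactive on every pre-training input, because otherwise (\ref{eqn:same_value_activate}) would force $V_i^\top h=|u_i|\cdot\|h\|_2=0$, contradicting $V_i^\top h>0$. Hence $I_x\subseteq I_+\cup I_-$, and by the disjointness just established it is contained in exactly one of them. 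For $x\in D_+$, if $I_x\subseteq I_-$ then
\begin{align*}
f_{\psi,u}(x)=\frac{1}{m}\sum_{i\in I_x}u_i\,\sigma(V_i^\top h)\le 0,
\end{align*}
contradicting $f_{\psi,u}(x)=g^*(x)=1$. Hence $I_x\subseteq I_+$. The case $x\in D_-$ is symmetric.

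For part (2), I argue by reduction to a canonical flattest solution. Suppose toward contradiction that $V_i\neq 0$ yet $V_i^\top h\le 0$ for every $h\in H_+\cup H_-$. Then $\sigma(V_i^\top h)=0$ and $\sigma'(V_i^\top h)=0$ on the entire pre-training support, so neuron $i$ contributes nothing to $f_{\psi,u}$, nothing to $\nabla_{V} f$, and nothing to $\nabla_{u} f$ there. Since $L$ is the squared loss at a zero-loss minimizer, the Hessian reduces to $2\,\mathbb{E}[\nabla f\,\nabla f^\top]$, and every entry involving $V_i$ or $u_i$ (including cross terms with the attention parameters $Q,K$) therefore vanishes. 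Replacing $V_i$ by $0$ thus preserves both $L(\psi,u)=0$ and $\mathrm{Tr}[\nabla^2 L(\psi,u)]$, so the zeroed-out configuration is again a flattest global minimizer. Selecting this representative gives $V_i=0$, contradicting the hypothesis.

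The main obstacle is that part (2) cannot be proved without passing to a canonical representative of the (non-unique) set of flattest minimizers: the equality conditions in Lemma~\ref{lem:lower_bound_of_tr_h} do not literally forbid dormant non-zero $V_i$'s, since their Hessian contribution is already zero. The plan therefore treats part (2) as a WLOG reduction, and subsequent uses of the fact in the proof of Theorem~\ref{thm:trace_of_hessian_gives_sparse_solution} should be read as applying to this canonical choice of flattest solution.
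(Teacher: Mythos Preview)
Your argument for part (1) is correct and matches the paper's, with the extra care you take about neurons with $u_i=0$ (handled via condition (\ref{eqn:same_value_activate})) being a detail the paper leaves implicit.

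For part (2) your approach genuinely differs from the paper's, and your meta-claim that ``part (2) cannot be proved without passing to a canonical representative'' is wrong. The paper proves the statement directly, for \emph{any} $V_i\neq 0$, as follows. Suppose $V_i^\top h\le 0$ for every $h\in H_+\cup H_-$. The pre-training support is symmetric under negation: flipping every bracket in a valid sentence (valid here means only that the total counts of $\langle$ and $\rangle$ match) and flipping the sign of the mask encoding again yields a point in the support, so $-h\in H_+\cup H_-$. Hence $V_i^\top(-h)\le 0$ as well, forcing $V_i^\top h=0$ for all $h$ in the support. The paper then appeals to the fact that these $h$'s span the ambient space (Lemma~\ref{lem:full_rank_all_inputs}), so $V_i=0$, a contradiction. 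No WLOG is needed; the equality conditions of Lemma~\ref{lem:lower_bound_of_tr_h} are not even invoked.

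Your WLOG reduction is not incorrect for the purposes of Theorem~\ref{thm:trace_of_hessian_gives_sparse_solution}: zeroing out a dormant neuron indeed preserves both $L=0$ and the trace of the Hessian, so one may pass to a representative with no dormant nonzero neurons. What it buys you is that you avoid verifying the symmetry and full-rank properties of the input set. What it costs you is that you have not proved the Fact as written, and you must carry the ``canonical choice'' caveat through Lemma~\ref{lem:equality_conditions}. The paper's route keeps part (2) as a clean structural statement that holds for every flattest solution, not just a chosen one.
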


\begin{proof}[Proof of Fact~\ref{fact:positive_only_positive}] (1) Otherwise, suppose $j \in I_{x} \cap I_{-}$, since $y=\frac{1}{m}\sum_{i=1}^{m}u_i\sigma(V_i^\top h_{Q,K}(x) ) > 0$, there has to be $j' \in I_{x} \cap I_{+}$, which contradicts the condition in equation~\ref{eqn:no_cancellation_neuron}. (2) Suppose $v^\top h \le 0$ for all $h \in H_{+} \cup H_{-}$. Then we have $v^\top h = 0$ for all $h$, since $v^\top h < 0$ indicates $v^\top (-h) > 0$, and $-h$ belongs to the support of $P$ due to the symmetry of the distribution. However, in Lemma~\ref{lem:full_rank_all_inputs}, we show that the matrix stacking all input together has full row rank, thus $v$ has to be $0$, leading to a contradiction.
\end{proof}

We have the following lemma characterizing the solutions achieving all the qualities in Lemma~\ref{lem:lower_bound_of_tr_h}. Intuitively, all the neurons can be divided into two sets, and each input can only activate neurons in one of the sets, leading to no cancellation between activated neurons. This holds because of equation~\ref{eqn:no_cancellation_neuron} and the properties of the input distribution.

\begin{lemma}\label{lem:equality_conditions}
Suppose $Q,K,V$ satisfy the equality in Lemma~\ref{lem:lower_bound_of_tr_h}. For all $i\in I_{-}$, on downstream data $x$, if $g^*(x) = 1$, we have $V_i^\top h_{Q,K}(x) = c_i>0$. $c_i$ is a constant which holds for every $x$ if $g^*(x) = 1$. If $g^*(x) = -1$, we have $V_i^\top h_{Q,K}(x) = 0$. 

For all $i\in I_{+}$, on downstream data $x$, if $g^*(x) = -1$, we have $V_i^\top h_{Q,K}(x) = c_i>0$. $c_i$ is a constant which holds for every $x$ if $g^*(x) = 1$. If $g^*(x) = 1$, we have $V_i^\top h_{Q,K}(x) = 0$.
\end{lemma}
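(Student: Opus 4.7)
The plan is to chain the two lemmas in the excerpt into a tight characterization of the flattest minimizer's features on downstream inputs, and then reduce the downstream-generalization claim to a single linear-algebra step together with a coverage bound. First I would invoke Lemma~\ref{lem:lower_bound_of_tr_h} to note that any constrained minimizer $(\hat\psi,\hat u)$ of the flatness objective must achieve the stated lower bound and therefore obey the three equality conditions in the lemma: the attention scores collapse to $a_j = 1/T$ for every $j$, there is no cancellation between positive and negative neurons, and activated neurons take the maximal-alignment value $\hat V_i^\top h(x) = |\hat u_i|\,\|h(x)\|_2$. Uniform attention already gives $h_{\hat Q,\hat K}(x)=\frac{1}{T}\sum_{j}x_j = \frac{1}{T}z$, so the model reduces to a shallow ReLU network applied to $z$.

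Feeding this structural information into Lemma~\ref{lem:equality_conditions}, I would read off that the post-ReLU feature map $\phi(x) := \sigma(\hat V h_{\hat Q,\hat K}(x)) \in \Real^m$ essentially takes only two values $\phi_+$ and $\phi_-$ on the relevant downstream inputs: the $I_-$ neurons are nonzero exactly when $g^*(x)>0$ with common value $c_i$, and the $I_+$ neurons are nonzero exactly when $g^*(x)<0$. As a consequence, there is a canonical linear head $u^\star$, lying inside $\mathrm{span}\{\phi_+,\phi_-\}$, for which $f_{\hat\psi,u^\star}(x)=g^*(x)$ identically on the downstream support.

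The downstream-generalization step now reduces to showing that the min-norm interpolator $\tilde u$ on $n$ training points coincides with $u^\star$. Whenever the training set contains both a point with $g^*(x^{(i)})>0$ and a point with $g^*(x^{(i)})<0$, the zero-training-loss constraint $\widehat{L}^{\Pds}(\hat\psi,u)=0$ fixes the two inner products $u^\top\phi_+$ and $u^\top\phi_-$, thereby pinning down the projection of $u$ onto $\mathrm{span}\{\phi_+,\phi_-\}$; minimizing $\|u\|_2$ then kills all orthogonal components and forces $\tilde u=u^\star$, which achieves $L^{\Pds}(\hat\psi,\tilde u)=0$. The failure event is that all $n$ independent draws from $\Pds$ fall on a single side of zero, which under the symmetry of $\Pds$ is bounded by $2^{-n}$ via a union bound across the two homogeneous sign patterns.

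The main obstacle I expect is reconciling the conditional statement of Lemma~\ref{lem:equality_conditions}, which is phrased for inputs with $g^*(x)\in\{\pm 1\}$, with the wider support of $\Pds$, on which $g^*$ ranges over $\{-T,\ldots,T\}$. I would first argue, using the uniform attention and the same-activation-value condition, that each $\hat V_i$ is essentially proportional to $\pm\mathbf{1}_T$ embedded in the first $T$ coordinates, so that $\hat V_i^\top h_{\hat Q,\hat K}(x)$ is a fixed scalar multiple of $-g^*(x)$; this extends the feature characterization from the $\pm 1$ case to every value of $g^*(x)$ by linearity of the pre-ReLU activations. Pinning down this structural rigidity, together with tightening the coverage constant to exactly $2^{-n}$ rather than $2^{1-n}$, is where I expect the bulk of the technical work to lie.
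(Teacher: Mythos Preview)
Your proposal is not a proof of Lemma~\ref{lem:equality_conditions}; it is a proof sketch of Theorem~\ref{thm:trace_of_hessian_gives_sparse_solution} that \emph{uses} Lemma~\ref{lem:equality_conditions} as a black box. You explicitly write ``Feeding this structural information into Lemma~\ref{lem:equality_conditions}, I would read off that\ldots,'' so the lemma is an input to your argument rather than its conclusion. The downstream-generalization portion you outline (the post-ReLU feature map takes two values $\phi_+,\phi_-$; the min-norm interpolator recovers the canonical head once both signs appear; the $2^{-n}$ coverage bound) does match the paper's proof of the theorem, but none of that is what the statement asks you to establish.

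The actual content of the lemma is the structural rigidity you mention only in passing in your last paragraph: that each $V_i$ must have its first $T$ coordinates proportional to $\mathbf{1}_T$. The paper's proof is not a one-line consequence of uniform attention plus the same-activation-value condition. It fixes $i\in I_-$ and some $h\in H_-$, splits $V_i$ into a content part $V_i^{(c)}$ (first $T$ coordinates) and a mask-position part $V_i^{(p)}$, and then exploits the distributional symmetry $h\mapsto -h$ (which sends $H_-$ to $H_+$) together with Fact~\ref{fact:positive_only_positive} and the no-cancellation condition~\eqref{eqn:no_cancellation_neuron} to force sign constraints on $V_i^\top(\pm h)$ and $V_i^\top(\pm\tilde h)$, where $\tilde h$ flips the mask encoding. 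A case split on whether $V_i^\top(-\tilde h)$ is strictly negative or zero, combined with the same-activation-value condition~\eqref{eqn:same_value_activate}, shows that $V_i^\top h$ is constant across all $h\in H_-$ sharing a mask position; the full-rank Lemma~\ref{lem:full_rank_all_inputs} then converts this constancy into $V_i^{(c)}=c\mathbf{1}$. A second pass of the same reasoning with a different mask position handles the remaining coordinate and the position part $V_i^{(p)}$. Your sketch names only uniform attention and condition~\eqref{eqn:same_value_activate}; the no-cancellation condition, the $h\mapsto -h$ symmetry, the case analysis on the mask coordinate, and the full-rank step are all absent, and without them there is nothing forcing $V_i^{(c)}$ to be an all-ones vector rather than an arbitrary direction consistent with~\eqref{eqn:same_value_activate} alone.
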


Now let us consider the downstream task. It suffices to consider the constant vector $c$. If samples satisfying $g^*(x)=1$ and $g^*(x)=-1$ both show up in the downstream dataset, the minimal norm solution $\tilde u$ is $\tilde u_{I_{-}} = \frac{mc_{I_{-}}}{\norm{c_{I_{-}}}_2^2}$, $\tilde u_{I_{+}} = \frac{-mc_{I_{+}}}{\norm{c_{I_{+}}}_2^2}$, and $\tilde u_{[m]\backslash (I_{+}\cup I_{-})}=0$. Then we can verify that
\begin{align*}
    f_{\hat\psi, \tilde u}(x) = & \frac{1}{m}\left[\sum_{i \in I_{+}}\frac{-m c_i}{\norm{c_{I_{+}}}_2^2} \sigma(V_i^\top h_{Q,K}(x)) + \sum_{i \in I_{-}}\frac{m c_i}{\norm{c_{I_{-}}}_2^2} \sigma(V_i^\top h_{Q,K}(x))\right]\\
    = & \frac{1}{m}\left[\sum_{i \in I_{+}}\frac{-m c_i}{\norm{c_{I_{+}}}_2^2} c_i \mathbb{I}[g^*(x)= -1] + \sum_{i \in I_{-}}\frac{m c_i}{\norm{c_{I_{-}}}_2^2} c_i\mathbb{I}[g^*(x) =1]\right]\\
    =&\mathbb{I}[g^*(x) = 1]-\mathbb{I}[g^*(x) = -1]\\
    =&g^*(x).
\end{align*}
Therefore, $L^{\Pds}(\hat\psi,\tilde u) = \Exp_{x\sim \Pds}[(f_{\hat\psi,\tilde u}(x)-g^*(x))^2]=0$, which completes the proof.
\end{proof}

We first show that when the pre-training loss equals $0$, the trace of Hessian equals the square of the norm of the gradient. 
\begin{lemma}\label{lem:trace_of_hessian_equals_norm_gradient}
For any parameters $\theta$, if the pre-training loss $L(\theta)=\Exp_{x}[(f_{\theta}(x)-y)^2]=0$, the trace of Hessian equals the square of the norm of the gradient,
\begin{align*}
    \textup{Tr} [\nabla^2_\theta L(\theta)] = \Exp[\norm{\nabla_\theta f_{\theta}(x)}_2^2].
\end{align*}
\end{lemma}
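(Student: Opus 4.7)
The plan is to directly compute the Hessian of $L(\theta) = \Exp_x[(f_\theta(x)-y)^2]$ and then exploit the assumption $L(\theta)=0$ to eliminate the second-order derivative of $f_\theta$. The key observation is that $(f_\theta(x)-y)^2$ is a nonnegative random variable, so $L(\theta)=0$ forces $f_\theta(x)=y$ almost surely over the data distribution. This is what makes the awkward ``Hessian of $f_\theta$'' term disappear and leaves only the outer-product of the gradients, which upon taking trace yields the squared norm of the Jacobian.

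Concretely, I would first exchange differentiation and expectation (which requires only mild regularity such as local Lipschitzness/dominated convergence, and is implicit in the paper's smoothness assumptions) to write
\begin{align*}
\nabla_\theta L(\theta) &= 2\,\Exp_x\!\left[(f_\theta(x)-y)\,\nabla_\theta f_\theta(x)\right],\\
\nabla_\theta^2 L(\theta) &= 2\,\Exp_x\!\left[\nabla_\theta f_\theta(x)\,\nabla_\theta f_\theta(x)^\top\right] + 2\,\Exp_x\!\left[(f_\theta(x)-y)\,\nabla_\theta^2 f_\theta(x)\right].
\end{align*}
Next I would invoke the zero-loss hypothesis: since $(f_\theta(x)-y)^2\ge 0$ and its expectation is zero, we get $f_\theta(x)=y$ for $P$-almost every $x$. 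Substituting this into the second summand above makes it vanish identically, leaving
\begin{align*}
\nabla_\theta^2 L(\theta) = 2\,\Exp_x\!\left[\nabla_\theta f_\theta(x)\,\nabla_\theta f_\theta(x)^\top\right].
\end{align*}
Taking the trace and using $\mathrm{Tr}(vv^\top)=\|v\|_2^2$ then gives the claimed identity (up to the multiplicative constant $2$ coming from the squared-loss convention; with the $\frac12(f-y)^2$ normalization that is standard in the supervised-learning literature, the constant is absorbed, which I suspect is the convention intended in the lemma statement).

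I do not anticipate any real obstacle: the whole content is the fact that the ``curvature of $f_\theta$'' contribution to $\nabla^2 L$ is weighted by the residual $f_\theta(x)-y$, which is a.s.\ zero under the hypothesis. The only minor care needed is justifying the interchange of $\nabla_\theta$ and $\Exp_x$ twice; this is standard under the $\mathcal{C}^{3}$-smoothness stipulated in Assumption~\ref{assump:manifold} and the boundedness of the inputs in the synthetic Dyck setting, so I would relegate it to a one-line remark. Once this identity is in hand, Theorem~\ref{thm:trace_of_hessian_gives_sparse_solution} can use it to reduce the trace-of-Hessian computation to an estimate on $\Exp_x[\|\nabla_\theta f_{\psi,u}(x)\|_2^2]$, which is what Lemma~\ref{lem:lower_bound_of_tr_h} then analyzes.
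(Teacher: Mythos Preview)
Your approach is essentially identical to the paper's: both expand $\nabla_\theta^2 L$ via the chain rule into a term weighted by $\ell'(f_\theta(x),y)$ times $\nabla_\theta^2 f_\theta$ plus a term involving $\ell''(f_\theta(x),y)$ times the outer product $\nabla_\theta f_\theta\,\nabla_\theta f_\theta^\top$, then use $L(\theta)=0$ to force $f_\theta(x)=y$ a.s.\ and kill the first term. Your observation about the stray factor of $2$ is correct; the paper's derivation inserts a $\tfrac{1}{2}$ in front of $\ell''$ (which is not what the chain rule gives) so that $\tfrac{1}{2}\cdot 2=1$ and the constant disappears, so the discrepancy you flag is really a minor slip in the paper rather than in your argument.
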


\begin{proof}[Proof of Lemma~\ref{lem:trace_of_hessian_equals_norm_gradient}]
We can express the Hessian as follows.
\begin{align*}
   \nabla^2_\theta L(\theta) 
   =& \Exp_x[\ell'(f_\theta(x),y)\nabla^2_\theta f_\theta(x)] + \Exp_x[\frac{1}{2}\ell''(f_\theta(x),y)\nabla_\theta f_\theta(x)\nabla_\theta f_\theta(x)^\top].
\end{align*}
Since $L(\theta)=\Exp_{x}[(f_{\theta}(x)-y)^2]=0$, we have with probability $1$, $\ell'(f_\theta(x),y)=0$ and $\ell''(f_\theta(x),y) = 2$ is a constant. 
\end{proof}

\begin{proof}[Proof of Lemma~\ref{lem:lower_bound_of_tr_h}] 
\begin{align}
    \textup{Tr}[\nabla^2_\psi L(\psi,u)] +& \textup{Tr}[\nabla^2_u L(\psi,u)] = \sum_{\theta \in [Q,K,V,u]}\Exp[\norm{\nabla_\theta f_{\psi,u}(x)}_2^2] \tag{By Lemma~\ref{lem:trace_of_hessian_equals_norm_gradient}}\nonumber\\
    \ge& \Exp[\norm{\nabla_V f_{\psi,u}(x)}_2^2 + \norm{\nabla_u f_{\psi,u}(x)}_2^2] \label{eqn:QK_zero_grad}\\
    =& \frac{1}{m}\Exp\left[\norm{\sigma({Vh_{Q,K}(x)})}_2^2 + \norm{h_{Q,K}(x)(\mathbb{I}[Vh_{Q,K}(x)>0]\odot u)^\top}_2^2\right] \nonumber\\
    =& \frac{1}{m}\Exp\left[\sum_{i=1}^{m}\sigma({V_i^\top h_{Q,K}(x)})^2 + \norm{h_{Q,K}(x)}_2^2\mathbb{I}[V_i^\top h_{Q,K}(x)>0] u_i^2\right] \nonumber \\
    \ge& \frac{2}{m} \Exp\left[\sum_{i=1}^{m}\sigma({V_i^\top h_{Q,K}(x)}) \norm{h_{Q,K}(x)}_2 |u_i|\right] \label{eqn:step1}\\
    \ge& \frac{2}{m}\Exp\left[\norm{h_{Q,K}(x)}_2 \left|\sum_{i=1}^{m}\sigma({V_i^\top h_{Q,K}(x)})u_i\right|\right] \label{eqn:step2}\\
    =& 2\Exp\left[\norm{h_{Q,K}(x)}_2|f_{\lmparam,u}(x)|\right] \nonumber\\
    \ge& \sqrt{\frac{4}{T}} \label{eqn:step3}.
\end{align}
The equality in step~\ref{eqn:QK_zero_grad} is achieved if and only if the gradient of $Q$ and $K$ is $0$. Equation~\eqref{eqn:step1} is from AM-GM, and the equality is achieved iff 
\begin{align*}
    V_i^\top h_{Q,K}(x) = |u_i|\norm{h_{Q,K}(x)}_2 \quad \forall \ i\in[m],  x\in\{x \mid V_i^\top h_{Q,K}(x) > 0\}.
\end{align*}
The equality in step~\ref{eqn:step2} is achieved iff on all input, there is no cancellation between activated neurons,
\begin{align*}
    \forall x,\ i\in I_{+},\ i'\in I_{-},\quad  V_i^\top x V_{i'}^\top x \le 0.
\end{align*}
Since the attention score $a_j$ satisfies $a_j>0$ and $\sum_{j=1}^{T}a_j=1$, and all embeddings $x_t$ in one masked sentence are orthogonal to each other with norm $1$, we have $\|h_{Q,K}(x)\|_2 \ge \frac{1}{\sqrt{T}}$. The equality is achieved iff $a_j=\frac{1}{T}$ for all $x$ and all $j\in[T]$.
\end{proof}

\begin{proof}[Proof of Lemma~\ref{lem:equality_conditions} ]
Suppose $V_i$ is a neuron with $i \in I_{-}$. Then there exists $h \in H_{-}$, $V_i^\top h > 0$. Without loss of generality, suppose the masked position in $h$ is $1$, i.e. $h_1 = 0$, $h_2 = 1$. Now let us consider the components in $V_i$ corresponding to the input positions and the mask positions separately. $V_{i}^{(c)} = [V_{i,1},V_{i,2},\ldots,V_{i,T}]$ and $V_{i}^{(p)} = [V_{i,T+1},V_{i,T+2},\ldots,V_{i,2T}]$.

We claim that ${V_i^{(c)}}_{2:T} = c \mathbf{1}$ for some $c > 0$ and ${V_i^{(p)}}_{1}$ is either $0$ or $c$. To prove this, consider $\tilde h$, which is only different from $h$ on the mask, $h - \tilde h = 2e_2$. Also consider $-h$ and $-\tilde h$. Due to the symmetry of the distribution, $-h$ and $-\tilde h$ are in $H_{+}$. By Fact~\ref{fact:positive_only_positive}, $V_i^\top (-h) \le 0$ and $V_i^\top (-\tilde h) \le 0$. $V_i^\top (-h)$ cannot be $0$, because this will leads to $V_i^\top h = 0$. 

\textbf{Case 1.} $V_i^\top (-h) < 0$ and $V_i^\top (-\tilde h) < 0$. We have $V_i^\top h = V_i^\top \tilde h > 0$, indicating that $V_i^\top( h - \tilde h) = 2{V_i^{(p)}}_{1} = 0$. Now we show that ${V_i^{(c)}}_{2:T} = c\mathbf{1}$. Due to the condition of equation~\ref{eqn:same_value_activate}, we know that for any $h \in H_{-}$ masked on the first token, either $V_i^\top h = 0$ or they equal to the same positive value $c$ for all $h$. We claim that $V_i^\top h = c$ for any $h \in H_{-}$. Otherwise there exist $H_{-,1}$ and $H_{-,0}$, $H_{-,0} \cap H_{-,1} = \emptyset$ and $H_{-,0} \cup H_{-,1} = H_{-}\cap\{h \mid h_2 = \pm 1\}$. $V_i^\top h = c$ for any $h \in H_{-,1}$ and $V_i^\top h = 0$ for any $h \in H_{-,0}$. This cannot happen for $T\ge6$. By Lemma~\ref{lem:full_rank_all_inputs} we know that the matrix stacking all such $h_{2:T}$ together has full row rank, thus ${V_i^{(c)}}_{2:T} = c \mathbf{1}$ for some $c > 0$ and ${V_i^{(p)}}_{1}=0$.

\textbf{Case 2.} $V_i^\top (-h) < 0$ and $V_i^\top (-\tilde h) = 0$, which indicates $V_i^\top h = {V_i^{(p)}}_{1}$. Consider another $h'\in H_{-}$ which is not equal to $h$ and $h'_2 = 1$. Similarly we can find $\tilde h'$, $h' - \tilde h' = 2e_2$. Still we have $V_i^\top (-h') < 0$ and $V_i^\top (-\tilde h') = 0$, this tells us $V_i^\top h' = V_i^\top h$, due to the condition of equation~\ref{eqn:same_value_activate}. Applying this to different $h'$s, we have that $V_i^\top h$ equals the same positive value for all $h\in H_{-}$ and the masked position is $0$. By Lemma~\ref{lem:full_rank_all_inputs} we know that the matrix stacking all such $h_{2:T}$ together has full row rank, thus ${V_i^{(c)}}_{2:T} = c \mathbf{1}$ for some $c > 0$ and ${V_i^{(p)}}_{1}=c$.

We have proved that ${V_i^{(c)}}_{2:T} = c \mathbf{1}$ for some $c > 0$ and ${V_i^{(p)}}_{1}$ is either $0$ or $c$. We continue to show that ${V_i^{(c)}} = c\mathbf{1}$ for the same $c > 0$ and either ${V_i^{(p)}}$ is $0$ or its coordinates is $\pm c$. 

For case 1, consider $h'' \in H_{-}$ whose masked position is $2$, $h''_4 = 1$. Also suppose that $h''_1 = 1$ By Fact~\ref{fact:positive_only_positive}, we know that $V_i^\top (-h'') \le 0$ and $V_i^\top (-\tilde h'') \le 0$, this implies that $-{V_i^{(c)}}_{1}\le {V_i^{(p)}}_2 \le{V_i^{(c)}}_{1}$. Applying the same argument above, we know that either ${V_i^{(p)}}_2 = 0$ or ${V_i^{(p)}}_2 = \pm {V_i^{(c)}}_{1}$, otherwise both $V_i^\top h'' > 0$ and $V_i^\top \tilde h'' > 0$ hold, and $V_i^\top h'' \neq V_i^\top \tilde h''$, contradicting condition in equation~\ref{eqn:same_value_activate}. If ${V_i^{(p)}}_2 = {V_i^{(c)}}_{1}$, from equation~\ref{eqn:same_value_activate} we know $V_i^\top h'' = V_i^\top h$, which indicates ${V_i^{(p)}}_2 = {V_i^{(c)}}_{1} = \frac{c}{2}$. In this case we can find another $h'''$ whose masked position is $2$, $h'''_4 = 1$ but $h'''_1 = -1$. Then $V_i^\top h''' \neq V_i^\top h$, contradicting equation~\ref{eqn:same_value_activate}. Thus ${V_i^{(p)}}_2 \neq {V_i^{(c)}}_{1}$. Similarly ${V_i^{(p)}}_2 \neq- {V_i^{(c)}}_{1}$. The only possible situation is ${V_i^{(p)}}_2 = 0$. Applying the argument in this paragraph to other masked position, we have ${V_i^{(p)}}=0$. Since $H_{-}$ is invariant under permutation, ${V_i^{(c)}}_1 = c$, and ${V_i^{(c)}} = c\mathbf{1}$.

For case 2, exactly the same argument as the above paragraph with the same $h''$ and $h'''$ shows that the coordinates of ${V_i^{(p)}}$ is $\pm c$.

Therefore, we have shown that for any $i\in I_{-}$, ${V_i^{(c)}} = c\mathbf{1}$ for same $c > 0$. The symmetry of distribution immediately tells us for any $i\in I_{+}$, ${V_i^{(c)}} = c\mathbf{1}$ for $c < 0$.

On the downstream distribution $P_*$, since there is no masked token, only ${V_i^{(c)}}$ is working. Since ${V_i^{(c)}} = c\mathbf{1}$ always holds, we complete the proof.
\end{proof}
\begin{lemma}\label{lem:full_rank_all_inputs}
Suppose $M \in \Real^{(2k+1)\times(2k+1)}$ is a matrix composed of $\pm 1$. The first row of $M$ is $[\underbrace{1,\ldots,1}_{\textup{k+1 1's}}, \underbrace{-1,\ldots,-1}_{\textup{k -1's}}]$. Define a permutation $\rho(1) = 2,\ \rho(2) = 3$...$\rho(2k+1) = 1$. For all $i\ge2$, $M_{i,\rho(j)}= M_{i-1,j}$. Then the rank of $M$ is $2k+1$. 
\end{lemma}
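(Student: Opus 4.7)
The plan is to exploit the circulant structure of $M$. Since $\rho$ is the cyclic shift $j\mapsto j+1\pmod{2k+1}$, the relation $M_{i,\rho(j)}=M_{i-1,j}$ says that each row of $M$ is the right cyclic shift of its predecessor, so $M$ is a circulant matrix whose first row is $c=(c_0,\ldots,c_{2k})$ with $c_l=1$ for $0\le l\le k$ and $c_l=-1$ for $k+1\le l\le 2k$. In particular it suffices to show that every eigenvalue of $M$, regarded as a complex matrix, is nonzero.

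Circulant matrices are simultaneously diagonalized by the discrete Fourier matrix, so with $\omega=e^{2\pi i/(2k+1)}$ the eigenvalues are
\[
\lambda_j \;=\; \sum_{l=0}^{2k} c_l\,\omega^{jl} \;=\; \sum_{l=0}^{k}\omega^{jl}\;-\;\sum_{l=k+1}^{2k}\omega^{jl},\qquad j=0,1,\ldots,2k.
\]
For $j=0$ the sum collapses to $(k+1)-k=1\neq 0$. For $1\le j\le 2k$, I would sum the two geometric series and use $\omega^{j(2k+1)}=1$ to simplify to
\[
\lambda_j \;=\; \frac{2\bigl(1-\omega^{j(k+1)}\bigr)}{1-\omega^{j}}.
\]

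The main remaining point, and the only step that is not purely mechanical, is to show $\omega^{j(k+1)}\neq 1$ for $1\le j\le 2k$, i.e., $(2k+1)\nmid j(k+1)$. This follows from the coprimality $\gcd(2k+1,k+1)=1$, which is immediate from Euclid: $2k+1 = 1\cdot(k+1)+k$, so $\gcd(2k+1,k+1)=\gcd(k+1,k)=1$. Hence $(2k+1)\mid j(k+1)$ forces $(2k+1)\mid j$, which fails for $j\in\{1,\ldots,2k\}$. Thus all eigenvalues are nonzero and $M$ has full rank. The odd dimension $n=2k+1$ is essential: for even $n$ with $n/2$ ones followed by $n/2$ minus ones, $\lambda_{n/2}$ would vanish, so the construction really does require the asymmetric $(k+1,k)$ split of signs that the lemma stipulates.

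I do not anticipate a serious obstacle: the argument is a short character-sum calculation, and the only conceptual ingredient is the coprimality of $k+1$ and $2k+1$, which singles out the odd size.
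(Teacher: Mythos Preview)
Your argument is correct: $M$ is indeed circulant with first row $c$, the eigenvalue formula you wrote is the standard one, your geometric-series simplification to $\lambda_j = 2(1-\omega^{j(k+1)})/(1-\omega^{j})$ is right, and the coprimality $\gcd(2k+1,k+1)=1$ finishes it. Nothing is missing.

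The paper takes a different, more elementary route. It observes that for each $i$ one has
\[
M_i + M_{\rho^{k+1}(i)} \;=\; 2e_i,
\]
i.e., the $i$-th row and the row shifted by $k+1$ positions overlap in their $+1$ entries only at column $i$ and are otherwise complementary, so their sum is $2e_i$. This immediately puts every standard basis vector in the row span and gives full rank in one line, without complex numbers or the DFT. Your spectral approach is the ``textbook'' circulant argument and yields the eigenvalues explicitly (more information than needed here), while the paper's combinatorial identity is shorter and arguably explains more transparently why the $(k+1,k)$ split in odd dimension is exactly what makes the construction work.
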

\begin{proof}[Proof of Lemma~\ref{lem:full_rank_all_inputs}]
Note that $M_i+M_{\rho^{k+1}(i)}=2e_i$ for all $i\in[2k+1]$, which means we can express the orthonormal basis as linear combination of the rows in $M$. Therefore, the rank of $M$ is $2k+1$.
\end{proof}

\newpage
\subsection{The existence of random feature solutions~\ref{thm:random_feature_solution}}

\begin{theorem}\label{thm:random_feature_solution}
Suppose $m\ge \tilde O(2^{T}T^3\epsilon^{-2})$, and $V_i\sim\mathcal{N}(0,TI_{2T})$ for all $i\in [m]$. With probability at least $1-\delta$ over $V$, there exists $\psi', u'$, satisfying $L(\psi', u')\le\epsilon$ and $\|u'\|^2_2\le O(T^2\delta^{-1})$. 
\end{theorem}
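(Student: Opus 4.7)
The plan is to construct $(\psi', u')$ via a random-feature approximation of $g^*$, decomposed into three stages.

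\emph{Stage 1 (reduction).} Choose $\psi' = (Q', K', V)$ with $Q' = K' = 0$, so the softmax attention produces uniform weights $a_j = 1/T$ and $h_{Q',K'}(x) = z(x)/T$ with $z(x) = \sum_{t=1}^{T} x_t$. The problem then reduces to finding $u' \in \mathbb{R}^m$ such that $f(x) = \frac{1}{m}\sum_{i=1}^{m} u'_i \sigma(V_i^\top h(x))$ approximates $g^*$ in $L^2(P)$ and has bounded $\ell_2$ norm.

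\emph{Stage 2 (linear representation of $g^*$).} Since $g^*(x) = -\langle \mathbf{1}_T, [z(x)]_{1:T}\rangle = w^\top h(x)$ with $w = -T[\mathbf{1}_T;\mathbf{0}_T]$ and $\|w\|_2^2 = T^3$, Stein's lemma applied to $V \sim \mathcal{N}(0, TI_{2T})$ gives $\mathbb{E}_V[V\sigma(V^\top h)] = (T/2)h$. Hence $g^*(h) = \mathbb{E}_V[\alpha(V)\sigma(V^\top h)]$ with $\alpha(V) = (2/T)V^\top w$ and $\mathbb{E}[\alpha(V)^2] = 4\|w\|_2^2/T = 4T^2$. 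Setting $u'_i := \alpha(V_i)$ makes $f(x)$ an unbiased estimator of $g^*(x)$ at each fixed $x$.

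\emph{Stage 3 (pointwise concentration and union bound).} By Bernstein's inequality (for the sub-exponential product $\alpha(V_i)\sigma(V_i^\top h)$ with variance $O(T^2)$), $|f(x) - g^*(x)|^2 = \tilde O(T^2/m)$ pointwise with probability $1-\delta'$. The support $X$ of $P$ consists of valid Dyck sentences paired with a mask position, so $|X| \le O(T\cdot 2^T)$. Taking $\delta' = \delta/(2|X|)$ and $m \ge \tilde O(2^T T^3/\epsilon^2)$ yields $\sup_{x \in X}|f(x) - g^*(x)|^2 \le \epsilon$, whence $L(\psi', u') \le \epsilon$ with probability at least $1-\delta/2$.

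\emph{Norm bound -- the main obstacle.} The direct estimate $\mathbb{E}_V[\|u'\|_2^2] = m\mathbb{E}[\alpha^2] = O(mT^2)$ combined with Markov only gives $\|u'\|_2^2 = O(mT^2/\delta)$, much weaker than the target $O(T^2/\delta)$. To close this gap, I would adopt a sparse construction inspired by solution~(1) of Theorem~\ref{thm:trace_of_hessian_gives_sparse_solution}, which uses only two neurons aligned with $\pm T[\mathbf{1}_T;\mathbf{0}_T]$ and attains $\|u\|_2^2 = O(T^2)$ exactly. The sample complexity $m \ge \tilde O(2^T T^3 \epsilon^{-2})$ is precisely what is needed so that, with probability $1-\delta$, at least two of the random features $V_i$ fall inside a cone around $\pm T[\mathbf{1}_T;\mathbf{0}_T]$ of Gaussian measure $\Theta(2^{-T})$ (up to polynomial factors). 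Assigning $u'_i \approx \pm 2T$ on those two ``aligned'' coordinates and zero elsewhere would give the desired $\|u'\|_2^2 = O(T^2)$; the factor $1/\delta$ absorbs the mild angular misalignment.

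The hard part is reconciling the two bounds simultaneously: the unbiased random-feature construction achieves small loss but has norm growing with $m$, while the sparse aligned construction has small norm but introduces a bias in the approximation proportional to the cone radius. A careful quantitative tradeoff between cone radius, coefficient scale, and the $2^T$-type anti-concentration estimate for Gaussians in dimension $2T$ is required to make both bounds hold under a single event of probability $1-\delta$.
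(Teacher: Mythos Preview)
Your Stages~1--3 track the paper's proof closely. The paper also sets $Q=K=0$ to make the attention uniform, invokes a random-feature representation of the linear target (citing Lemma~9 of Bai--Lee rather than deriving it via Stein, but the content is the same: there is $a(v)$ with $\Exp_v[\sigma(v^\top h)a(v)]=g^*(h)$ and $\Exp_v[a(v)^2]=O(T^2)$), and then controls $\sup_h|g(h)-g^*(h)|$ by a second-moment bound plus a union bound over the finite support, with a truncation step to handle the Gaussian tails. The only cosmetic difference is that the paper uses Chebyshev where you use Bernstein; either works and leads to the same $m\ge\tilde O(2^T T^3\epsilon^{-2})$ threshold since $n(h)=O(T\cdot 2^T)$.

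On the norm bound you diverge from the paper, and here is the honest comparison. The paper does \emph{not} attempt anything like your sparse aligned-cone construction. Its proof simply records $\Exp_v[a(v)^2]=O(T^2)$ from the lemma and moves on; it never separately argues that $\|u'\|_2^2\le O(T^2\delta^{-1})$. Read literally, setting $u'_i=a(V_i)$ gives $\Exp\|u'\|_2^2=m\cdot O(T^2)$, and Markov yields only $\|u'\|_2^2\le O(mT^2/\delta)$ --- exactly the discrepancy you flagged. The most charitable reading is that the stated bound is meant for $\frac{1}{m}\|u'\|_2^2$ (which does follow immediately from Markov on the empirical second moment), i.e.\ the theorem statement is missing a factor of $m$; otherwise the paper's own proof shares the gap you identified.

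Your proposed fix is a genuinely different idea, but as written it does not close the gap either. The Gaussian measure of a cone of angular radius $\rho$ in dimension $2T$ scales like $\sin^{2T-1}\rho$, not $2^{-T}$ independently of $\rho$; to make the bias from misalignment at most $\epsilon$ you would need $\rho$ small enough that this probability is far below $2^{-T}$, which in turn forces $m$ far beyond the stated threshold. So the tradeoff you describe between cone radius, bias, and sample size does not balance at $m=\tilde O(2^T T^3\epsilon^{-2})$. If the goal is to match the theorem as printed, the cleanest resolution is to accept that the intended claim is on $\frac{1}{m}\|u'\|_2^2$ and that the paper's random-feature argument already delivers that.
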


\begin{proof}[Proof of Theorem~\ref{thm:random_feature_solution}]
Since the number of possible input in pre-training is finite, we can invoke Lemma 9 in~\citet{bai2019beyond} to show that random Gaussian features can fit the pre-training task. 

\begin{lemma}[Lemma 9 in~\citet{bai2019beyond}]\label{lem:random_features_bounded_norm}
Suppose $\norm{h}_2 = \sqrt{\frac{1}{T}}$, $v \sim \mathcal{N}(0,TI_{2T})$. There exists a random variable $a(v)$ such that
\begin{align*}
    \Exp[\sigma(v^\top h)a] = -\sqrt{T}\mathbf{1}^\top h
\end{align*}
and $a$ satisfies $\Exp_v[a^2] = O(T^2)$.
\end{lemma}
Consider $Q=K=0$. In this case we have $h_{Q,K}(x) = \frac{1}{T}\sum_{j=1}^{T}x_j$. Also note that for the pre-training task, $y=-\sqrt{T}\mathbf{1}^\top h_{Q,K}(x)$ for all $x$. Since $x_j$ are norm $1$ orthogonal to each other, we have $\norm{h_{Q,K}(x)}_2=\frac{1}{T}$ for all $x$. Now we can show that $V_i \sim \mathcal{N}(0,TI_{2T})$, $i\in[m]$ independently is the random feature solution which can solve the pre-training task. 

Suppose $g(h)=\frac{1}{m} \sum_{r=1}^m \sigma(V_r^\top h)a(V_r)$, and $g_{(R)}(h) := \frac{1}{m} \sum_{r=1}^m \sigma(V_r^\top h)a(V_r)\mathbb{I}(\norm{V_r}_2 \le \sqrt{T}R)$. $R$ is large enough such that $\prb{\mathop{\sup}_{r\in [m]}\norm{V_r}_2 \ge \sqrt{T}R} \ge 1 - \delta / 2$. We have $g(h) = g_{(R)}(h)$
on this event. Let $g^*_{(R)}(h)$ be the truncated version of $g^*(h)=-\sqrt{T}\mathbf{1}^\top h$, $g^*(h)_{(R)}=\Exp_{v}[\sigma(v^\top h)a(v)\mathbb{I}(\norm{v}_2 \le \sqrt{T}R)]$. We have 
\begin{align*}
\Exp_{V}[(g(h) - g^*(h)_{(R)})] \le \frac{1}{m} \Exp_{v} [\sigma(v^\top h)^2a(v)^2\mathbb{I}(\norm{v}_2 \ge \sqrt{T}R)] \le C\frac{R^2T^2}{m}.
\end{align*}
By Chebyshev and a union bound, we have 
\begin{align*}
\prb{\max_h |g(h) - g^*(h)_{(R)}| \ge t} \le C\frac{n(h)R^2T^2}{mt^2}.
\end{align*}
For $t=\frac{\epsilon}{2}$, we have
$m\ge n(h)R^2T^2\epsilon^{-2}$.
\begin{align*}
    |g^*_{(R)}(h) - g^*(h)| =& \Exp_{v}[\sigma(v^\top h)a(v)\mathbb{I}(\norm{v}_2 \ge \sqrt{T}R)]\\
    \le & \Exp_{v}[a(v)^2]^\frac{1}{2} \Exp_{v}[\sigma(v^\top h)^4]^\frac{1}{4}\prb{\norm{v}_2>\sqrt{T}R}^\frac{1}{4}\\
    \le& C T \prb{\norm{v}>\sqrt{T}R}^\frac{1}{4}.
\end{align*}
Choosing $R=\tilde O(\sqrt(T))$ will make $\prb{\norm{v}>\sqrt{T}R}^\frac{1}{4}\le\frac{c\epsilon}{T}$. Also note that $n(h) = (T/2-1) {T \choose {T/2+1}}$. Thus $m\ge\tilde O(2^{T}T^3\epsilon^{-2})$ suffices.
\end{proof}

\end{document}